\pdfoutput=1
\documentclass[11pt]{article}

\usepackage[a4paper,margin=1in]{geometry}
\usepackage{natbib}
\usepackage{authblk}

\usepackage[utf8]{inputenc}
\usepackage[T1]{fontenc}
\usepackage{microtype}
\usepackage{graphicx}
\usepackage{subcaption}
\usepackage{booktabs}
\usepackage{hyperref}
\hypersetup{
  colorlinks=true,
  linkcolor=blue!55!black,
  citecolor=blue!55!black,
  urlcolor=blue!55!black
}
\usepackage{url}
\usepackage{xcolor}
\usepackage{algorithm}
\usepackage{algorithmic}

\usepackage{amsmath}
\usepackage{amssymb}
\usepackage{amsfonts}
\usepackage{nicefrac}
\usepackage{mathtools}
\usepackage{amsthm}
\usepackage{tikz}

\usepackage[capitalize,noabbrev]{cleveref}

\theoremstyle{plain}
\newtheorem{theorem}{Theorem}[section]

\newtheorem{lemma}{Lemma}[section]

\theoremstyle{definition}

\theoremstyle{remark}

\crefname{assumption}{Assumption}{Assumptions}
\crefname{figure}{Fig{.}}{Figs{.}}
\crefname{table}{Tab{.}}{Tabs{.}}
\crefname{definition}{Definition}{Definitions}
\crefname{theorem}{Theorem}{Theorems}
\crefname{lemma}{Lemma}{Lemmas}
\crefname{proposition}{Proposition}{Propositions}
\crefname{corollary}{Corollary}{Corollaries}
\crefname{problem}{Problem}{Problems}
\crefname{example}{Example}{Examples}
\crefname{fact}{Fact}{Facts}
\crefname{conjecture}{Conjecture}{Conjectures}
\crefname{remark}{Remark}{Remarks}
\crefname{condition}{Condition}{Conditions}
\crefname{requirement}{Requirement}{Requirements}
\crefname{enumi}{}{}
\crefname{equation}{Eq{.}}{Eqs{.}}
\crefformat{section}{\S#2#1#3}
\crefmultiformat{section}{\S#2#1#3}{ and~\S#2#1#3}{, \S#2#1#3}{, and~\S#2#1#3}

\newcommand*{\twoimagewidth}{0.9\textwidth}

\newcommand*{\R}{\mathbb{R}}

\newenvironment{ack}{\section*{Acknowledgements}}{}

\title{Implicit Neural Representations \\ for Variational Problems on Graphons}

\author[1,2]{Taeyoung~Kim}
\author[3]{Jineon~Baek}
\author[2]{Joonkyung~Lee}
\author[1,2]{Hongseok~Yang}
\affil[1]{School of Computing, KAIST}
\affil[2]{School of Computational Sciences, KIAS}
\affil[3]{June E Huh Center for Mathematical Challenges, KIAS}
\affil[ ]{\newline\texttt{taeyoungkim21@kaist.ac.kr}, \texttt{jineon.kias@gmail.com}, \texttt{joonkyunglee@kias.re.kr}, \texttt{hongseokyang@kias.re.kr}}
\date{}

\begin{document}

\maketitle

\allowdisplaybreaks

\begin{abstract}
We propose a neural exploratory framework for \emph{variational problems over graphons}\footnote{A variational problem is an optimisation problem over a function space, in which the objective and constraints are expressed by means of functionals, i.e., mappings from the function space to $\mathbb{R}$.}, i.e., symmetric measurable functions $[0,1]^2 \to [0,1]$, that arise as limits of dense graph sequences. Such problems are central to two parts of the mathematical literature: extremal graph theory, which studies graph parameter optimisation under given restrictions, often for those graphs with a large number of vertices, and the large deviation theory of dense random graphs, which characterises the structure of rare events through constrained graphon optimisation. We represent graphons by implicit neural networks and optimise graphon objectives by gradient descent. Our design combines three ingredients: a multi-scale sinusoidal residual architecture biased toward sharp, step-like graphons; an embedded solver that enforces a single empirical density constraint and is differentiated by implicit differentiation; and symmetry-aware Monte Carlo estimators. On generalised Tur\'an problems that previously required substantial human effort, our framework rediscovers known optimal graphons without human intervention. Applied to open instances, it produces candidate optima, including a previously unreported family of extremal structures. Applied to the variational problems from the large deviation theory of Erd\H{o}s--R\'enyi random graphs, the framework produces new candidate optimal graphons across both upper- and lower-tail regimes for multiple pattern graphs; for the case of the triangle graph and the upper-tail regime, these candidates improve upon the best-known reference construction of Lubetzky and Zhao. The structures found by our framework have also led to new theorems on the optimal graphons for both families of problems.
\end{abstract}

\section{Introduction} \label{sec:introduction}

Machine learning is opening new directions in mathematical discovery. Recent work spans automatic provers and autoformalisation for Olympiad-style or research-level problems~\citep{polu2023formal, trinh2024solving, jiang2023draft, wu2022autoformalization, hubert2026olympiad, huang2025winning,alexeev2026primitive}, neural search for combinatorial constructions~\citep{wagner2021constructions, charton2024patternboost, romera2024mathematical, novikov2025alphaevolve}, and pattern-driven progress on long-standing conjectures in pure mathematics~\citep{davies2021advancing, he2025murmurations}. In this paper, we extend this line of research to a class of \emph{continuous optimisation problems over graphons}, with two motivating problem families drawn from extremal graph theory and the large deviation theory of dense random graphs.

A central object in extremal graph theory is the \emph{homomorphism density} $t(H, G)$ between finite undirected simple graphs $H$ and $G$: the probability that a uniformly random map from $V(H)$ to $V(G)$ is a graph homomorphism. As a basic example, the constrained optimisation problem
\begin{equation}
  \max_G\, t(K_2, G)
  \quad\text{ subject to }\quad
  t(K_3, G) = 0 \;\land\; |V(G)| = n
  \label{eqn:mantel}
\end{equation}
asks for an $n$-vertex triangle-free graph that maximises the edge density. Here $K_m$ denotes the complete graph on $m$ vertices, so
$K_2$ and $K_3$ represent an edge and a triangle, respectively. Although the answer is given by the classical theorem of Mantel~\citep{mantel1907vraagstuk}, a large family of similar problems involving homomorphism densities remains open~\citep{sidorenko1993correlation, borgs2006counting}.

A standard approach to such problems is to study their \emph{asymptotic} counterparts, taking $n \to \infty$ and analysing the limit of the optimum. These asymptotic problems can often be reformulated as continuous optimisation over \emph{graphons}, i.e., symmetric measurable functions $[0,1]^2 \to [0,1]$, that arise as limit objects of dense graph sequences~\citep{lovasz2006limits, lovasz2012large}. Another source of graphon optimisation problems comes from the large deviation theory of dense random graphs~\citep{chatterjee2011large}, where the limiting structure of an Erd\H{o}s--R\'enyi graph conditioned on an unusually large or small homomorphism density is itself characterised by a constrained optimisation over graphons. In both settings, the objective and constraints are integrals against the graphon $W$ (homomorphism densities or pointwise relative entropy), and known or conjectured optima are step-like.

We attack these graphon-optimisation problems directly with deep learning, representing a graphon $W$ as a neural network in the spirit of implicit neural representations~\citep{park2019deepsdf,mescheder2019occupancy,chen2019learning} and minimising the objective by gradient descent on the network parameters. Three recurring challenges shape our design. First, known and conjectured optima are typically step-like, and standard architectures struggle with such targets because of their bias toward smooth functions. We address this with a \emph{multi-scale sinusoidal residual architecture} that injects sinusoidal encodings of growing frequency into successive layers, biasing the model toward sharp, step-like solutions. Second, the problems often carry a single scalar density constraint (a fixed homomorphism density) that is poorly handled by penalty terms; we embed a numerical solver for an empirical version of the constraint into the forward pass and backpropagate through it via implicit differentiation, as in deep equilibrium models~\citep{bai2019deep}. Third, both the objective and the constraints are graphon integrals that must be estimated from samples, including the high-dimensional homomorphism-density integrals; we use symmetry-aware Monte Carlo estimators.

We demonstrate our framework on two families of graphon optimisation problems: \textbf{(P1)} generalised Tur\'an problems from extremal graph theory, minimising a homomorphism density under a fixed edge-density constraint; and \textbf{(P2)} the variational problems from the large deviation theory of dense random graphs \citep{chatterjee2011large}, minimising a relative-entropy functional under a fixed homomorphism-density constraint.
On \textbf{(P1)}, our framework rediscovers nontrivial known optima that previously required substantial human effort~\citep{goodman1959sets,bollobas1976relations,lovasz1983number,fisher1989lower,razborov2008minimal}, produces candidate minimisers for open instances such as minimisation of $C_7$-homomorphism density, and through a systematic sweep over small graphs, uncovers, for a six-vertex graph, a previously unreported family of extremal structures that do not match the complete $k$-partite pattern of classical examples.
Applied to \textbf{(P2)}, the framework produces new candidate optimal graphons, in the cases where the optimal structure is not fully understood, for four pattern graphs in the upper-tail regime and $K_3$ in the lower-tail regime. For $K_3$, these candidates consistently improve upon the reference construction of \citet{lubetzky2015replica} in the upper-tail regime.
We view the framework as a flexible exploratory tool that complements rigorous lower-bound techniques such as flag algebras~\citep{razborov2007flag,falgas2013applications}: it produces candidate upper bounds, concrete candidate minimisers, and structural conjectures that can be inspected, refined, or subsequently certified. The structures found by our method have also led to new theorems (\Cref{sec:proven}). For problems of \textbf{(P1)}, the balanced complete $k$-partite graphon minimises the density of every odd cycle at $p = 1 - 1/k$ for every $k \ge 3$, and that of every connected chordal graph whose maximal cliques have a common size $r \ge 3$ at $p = 1 - 1/k$ for every $k \ge r$. For problems of \textbf{(P2)}, close to the boundary of the region where the constant graphon is optimal, the upper-tail problem for regular graphs has a unique optimal graphon, which is bipodal.

\section{Background} \label{sec:background}

For a positive integer $n$, write $[n] \coloneqq \{1,2,\ldots,n\}$. Throughout the paper, a \emph{graph} is an undirected, finite, simple graph (no self-loops and no multiple edges). For a graph $G$, we write $V(G)$ and $E(G)$ for its vertex and edge sets, and let $v(G) \coloneqq |V(G)|$ and $e(G) \coloneqq |E(G)|$. We use standard notation for familiar graphs: $K_n$ denotes the complete graph on $n$ vertices (i.e., $V(K_n) = [n]$ and $E(K_n) = \{\{i,j\} : i, j \in [n],\, i \neq j\}$), $K_{1,n}$ denotes the complete bipartite graph with part sizes $1$ and $n$ (i.e., $V(K_{1,n}) = [n+1]$ and $E(K_{1,n}) = \{\{1,i\} : 2\le i\le n+1\}$), and $C_n$ denotes the cycle on $n$ vertices (i.e., $V(C_n) = [n]$ and $E(C_n) = \{\{i,i+1\} : 1\le i\le n-1\} \cup \{\{n,1\}\}$).

Let $H$ and $G$ be graphs. A \emph{homomorphism} from $H$ to $G$ is a map $\phi : V(H) \to V(G)$ that preserves adjacency, i.e., $\{\phi(u), \phi(v)\} \in E(G)$ for every $\{u,v\} \in E(H)$. The \emph{homomorphism density} from $H$ to $G$, denoted $t(H, G)$, is the probability that a uniformly random map $\phi : V(H) \to V(G)$ is a homomorphism. Equivalently,
\begin{align}
t(H, G) \,\coloneqq\, {}
\label{eqn:homomorphismdensity}
\frac{1}{v(G)^{v(H)}}
\sum_{\phi : V(H) \to V(G)}
\prod_{\{u,v\} \in E(H)} \mathbb{I}\bigl[\{\phi(u), \phi(v)\} \in E(G)\bigr],
\end{align}
where $\mathbb{I}[\cdot]$ is the indicator function.
Homomorphism densities are a natural language for extremal graph problems: they are size-normalised statistics of fixed patterns, so objectives and constraints are comparable across graphs of different sizes, and they are central to dense graph limit theory, quasirandomness, and regularity-based approximation~\citep{lovasz2012large}.

A \emph{graphon} $W$ is a symmetric measurable function $W : [0,1]^2 \to [0,1]$. Intuitively, $W$ can be viewed as the (probabilistic) adjacency matrix of a continuum graph with vertex set $[0,1]$: $W(x,y)$ is the probability of an edge between $x$ and $y$. Graphons arise as limits of dense graphs: along any graph sequence $(G_n)_{n \in \mathbb{N}}$ whose homomorphism densities $t(H, G_n)$ converge for every fixed graph $H$ with vertex set $[v(H)]$, there is a graphon $W$ with $t(H, G_n) \to t(H, W)$ for every $H$ with vertex set $[v(H)]$, where
$t(H,W)$ is the graphon homomorphism density of $H$ in $W$ defined by
\begin{align}
t(H, W) \,\coloneqq\, \mathbb{E}_{x_1, \ldots, x_{v(H)} \stackrel{\mathrm{iid}}{\sim} \mathrm{Unif}[0,1]}
\Big[ \prod_{\{i,j\} \in E(H)} W(x_i, x_j)\Big].
\label{eqn:homomorphismdensitygraphon}
\end{align}
Conversely, every graphon arises in this way~\citep{lovasz2006limits,lovasz2012large}. Familiar discrete structure is reflected in $W$: balanced complete bipartite graphs converge to the two-block graphon equal to $1$ on the off-diagonal blocks and $0$ on the diagonal blocks; an Erd\H{o}s--R\'enyi $G(n,q)$ converges to the constant graphon $W \equiv q$; a stochastic block model converges to a multi-block graphon whose blocks correspond to the communities.

Graphons provide a common language for both problem families studied in this paper: the objective and constraints in each involve graphon homomorphism densities $t(H,W)$ or closely related graphon integrals, and the structure of the optimal graphons is the central object of interest. The first family arises from extremal graph theory, where optimising over graphons is equivalent to optimising the corresponding asymptotic finite-graph problem~\citep{lovasz2012large}; the second arises from the large deviation theory of dense random graphs. We describe each in turn.

\noindent{\bf Constrained density minimisation.}\ $\,$
The first problem family is concerned with minimising the homomorphism density of a fixed graph $H$ subject to a fixed edge density $p$. To state the asymptotic problem formally, we phrase it in terms of graph sequences:
\begin{equation}
\label{eqn:extremalproblem:generalisedTuran}
\inf\Big\{\liminf_{n \to \infty} t(H, G_n) \,\Big|\, \text{$(G_n)$ is a graph sequence with } v(G_n)\to\infty \text{ and } t(K_2, G_n)\to p\Big\}.
\end{equation}
By the graph-limit theory~\citep{lovasz2012large}, this equals $\min_{W:\,t(K_2,W)=p}\, t(H, W)$, the graphon problem our method optimises directly.
It generalises Tur\'an's theorem~\citep{turan1941eine}
and the Erd\H{o}s--Stone--Simonovits theorem~\citep{erdos1946structure, erdos1966limit}. These classical results in extremal graph theory say
that for every $H$ with at least one edge and $p \leq 1 - 1/(\chi(H) - 1)$, where $\chi(H)$ is the chromatic number of $H$, the optimum is $0$. But for $p > 1 - 1/(\chi(H) - 1)$, the problem is much harder. It is completely solved only for cliques: $H = K_r$ for all $r \ge 3$~\citep{reiher2016clique}, with earlier results for $r = 3$~\citep{razborov2008minimal} and $r = 4$~\citep{nikiforov2011number}. For $C_5$, the minimiser is known only at the special values $p = 1 - 1/k$ ($k$ a positive integer)~\citep{bennett2020minimizing}.

\noindent{\bf Variational problem from large deviation theory.}\ $\,$
The large deviation theory of dense random graphs addresses two related questions: at what exponential rate does the Erd\H{o}s--R\'enyi random graph $G(n,q)$ have an atypically large or small homomorphism density, and what does the graph typically look like when such a rare event occurs?
These questions are of fundamental interest in probability theory and combinatorics, and they carry information-theoretic structure that is natural from an ML perspective: the Chatterjee--Varadhan framework reduces both questions to a single constrained optimisation over graphons whose objective is an averaged KL divergence~\citep{chatterjee2011large}.

Concretely, fix $q \in (0,1)$ and let $h_q(W) \coloneqq \int_{[0,1]^2} \mathrm{KL}(\mathrm{Bern}(W(x,y))\,\|\,\mathrm{Bern}(q))\,\mathrm{d}x\,\mathrm{d}y$, the integrated per-edge KL divergence of $W$ from the base Erd\H{o}s--R\'enyi graphon $W \equiv q$.
For a fixed graph $H$ and target level $r \in (0,1)$, the large deviation principle yields a variational characterisation of both tails.
In the \emph{upper tail} ($r > q$), $\binom{n}{2}^{-1}\log\Pr[t(H, G(n,q)) \ge r^{e(H)}]$ converges to $-\min_{W: t(H,W) \ge r^{e(H)}} h_q(W)$.
In the \emph{lower tail} ($r < q$), $\binom{n}{2}^{-1}\log\Pr[t(H, G(n,q)) \le r^{e(H)}]$ converges to $-\min_{W: t(H,W) \le r^{e(H)}} h_q(W)$.
In either case, the typical structure of $G(n,q)$ conditioned on the rare event is described by near-minimisers of the corresponding variational problem~\citep{lubetzky2015replica}.
Thus the minimiser is the graphon closest in KL divergence to $W \equiv q$ that explains the observed \emph{excess} (upper tail) or \emph{deficit} (lower tail) in the $H$-homomorphism density, in the same spirit as maximum-entropy modelling in ML\@.
Even in the triangle case $H = K_3$, it is not fully understood when the deviation localises (e.g. through a dense clique) versus spreads more uniformly, or what the minimisers look like across $(q,r)$.

Our method applies directly to the boundary form
\begin{equation}
\label{eqn:large-deviation-problem}
  \min_{W : t(H,W) = r^{e(H)}} h_q(W)
\end{equation}
which is equivalent to the inequality-constrained minimisations above (with $\ge$ in the upper tail $r>q$ and $\le$ in the lower tail $r<q$).

\section{Implicit neural representations of graphons} \label{sec:method}

We tackle the two graphon optimisation problems introduced in \Cref{sec:background}:
\begin{equation}
\textbf{(P1)} \min_{W : t(K_2, W) = p}  t(H, W),
\qquad\qquad
\textbf{(P2)} \min_{W : t(H, W) = r^{e(H)}} h_q(W),
\label{eqn:targetproblems}
\end{equation}
for a graph $H$, reals $p,q,r \in (0,1)$, where $W$ ranges over graphons. Both problems carry a single scalar equality constraint.

We represent $W$ as a neural network $f_\theta$, recast these graphon optimisations as optimisations over the parameters $\theta$, and apply gradient-based training. A naive instantiation of this idea fails for three reasons. First, known optimal graphons are typically step functions, which standard architectures such as MLPs struggle to represent due to their bias toward smooth functions. Second, the scalar constraints in \textbf{(P1)} and \textbf{(P2)} are poorly handled by penalty terms, so the architecture must enforce them more directly. Third, each occurrence of $t(\cdot, W)$ involves an intractable integral, and estimating these expectations efficiently requires problem-specific sampling. The remainder of this section describes the architecture and training procedure that address these three challenges.

\noindent{\bf Architecture.}\ $\,$
We parameterise graphons by a residual MLP that injects a layer-specific sinusoidal encoding of the input at every layer. To handle constraints in \textbf{(P1)} and \textbf{(P2)}, the architecture additionally includes a constraint solver.

Fix the number of layers $L \ge 2$, the hidden-layer width $d$, and a differentiable activation $\phi : \R \to \R$. Also, fix an increasing function $s : [L] \to (0, \infty)$, which determines the scale of weight initialisation at each layer; in our experiments, we use $s(\ell) = 2^{\ell-1}$. The network $f_\theta[c] : [0,1]^2 \to [0,1]$ has a distinguished scalar bias $c \in \R$ and remaining parameters $\theta$, and is defined for $2 \le \ell \le L$ by
\begin{align*}
f_{\theta}[c](x_1, x_2) & \,\coloneqq\, \mathit{sm}\bigl(V g^{(L)}_\theta(\min(x_1, x_2), \max(x_1, x_2)) \,+\, c\bigr),
\\
g^{(1)}_\theta(\mathbf{x}) & \,\coloneqq\, \sin\bigl(U^{(1)} \mathbf{x} \,+\, b^{(1)}\bigr),
\\
g^{(\ell)}_\theta(\mathbf{x}) & \,\coloneqq\, g_\theta^{(\ell-1)}(\mathbf{x}) \,+\, \phi\bigl(V^{(\ell-1)} g_\theta^{(\ell-1)}(\mathbf{x}) \,+\, W^{(\ell)} \sin(U^{(\ell)} \mathbf{x}) \,+\, b^{(\ell)}\bigr).
\end{align*}
Here $\mathit{sm}$ is the sigmoid, and $g^{(1)}_\theta, g^{(\ell)}_\theta$ are maps from $\{\mathbf{x} = (x_1, x_2) \in [0,1]^2 : x_1 \le x_2\}$ to $\R^d$. The network parameters are
$c \in \R$, $b^{(\ell')} \in \R^d$, $V \in \R^{1 \times d}$, $U^{(\ell')} \in \R^{d \times 2}$, and $V^{(\ell'')}, W^{(\ell')} \in \R^{d \times d}$, for $\ell' \in [L]$, $\ell'' \in [L-1]$. These parameters are initialised independently by
\begin{align*}
c,\, b^{(\ell')}_i,\, V_{1,i},\, V^{(\ell'')}_{i,j},\, W^{(\ell')}_{i,j}
&\,\stackrel{\mathrm{iid}}{\sim}\, \mathrm{Unif}\bigl[-1/\sqrt{2d},\, 1/\sqrt{2d}\bigr],
&
\quad
U^{(\ell')}_{i,k}
&\,\stackrel{\mathrm{iid}}{\sim}\, \mathrm{Unif}\bigl[-s(\ell')/\sqrt{2},\, s(\ell')/\sqrt{2}\bigr],
\end{align*}
for all $i, j \in [d]$, $k \in [2]$. We collect every parameter except $c$ into $\theta$. Using $\min$ and $\max$ in the definition of $f_\theta[c]$ ensures that $f_\theta[c]$ is symmetric and represents a graphon.

The architecture has two notable features. First, it repeatedly injects a sinusoidal encoding of the input $(x_1, x_2)$ at every layer, with an encoding scale that grows with depth. Increasing $s(\ell)$ with depth shifts higher-frequency content into deeper layers~\citep{tancik2020fourier} and encourages late layers to capture sharp boundaries, while early layers stabilise the coarse, low-frequency structure, an inductive bias that fits step-like graphons.

Second, in constrained problems, we determine $c$ via a numerical solver rather than learning it directly. To enforce $t(K_2, f_\theta[c]) = p$ in \textbf{(P1)}, we use a batch of $K$ i.i.d.\ samples
\[
\mathcal{E} = \bigl\{(x_1^{(k)}, x_2^{(k)})\bigr\}_{k \in [K]},
\quad
(x_1^{(k)}, x_2^{(k)}) \stackrel{\mathrm{iid}}{\sim} \mathrm{Unif}([0,1]^2),
\]
and define
\begin{align}
\textsc{Solve}_\theta(\mathcal{E}, p)
&\,\coloneqq\, \text{the unique } c \in \R \text{ such that }
\frac{1}{|\mathcal{E}|} \sum_{(x_1, x_2) \in \mathcal{E}} f_\theta[c](x_1, x_2) \,=\, p.
\label{eqn:solver-specification}
\end{align}
The left-hand side of the equation in \eqref{eqn:solver-specification} is an unbiased estimator of $t(K_2, f_\theta[c])$, so the equation is the empirical counterpart of the constraint. The solution exists and is unique, since the left-hand side is continuous and strictly increasing in $c$ and tends to $0$ and $1$ as $c \to -\infty$ and $c \to +\infty$, respectively. We solve~\eqref{eqn:solver-specification} with a clipped variant
of Newton's method (\Cref{alg:cap}) and use the resulting $c^*$ when evaluating $f_\theta[c^*]$ on other inputs.
The same idea applies to any single homomorphism-density constraint of the form $t(H, W)=\tau$: one replaces the edge-density estimator in~\eqref{eqn:solver-specification} by an estimator of $t(H, f_\theta[c])$. Because increasing the scalar bias $c$ increases every edge probability, this estimator remains monotone in $c$. For the large-deviation application \textbf{(P2)} with $H = K_3$, for instance, this means sampling triples $(x^{(k)}_1,x^{(k)}_2,x^{(k)}_3)$, storing those samples in $\mathcal{T}$, and solving the empirical equation
\[
  \frac{1}{|\mathcal{T}|}
  \sum_{(x_1,x_2,x_3)\in\mathcal{T}}
  f_\theta[c](x_1,x_2)f_\theta[c](x_2,x_3)f_\theta[c](x_3,x_1)
  = r^3.
\]
The same construction applies to any other pattern graph $H$ by replacing the triangle-density estimator with the corresponding $H$-density estimator from~\eqref{eqn:estimator}.

\noindent{\bf Estimating homomorphism density.}\ $\,$
Let $H$ be a graph with vertex set $[v(H)]$. By~\eqref{eqn:homomorphismdensitygraphon}, $t(H, f_\theta[c])$ is an expectation that we estimate by Monte Carlo sampling. Our estimator is parameterised by a set $\mathcal{S}$ of permutations of $[v(H)]$. It draws $(N \cdot v(H))$ i.i.d.\ samples $x_i^{(k)} \stackrel{\mathrm{iid}}{\sim} \mathrm{Unif}[0,1]$ for $(k, i) \in [N] \times [v(H)]$, and then computes
\begin{equation}
\label{eqn:estimator}
  \widehat{t}(H, f_\theta[c])
  \,\coloneqq\,
  \frac{1}{N \, |\mathcal{S}|}
  \sum_{\substack{k \in [N], \pi \in \mathcal{S}}}\,
  \prod_{\{i, j\} \in E(H)}
  f_\theta[c]\bigl(x_{\pi(i)}^{(k)}, x_{\pi(j)}^{(k)}\bigr).
\end{equation}
The estimator is unbiased for any choice of $\mathcal{S}$ (\Cref{lem:unbiased}); the choice of $\mathcal{S}$ controls variance and computational cost.
When $\mathcal{S} = \{\mathrm{id}\}$, our estimator in \eqref{eqn:estimator} is the standard Monte Carlo estimator using $N$ independent samples of $(x_1, \ldots, x_{v(H)})$. With multiple permutations, the same $N$ tuples are reused across the $|\mathcal{S}|$ permuted copies, often producing lower variance for the same number of independent tuples.
Different permutations also induce repeated evaluations of the network $f_\theta[c]$ on the same unordered pair $(x_i^{(k)}, x_j^{(k)})$ across summands, so caching the evaluations reduces the number of network evaluations from $N \cdot |\mathcal{S}| \cdot e(H)$ to $N \cdot C \cdot e(H)$ for some $C \le |\mathcal{S}|$ that depends on $\mathcal{S}$ and $H$. We exploit this for larger graphs (e.g., the Petersen graph in \Cref{sec:addexperiments}).

\noindent{\bf Training algorithm.}\ $\,$
For \textbf{(P1)}, fix a graph $H$ on $[v(H)]$ and $p \in (0, 1)$. At each iteration, we sample $(N \cdot v(H))$ i.i.d.\ uniform values $\{x_i^{(k)}\}_{(i,k) \in [N] \times [v(H)]}$ on $[0, 1]$, and form
$\mathcal{D} = \{(x_1^{(k)}, \ldots, x_{v(H)}^{(k)}) : k \in [N]\}$ and the unordered-pair set $\mathcal{E} = \{(x^{(k)}_i, x^{(k)}_j) : k \in [N],\ 1 \le i < j \le v(H)\}$. Then, we set $c_\theta \coloneqq \textsc{Solve}_\theta(\mathcal{E}, p)$ to enforce the empirical constraint, and update
\[
\theta_{t+1} \,\leftarrow\, \theta_t - \eta_t\, \nabla_\theta \widehat{t}_{\mathcal{D}}(H, f_\theta[c_\theta])\big|_{\theta = \theta_t},
\]
where $\widehat{t}_{\mathcal{D}}$ denotes the estimator in \eqref{eqn:estimator} evaluated on the tuples in $\mathcal{D}$. Reusing the $x_i^{(k)}$ across $\mathcal{D}$ and $\mathcal{E}$ saves forward evaluations of $f_\theta[c]$. We resample $(\mathcal{D}, \mathcal{E})$ at every iteration to mitigate overfitting to particular Monte Carlo samples.
We set $\eta_t$ via a cosine-annealing schedule~\citep{loshchilov2017sgdr}, which alternates between discovering coarse structure (high learning rate) and refining sharp boundaries (low learning rate).

\begin{figure}[t]
  \centering
  \begin{minipage}{0.40\textwidth}
    \centering
    \captionof{table}{$K_3$ densities in graphons found by our method and optimal construction for $K_3$ density minimisation under $t(K_2, W)=p$.\label{tab:exp1table}}
    {\setlength{\tabcolsep}{4pt}\small
    \begin{tabular}{cll}
      \toprule
      $\mathbf{p}$ & \textbf{Optimum} & \textbf{Ours} \\
      \midrule
      $1/2$ & $0$          & $0.00008130$  \\
      $2/3$ & $0.22222222$ & $0.22232803$ ($+0.05\%$) \\
      $4/5$ & $0.48$       & $0.48010290$ ($+0.02\%$) \\
      $4/7$ & $0.10298638$ & $0.10299835$ ($+0.01\%$) \\
      $5/8$ & $0.17361111$ & $0.17362659$ ($+0.01\%$) \\
      $5/7$ & $0.31363508$ & $0.31370832$ ($+0.02\%$) \\
      \bottomrule
    \end{tabular}
    }
  \end{minipage}\hfill
  \begin{minipage}{0.55\textwidth}
    \centering
    \includegraphics[width=0.9\linewidth]{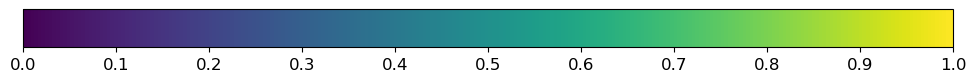}

    \begin{subfigure}{0.30\linewidth}
      \includegraphics[width=\linewidth]{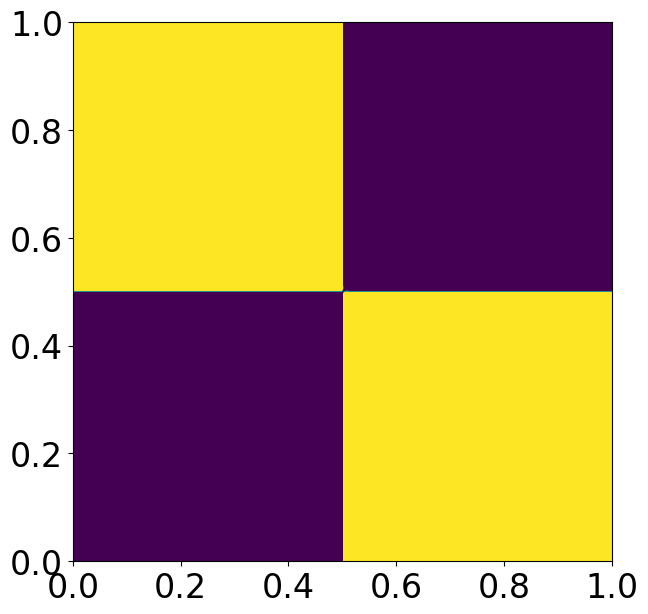}
      \caption{$p{=}1/2$.}\label{fig:k3-50}
    \end{subfigure}\hfill
    \begin{subfigure}{0.30\linewidth}
      \includegraphics[width=\linewidth]{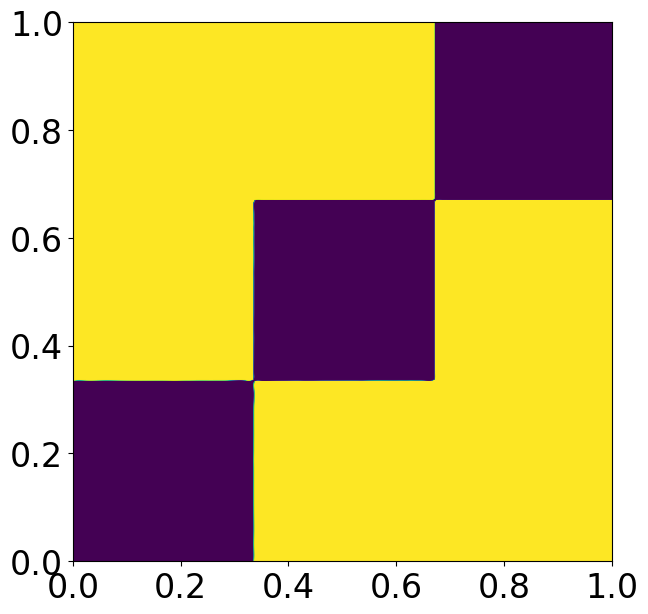}
      \caption{$p{=}2/3$.}\label{fig:k3-67}
    \end{subfigure}\hfill
    \begin{subfigure}{0.30\linewidth}
      \includegraphics[width=\linewidth]{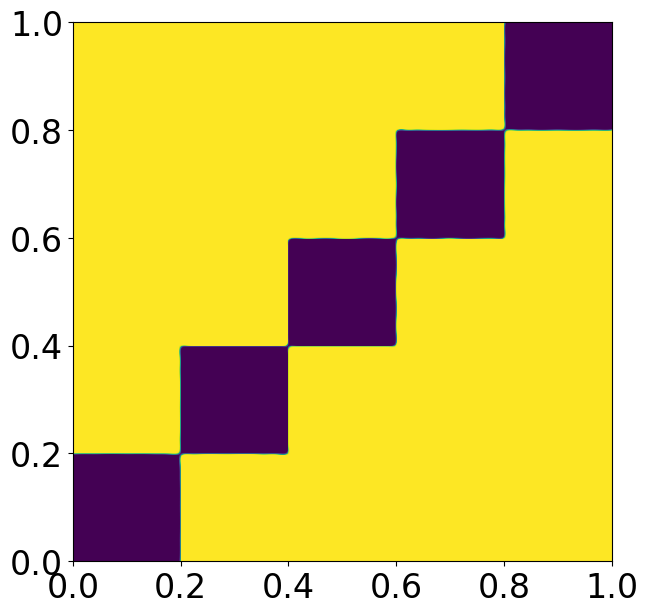}
      \caption{$p{=}4/5$.}\label{fig:k3-80}
    \end{subfigure}\\[0.3em]
    \begin{subfigure}{0.30\linewidth}
      \includegraphics[width=\linewidth]{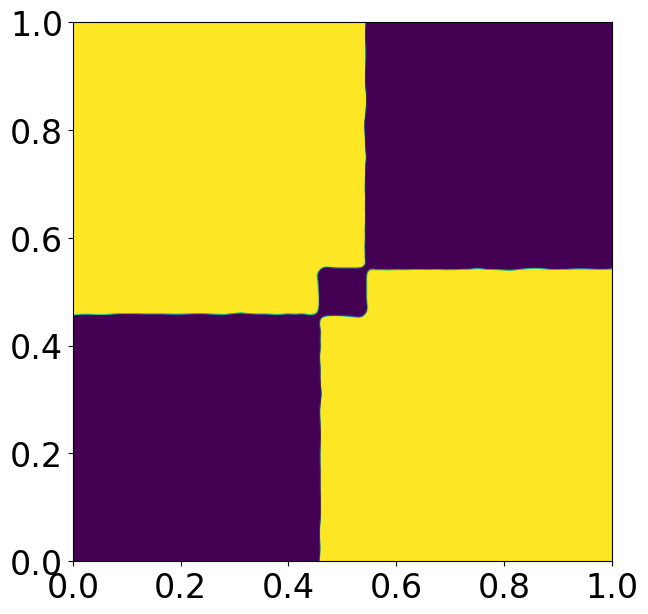}
      \caption{$p{=}4/7$.}\label{fig:k3-57}
    \end{subfigure}\hfill
    \begin{subfigure}{0.30\linewidth}
      \includegraphics[width=\linewidth]{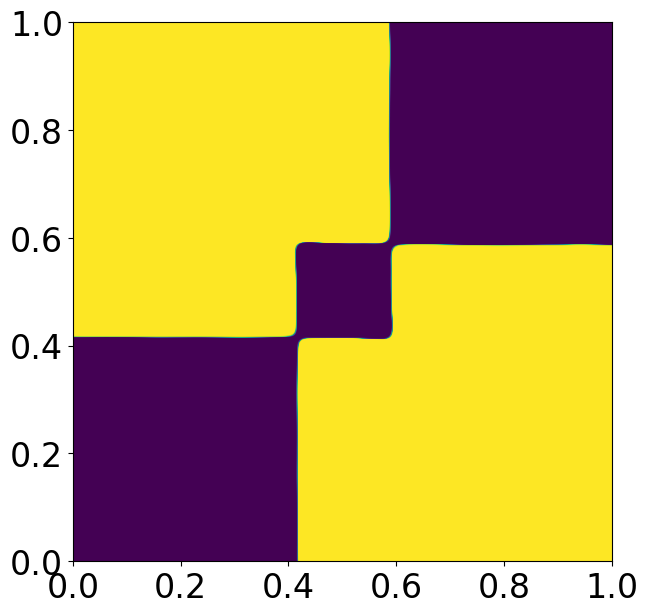}
      \caption{$p{=}5/8$.}\label{fig:k3-62}
    \end{subfigure}\hfill
    \begin{subfigure}{0.30\linewidth}
      \includegraphics[width=\linewidth]{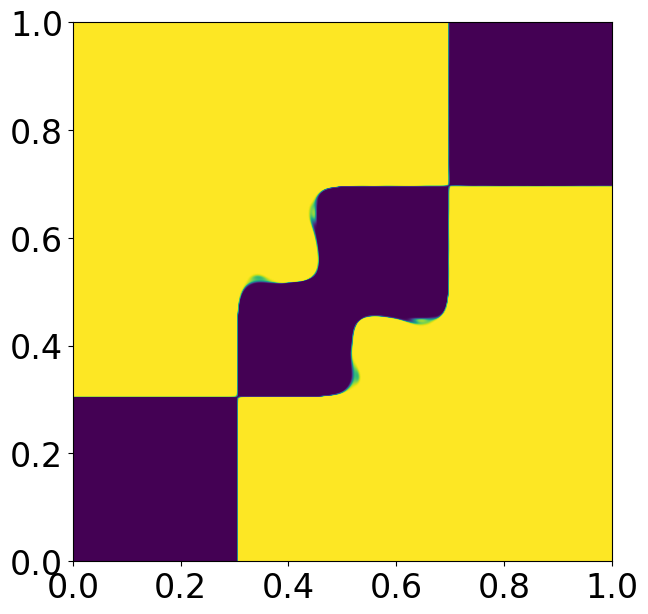}
      \caption{$p{=}5/7$.}\label{fig:k3-71}
    \end{subfigure}
    \captionof{figure}{Graphons found by our method for $K_3$ density minimisation under $t(K_2, W)=p$.}
    \label{fig:triangledensity-minimisers}
  \end{minipage}
\end{figure}

\begin{figure}[t]
  \centering
  \begin{minipage}{0.55\textwidth}
    \centering
    \captionof{table}{$K_k$ densities in graphons found by our method and the optimal construction for $K_k$ density minimisation under $t(K_2, W) {=} 1 {-} 1/k$. \label{tab:exp2table}}
    {\setlength{\tabcolsep}{4pt}\small
    \begin{tabular}{cll}
       \toprule
       $\mathbf{k}$& \textbf{Optimum} & \textbf{Ours (Estimated)} \\
       \midrule
       $k=4$ & $0.09375$    & $(\mathit{0.09403727}, \mathit{0.09411154})$ \\
       $k=5$ & $0.0384$     & $(\mathit{0.03868024}, \mathit{0.03872081})$ \\
       $k=6$ & $0.01543210$ & $(\mathit{0.01565511}, \mathit{0.01568170})$ \\
       \bottomrule
    \end{tabular}
    }
  \end{minipage}\hfill
  \begin{minipage}{0.40\textwidth}
    \centering
    \begin{subfigure}{0.30\linewidth}
      \includegraphics[width=\linewidth]{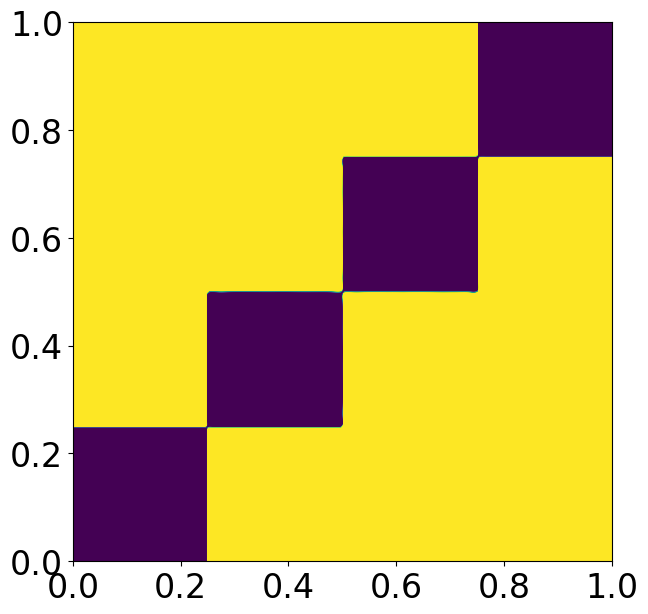}
      \caption{$k{=}4$.}\label{fig:k4-75}
    \end{subfigure}\hfill
    \begin{subfigure}{0.30\linewidth}
      \includegraphics[width=\linewidth]{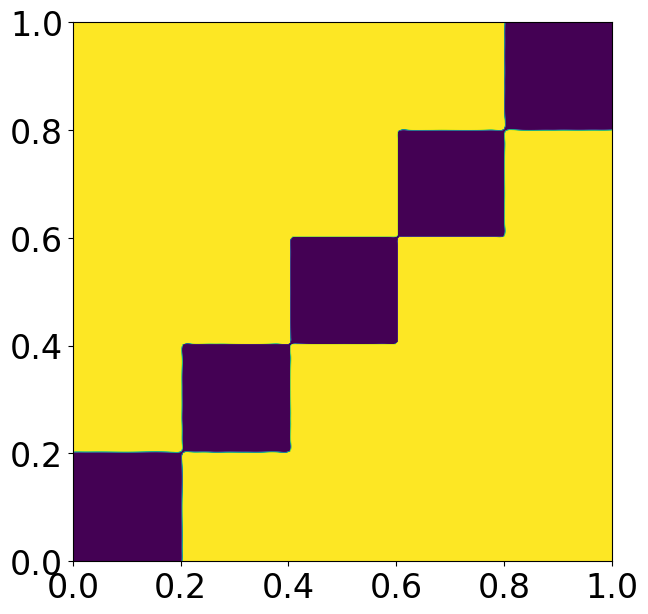}
      \caption{$k{=}5$.}\label{fig:k5-80}
    \end{subfigure}\hfill
    \begin{subfigure}{0.30\linewidth}
      \includegraphics[width=\linewidth]{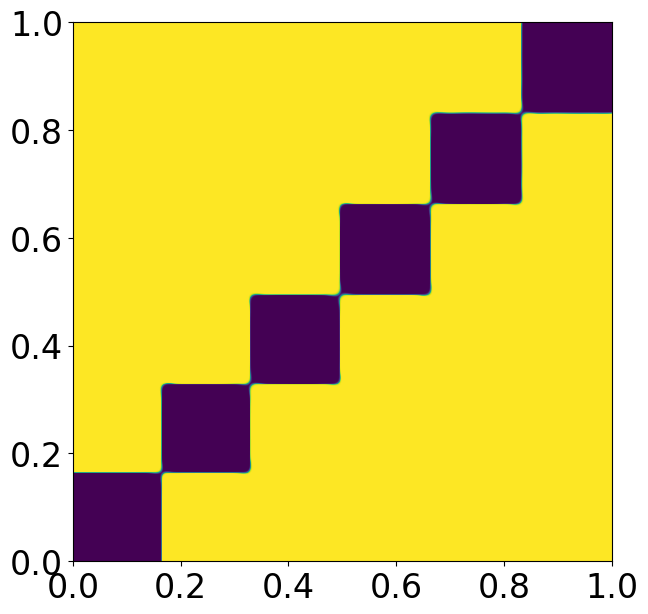}
      \caption{$k{=}6$.}\label{fig:k6-833}
    \end{subfigure}
    \captionof{figure}{Graphons found by our method for $K_k$ density minimisation under $t(K_2, W) = 1 - 1/k$.}
    \label{fig:clique}
  \end{minipage}
\end{figure}

Computing $\nabla_\theta \widehat{t}_{\mathcal{D}}(H, f_\theta[c_\theta])$ requires only the standard product rule once $\nabla_\theta f_\theta[c_\theta]$ is known; the only non-routine ingredient is the gradient of the solver output $c_\theta$ with respect to $\theta$, which we compute via implicit differentiation.
A direct alternative is to differentiate through an unrolled Newton iteration with autodiff, but this requires storing every intermediate iterate and is substantially more expensive in time and memory; the same trade-off motivates implicit-differentiation methods in deep equilibrium models~\citep{bai2019deep} and neural ODEs~\citep{chen2018neural}. The closed-form gradient is given by:

\begin{lemma}\label{lem:implicitdifferentiation}
For all $\mathbf{x} \in [0,1]^2$,
\begin{align*}
\nabla_\theta\, f_\theta[c_\theta](\mathbf{x})
\,&=\, \frac{\partial f_\theta[c](\mathbf{x})}{\partial \theta}\bigg|_{c = c_\theta}
\,-\, \mathit{sm}'\bigl(h_\theta(\mathbf{x}) + c_\theta\bigr)
\sum_{\mathbf{y} \in \mathcal{E}} w_\theta(\mathbf{y})\, \nabla_\theta h_\theta(\mathbf{y}),
\\
\text{where }
w_\theta(\mathbf{y}) \,&\coloneqq\, \frac{\mathit{sm}'(h_\theta(\mathbf{y}) + c_\theta)}{\sum_{\mathbf{z} \in \mathcal{E}} \mathit{sm}'(h_\theta(\mathbf{z}) + c_\theta)},
\quad
h_\theta(x_1, x_2) \,\coloneqq\, V g_\theta^{(L)}\!\bigl(\min(x_1, x_2), \max(x_1, x_2)\bigr).
\end{align*}
\end{lemma}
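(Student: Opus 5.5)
The plan is to apply the implicit function theorem to the scalar equation defining $c_\theta$ and then use the chain rule. Set
\[
F(\theta, c) \coloneqq \frac{1}{|\mathcal{E}|}\sum_{\mathbf{y}\in\mathcal{E}} \mathit{sm}\bigl(h_\theta(\mathbf{y}) + c\bigr) - p,
\]
so that $f_\theta[c](\mathbf{x}) = \mathit{sm}(h_\theta(\mathbf{x}) + c)$ and $c_\theta = \textsc{Solve}_\theta(\mathcal{E},p)$ is characterised by $F(\theta, c_\theta) = 0$. Here $h_\theta$ is the pre-activation defined in the statement, and the batch $\mathcal{E}$ is held fixed.

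\textbf{Step 1: smoothness and nondegeneracy.} Since $\phi$, $\sin$ and the sigmoid are differentiable, and the $\min/\max$ reordering does not depend on $\theta$, the map $\theta \mapsto h_\theta(\mathbf{y})$ is differentiable for each fixed $\mathbf{y}$. Hence $F$ is $C^1$ jointly in $(\theta, c)$; if only differentiability of $\phi$ is assumed, I would state it under $C^1$, which is the standard setting. Moreover
\[
\partial_c F(\theta,c) = \frac{1}{|\mathcal{E}|}\sum_{\mathbf{z}\in\mathcal{E}} \mathit{sm}'\bigl(h_\theta(\mathbf{z})+c\bigr) > 0,
\]
because $\mathit{sm}' > 0$ everywhere. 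By the implicit function theorem, the unique solution $c_\theta$ (whose existence and uniqueness were argued after \eqref{eqn:solver-specification}) is locally a differentiable function of $\theta$, with
\[
\nabla_\theta c_\theta = -\frac{\nabla_\theta F}{\partial_c F}.
\]

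\textbf{Step 2: compute the gradient of $c_\theta$.} Differentiating $F$ in $\theta$ gives $\nabla_\theta F = \frac{1}{|\mathcal{E}|}\sum_{\mathbf{y}} \mathit{sm}'(h_\theta(\mathbf{y})+c_\theta)\,\nabla_\theta h_\theta(\mathbf{y})$. The $1/|\mathcal{E}|$ factors cancel in the ratio, leaving
\[
\nabla_\theta c_\theta = -\sum_{\mathbf{y}\in\mathcal{E}} w_\theta(\mathbf{y})\,\nabla_\theta h_\theta(\mathbf{y}),
\]
with $w_\theta$ exactly as in the statement.

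\textbf{Step 3: total derivative of the graphon value.} By the chain rule,
\[
\nabla_\theta f_\theta[c_\theta](\mathbf{x}) = \frac{\partial f_\theta[c](\mathbf{x})}{\partial\theta}\Big|_{c=c_\theta} + \frac{\partial f_\theta[c](\mathbf{x})}{\partial c}\Big|_{c=c_\theta}\,\nabla_\theta c_\theta,
\]
and $\partial_c f_\theta[c](\mathbf{x}) = \mathit{sm}'(h_\theta(\mathbf{x})+c)$. Substituting the expression from Step 2 yields the claimed formula.

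The only step needing care is Step 1. We must check that $\partial_c F$ never vanishes, which holds since the sigmoid is strictly increasing. We must also note that the uniqueness of $c_\theta$ makes the implicit-function branch coincide with the solver's output, so the local differentiable branch is indeed $\theta\mapsto c_\theta$. Everything else is routine differentiation.
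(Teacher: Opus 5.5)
Your proposal is correct and follows essentially the same route as the paper: apply the chain rule to $f_\theta[c_\theta](\mathbf{x})$, then implicitly differentiate the empirical constraint $\frac{1}{|\mathcal{E}|}\sum_{\mathbf{y}\in\mathcal{E}} f_\theta[c_\theta](\mathbf{y}) = p$ to get $\nabla_\theta c_\theta$ as a ratio in which the $1/|\mathcal{E}|$ factors cancel. Your Step 1 adds something the paper takes for granted: it justifies that $c_\theta$ is differentiable, using the implicit function theorem with $\partial_c F>0$ and the uniqueness of the solution to identify the local branch with $c_\theta$.
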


For \textbf{(P2)}, despite its different mathematical origins in large deviation theory, the same constrained loop applies with two substitutions: for a given pattern graph $H$ and $0<q<1$ with target $r \neq q$, the edge-density solver is replaced by the $H$-density solver targeting $t(H, f_\theta[c]) = r^{e(H)}$, and the objective is replaced by the Monte Carlo estimate of $h_q(f_\theta[c_\theta])=\int_{[0,1]^2} \mathrm{KL}(\mathrm{Bern}(f_\theta[c_\theta](x,y))\,\|\,\mathrm{Bern}(q))\,\mathrm{d}x\,\mathrm{d}y$, obtained by averaging $\mathrm{KL}(\mathrm{Bern}(f_\theta[c_\theta](x,y))\,\|\,\mathrm{Bern}(q))$ over a fresh batch of pairs $(x,y)$. The sigmoid output ensures the KL objective is differentiable in $\theta$, and an analogous implicit-differentiation formula applies to the $H$-density constraint.

\begin{figure}[t]
  \centering
  \begin{minipage}{0.43\textwidth}
    \centering
    \captionof{table}{$C_5$ densities in graphons found by the known or conjectured optimal construction and our method for $C_5$ density minimisation under $t(K_2, W)=p$.\label{tab:c5result}}
    {\setlength{\tabcolsep}{4pt}\small
    \begin{tabular}{cll}
       \toprule
       $\mathbf{p}$
       &
       \begin{tabular}{@{}l@{}}
       \textbf{Known} or \\
       \textbf{Conjectured}
       \end{tabular}
       &
       \textbf{Ours}
       \\
       \midrule
       $1/2$ & $0$ & $0.00004086$ \\
       $2/3$ & $0.12345679$ & $0.12350364$ ($+0.04\%$) \\
       $4/5$ & $0.3264$ & $0.32644549$ ($+0.01\%$) \\
       $4/7$ & $0.04587149$ & $0.04590020$ ($+0.06\%$) \\
       $5/8$ & $0.08653616$ & $0.08657084$ ($+0.04\%$) \\
       $5/7$ & $0.18200809$ & $0.18206173$ ($+0.03\%$) \\
       \bottomrule
    \end{tabular}
    }
  \end{minipage}\hfill
  \begin{minipage}{0.55\textwidth}
    \centering
    \begin{subfigure}{0.30\linewidth}
      \includegraphics[width=\linewidth]{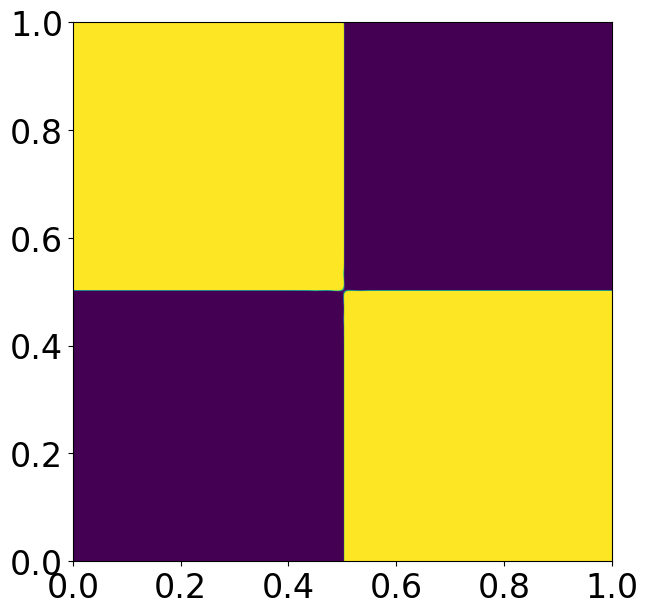}
      \caption{$p{=}1/2$.}\label{fig:opt-c5-50}
    \end{subfigure}\hfill
    \begin{subfigure}{0.30\linewidth}
      \includegraphics[width=\linewidth]{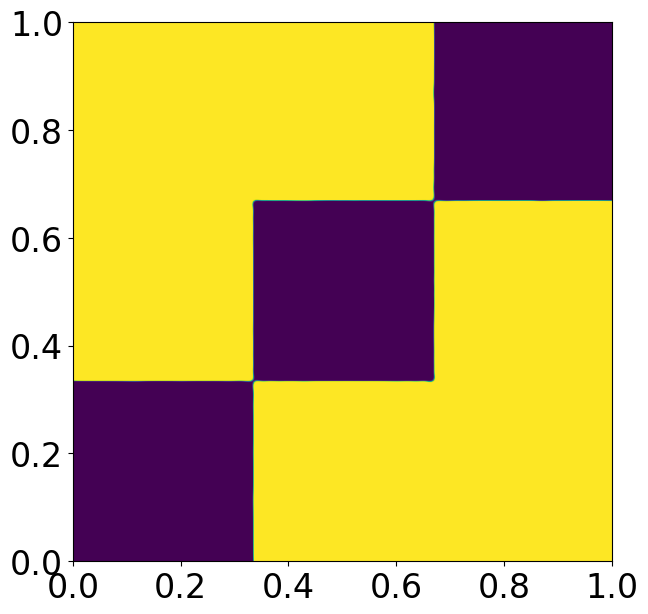}
      \caption{$p{=}2/3$.}\label{fig:opt-c5-67}
    \end{subfigure}\hfill
    \begin{subfigure}{0.30\linewidth}
      \includegraphics[width=\linewidth]{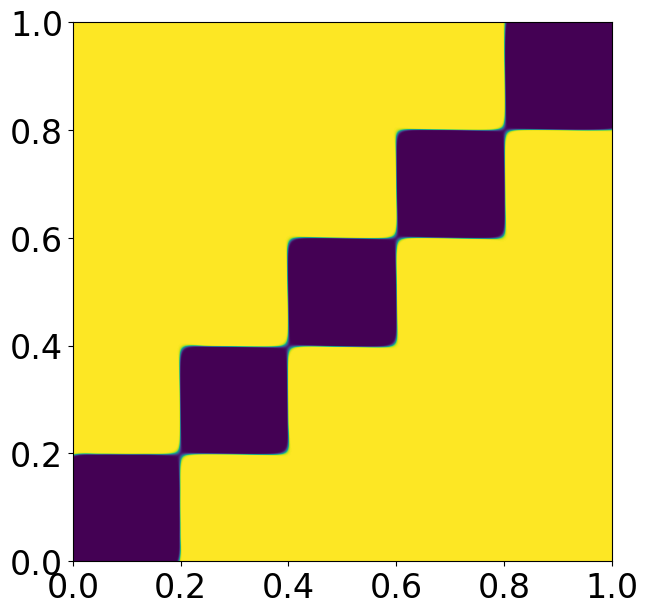}
      \caption{$p{=}4/5$.}\label{fig:opt-c5-80}
    \end{subfigure}\\[0.3em]
    \begin{subfigure}{0.30\linewidth}
      \includegraphics[width=\linewidth]{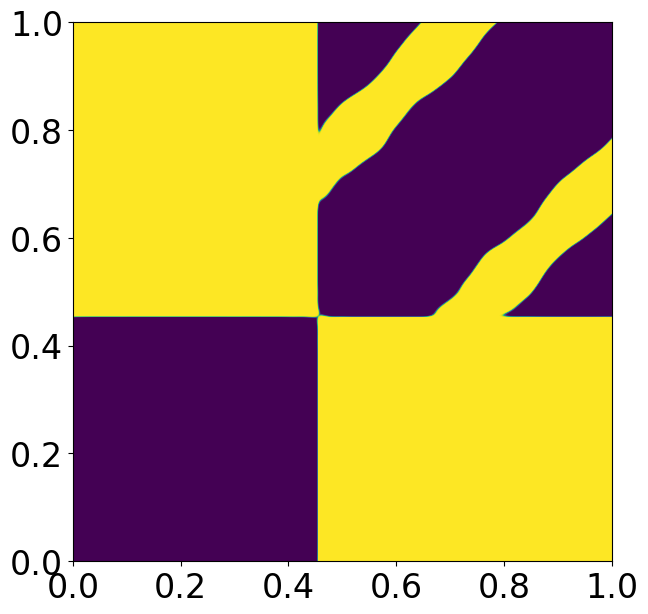}
      \caption{$p{=}4/7$.}\label{fig:opt-c5-57}
    \end{subfigure}\hfill
    \begin{subfigure}{0.30\linewidth}
      \includegraphics[width=\linewidth]{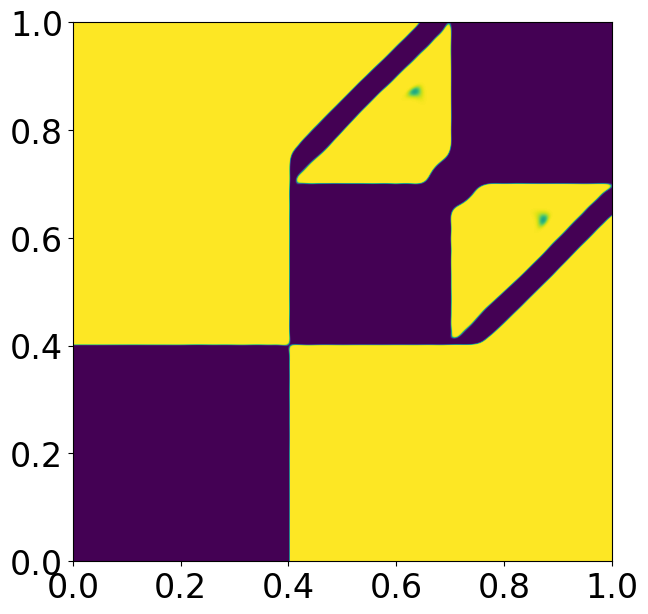}
      \caption{$p{=}5/8$.}\label{fig:opt-c5-62}
    \end{subfigure}\hfill
    \begin{subfigure}{0.30\linewidth}
      \includegraphics[width=\linewidth]{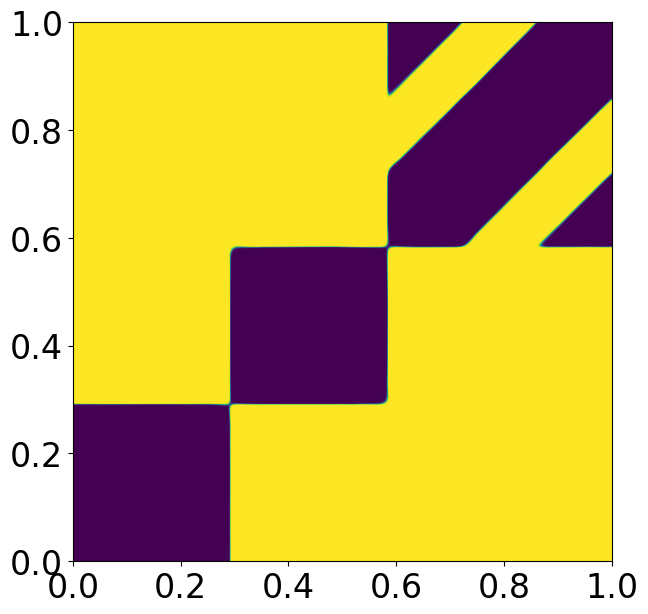}
      \caption{$p{=}5/7$.}\label{fig:opt-c5-71}
    \end{subfigure}
    \captionof{figure}{Graphons found by our method for $C_5$ density minimisation under $t(K_2, W)=p$.}
    \label{fig:5cycleminimisation}
  \end{minipage}
\end{figure}

\section{Experimental results}\label{sec:experiments}

We evaluate our framework on instances of \textbf{(P1)} and \textbf{(P2)} drawn from the extremal-graph-theory and large-deviation literatures. The same architecture, solver, and training loop are used throughout; the only changes between problem families are which constraint is enforced and how the objective is estimated. The experiments are organised to demonstrate two capabilities: (i) rediscovering classical optima that previously required substantial human effort, and (ii) producing mathematically interesting candidates for open instances. An ablation study at the end isolates the contribution of each architectural component.

We visualise graphons as Viridis heatmaps (blue $=0$, yellow $=1$) (\Cref{fig:triangledensity-minimisers}). Reported homomorphism densities are either numerically computed by discretising the learnt graphon into an $M\times M$ weighted adjacency matrix, or estimated using $2^{28}$ samples with a 95\% bootstrap confidence interval (shown in \textit{italic} font) when discretisation is infeasible. Across training runs over the $8$ configurations of activation and learning rate $(\phi, \eta) \in \{\mathrm{GELU}, \sin\} \times \{10^{-3}, 5 \times 10^{-4}, 10^{-4}, 5 \times 10^{-5}\}$, we report the lowest-objective run; reported values should be read as best-found candidate upper bounds, and the bootstrap intervals quantify Monte Carlo estimation error rather than variability across runs. See \Cref{sec:experimentdetails} for details.

\noindent{\bf Constrained homomorphism-density minimisation.}\ $\,$
We apply our method to \textbf{(P1)} across three classes of instances: clique densities ($H = K_n$, fully solved~\citep{razborov2008minimal,reiher2016clique}), odd-cycle densities ($H = C_5, C_7$, partly open), and a systematic sweep over 117 non-bipartite graphs with $v(H) \le 6$. The clique experiments validate the method against known optima; the latter two produce new candidate minimisers and, through the exhaustive sweep, a previously unreported family of extremal structures. We describe each in turn.

\begin{figure}[t]
  \centering
  \begin{minipage}{0.43\textwidth}
    \centering
    \captionof{table}{$C_7$ densities in graphons found by the conjectured optimal construction and our method for $C_7$ density minimisation under $t(K_2, W)=p$.\label{tab:c7result}}
    {\setlength{\tabcolsep}{4pt}\small
    \begin{tabular}{cll}
       \toprule
       $\mathbf{p}$ & \textbf{Conjectured} & \textbf{Ours} \\
       \midrule
       $1/2$ & $0$ & $0.00001517$ \\
       $2/3$ & $0.05761317$ & $0.05763195$ ($+0.03\%$) \\
       $4/5$ & $0.209664$   & $0.20967621$ ($+0.01\%$) \\
       $4/7$ & $0.01713510$ & $0.01715017$ ($+0.09\%$) \\
       $5/8$ & $0.03610272$ & $0.03611957$ ($+0.05\%$) \\
       $5/7$ & $0.09455118$ & $0.09457182$ ($+0.02\%$) \\
       \bottomrule
    \end{tabular}
    }
  \end{minipage}\hfill
  \begin{minipage}{0.55\textwidth}
    \centering
    \begin{subfigure}{0.30\linewidth}
      \includegraphics[width=\linewidth]{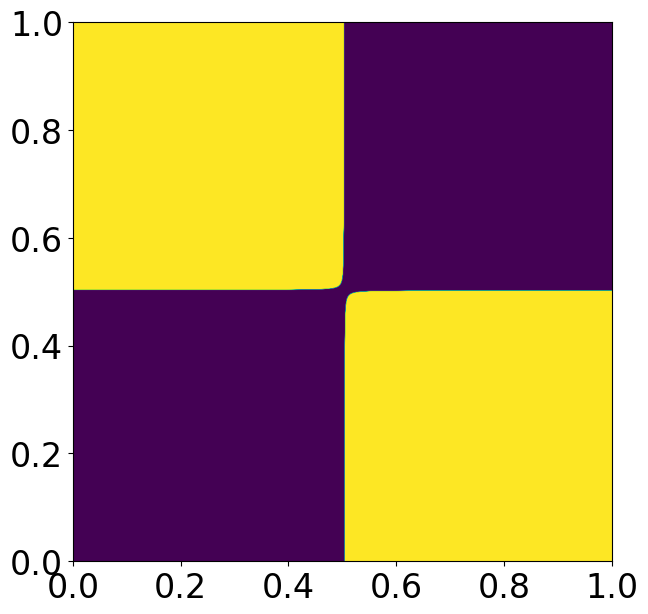}
      \caption{$p{=}1/2$.}\label{fig:opt-c7-50}
    \end{subfigure}\hfill
    \begin{subfigure}{0.30\linewidth}
      \includegraphics[width=\linewidth]{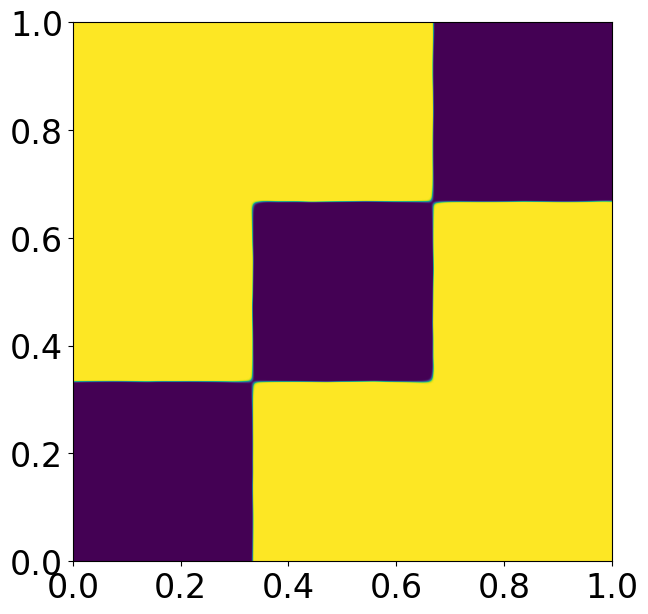}
      \caption{$p{=}2/3$.}\label{fig:opt-c7-67}
    \end{subfigure}\hfill
    \begin{subfigure}{0.30\linewidth}
      \includegraphics[width=\linewidth]{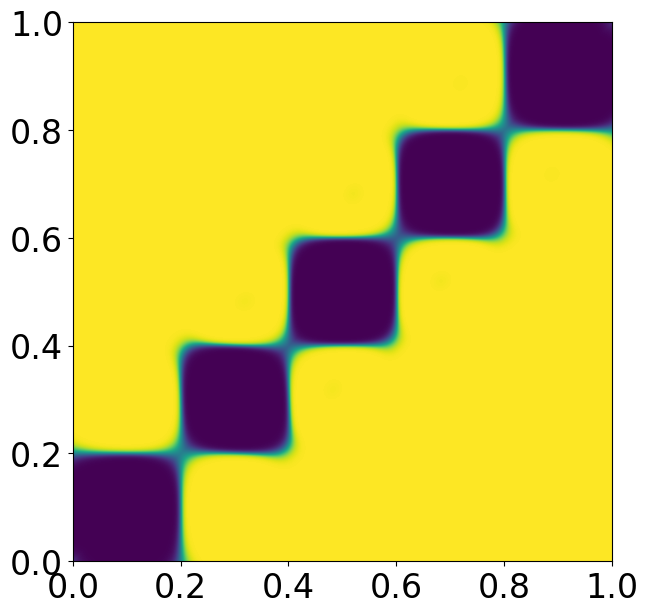}
      \caption{$p{=}4/5$.}\label{fig:opt-c7-80}
    \end{subfigure}\\[0.3em]
    \begin{subfigure}{0.30\linewidth}
      \includegraphics[width=\linewidth]{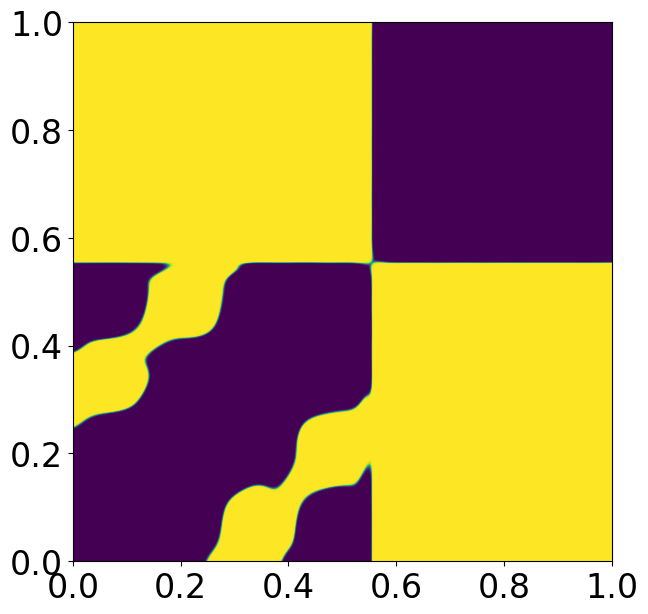}
      \caption{$p{=}4/7$.}\label{fig:opt-c7-57}
    \end{subfigure}\hfill
    \begin{subfigure}{0.30\linewidth}
      \includegraphics[width=\linewidth]{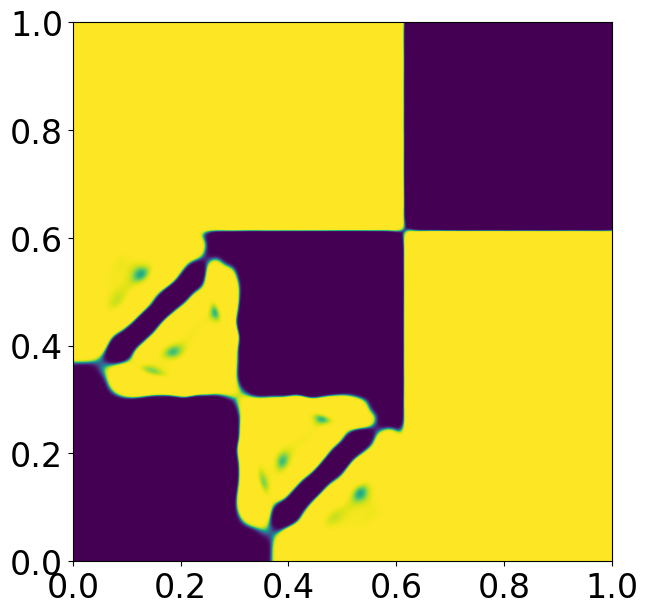}
      \caption{$p{=}5/8$.}\label{fig:opt-c7-62}
    \end{subfigure}\hfill
    \begin{subfigure}{0.30\linewidth}
      \includegraphics[width=\linewidth]{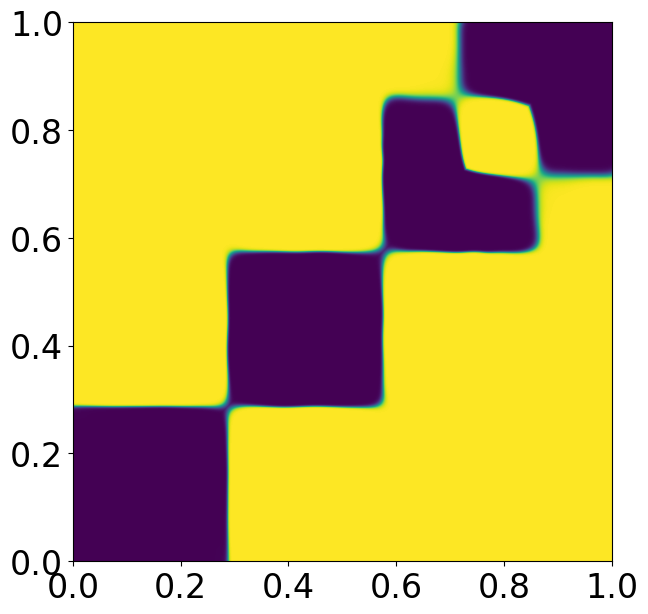}
      \caption{$p{=}5/7$.}\label{fig:opt-c7-71}
    \end{subfigure}
    \captionof{figure}{Graphons found by our method for $C_7$ density minimisation under $t(K_2, W)=p$.}
    \label{fig:7cycleminimisation}
  \end{minipage}
\end{figure}

\noindent{\it Clique densities ($H = K_n$).}\ $\,$
We begin with the case $n\,{=}\,3$. When $p \,{=}\, 1 \,{-}\, 1/k$ for some integer $k \ge 2$, the optimal graphon is the complete $k$-partite graphon or any of its measure-preserving permutations.
For $i \in [k]$, let $X_i = ((i-1)/k, i/k]$. The complete $k$-partite graphon $W_k$ is defined as follows:
$W_k(x_1,x_2) =
1$ if $x_1 \in X_i$ and $x_2 \in X_j$ with $i \neq j$; otherwise,
$W_k(x_1,x_2) = 0$.
For a measure-preserving $\phi$ on $[0,1]$, its permutation is
$W_{k,\phi}(x_1,x_2) = W_k(\phi(x_1), \phi(x_2))$. Every optimum is of this form in this case of $p = 1 - 1/k$.
When $1 - 1/k < p < 1 - 1/(k+1)$, the minimisation admits multiple types of optima.
One example is a variant $W_p$ of $W_{k+1}$,
where the first $k$ parts are equal in length but the last part $X_{k+1}$ is shorter.
Again, its permutations $W_{p,\phi}$ by measure-preserving $\phi$ on $[0,1]$ are also optimal, though
the $W_{p,\phi}$ do not exhaust all optima.

We considered $p \in \{1/2, 2/3, 4/5, 4/7, 5/8, 5/7\}$. For the three cases with $p=1-1/k$ ($p\in\{1/2,2/3,4/5\}$), our method produced near-optimal graphons (top row of \Cref{fig:triangledensity-minimisers}) within $0.1\%$ of the theoretical optima, closely matching the complete $k$-partite graphons $W_k$; among the equivalent measure-preserving permutations, our method consistently converged to these simpler representatives.
For $p\in\{4/7,5/8\}$, which satisfy $1-1/k<p<1-1/(k+1)$, the learnt graphons (\Cref{fig:triangledensity-minimisers}d and~\Cref{fig:triangledensity-minimisers}e) capture the core structure of the known optimal $W_p$.
For $p=5/7$, the learnt graphon (\Cref{fig:triangledensity-minimisers}f) does not match $W_p$, but its bipartite middle-block structure matches the characterisation of all optimal graphons in~\citep{pikhurko2017asymptotic}.

\begin{figure}[t]
  \centering
  \begin{minipage}{0.42\textwidth}
    \centering
    \captionof{table}{$H_6$ densities in graphons found by our method, the $k$-partite construction, and the constant graphon, for $H_6$ density minimisation under $t(K_2, W){=}p$.\label{tab:noveloptimum}}
    {\setlength{\tabcolsep}{4pt}\small
    \begin{tabular}{clll}
      \toprule
      $\mathbf{p}$ & \textbf{Ours} & \textbf{$k$-partite} & \textbf{Constant} \\
      \midrule
      $2/3$ & $0.02476414$ & $0.03292181$ & $0.02601229$ \\
      $3/4$ & $0.07218717$ & $0.07617188$ & $0.07508469$ \\
      $4/5$ & $0.12985603$ & $0.13056$    & $0.13421773$ \\
      \bottomrule
    \end{tabular}
    }
  \end{minipage}\hfill
  \begin{minipage}{0.55\textwidth}
    \centering
    \begin{subfigure}{0.23\linewidth}
      \centering
      \begin{tikzpicture}[scale=0.55]
        \node[circle, draw, fill=black, minimum size=3pt, inner sep=0pt] (v1) at (-1.5, 1.5) {};
        \node[circle, draw, fill=black, minimum size=3pt, inner sep=0pt] (v2) at (-1.5, 0) {};
        \node[circle, draw, fill=black, minimum size=3pt, inner sep=0pt] (v3) at (0, 1.5) {};
        \node[circle, draw, fill=black, minimum size=3pt, inner sep=0pt] (v4) at (0, 0) {};
        \node[circle, draw, fill=black, minimum size=3pt, inner sep=0pt] (v5) at (1.5, 1.5) {};
        \node[circle, draw, fill=black, minimum size=3pt, inner sep=0pt] (v6) at (1.5, 0) {};
        \draw (v5) -- (v4) -- (v1) -- (v3) -- (v5) -- (v6) -- (v3) -- (v2) -- (v4) -- (v6);
      \end{tikzpicture}
      \caption{$H_6$.}
      \label{fig:graph-222}
    \end{subfigure}\hfill
    \begin{subfigure}{0.23\linewidth}
      \includegraphics[width=\linewidth]{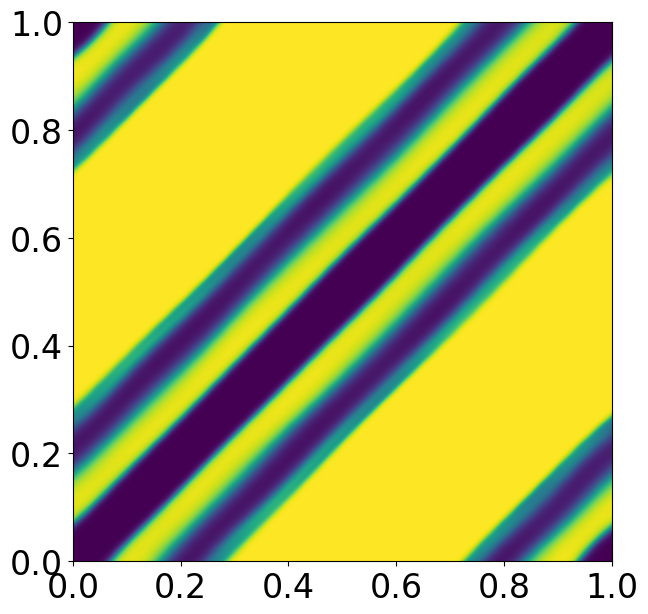}
      \caption{$p{=}2/3$.}
      \label{fig:opt-222-67}
    \end{subfigure}\hfill
    \begin{subfigure}{0.23\linewidth}
      \includegraphics[width=\linewidth]{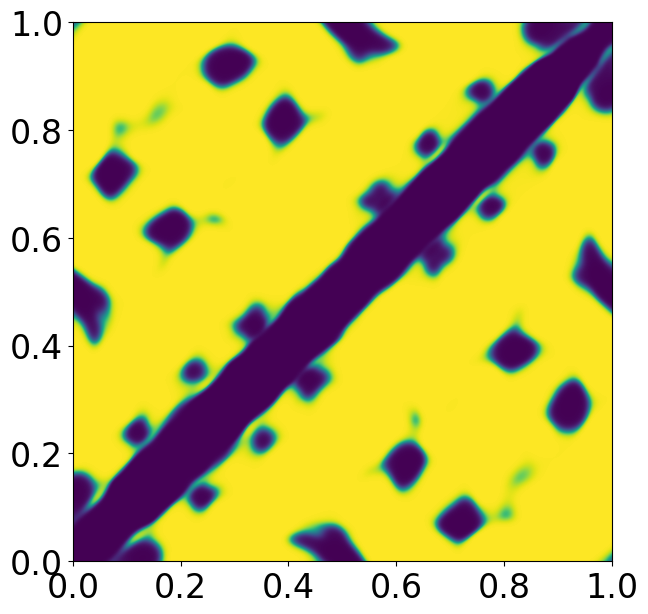}
      \caption{$p{=}3/4$.}
      \label{fig:opt-222-75}
    \end{subfigure}\hfill
    \begin{subfigure}{0.23\linewidth}
      \includegraphics[width=\linewidth]{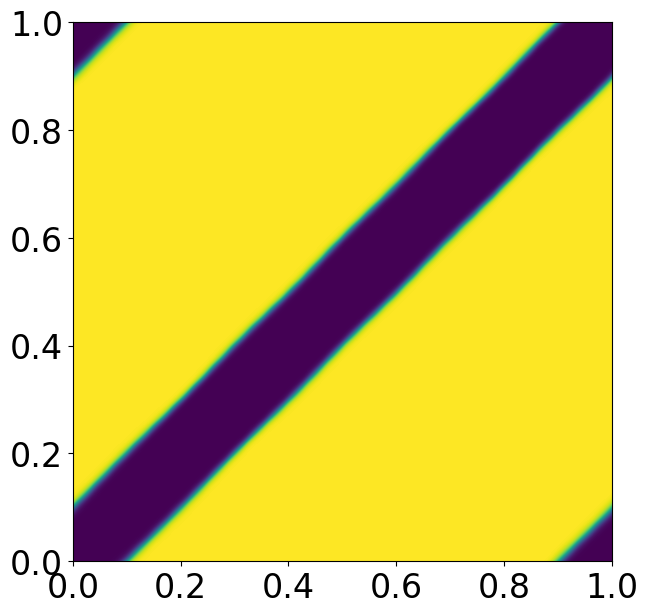}
      \caption{$p{=}4/5$.}
      \label{fig:opt-222-80}
    \end{subfigure}
    \captionof{figure}{The graph $H_6$ (six vertices, nine edges) and graphons found by our method for $H_6$ density minimisation under $t(K_2, W) = p$.}
    \label{fig:noveloptimum}
  \end{minipage}
\end{figure}

\begin{figure}
  \centering
  \begin{subfigure}{0.24\linewidth}
    \includegraphics[width=\linewidth]{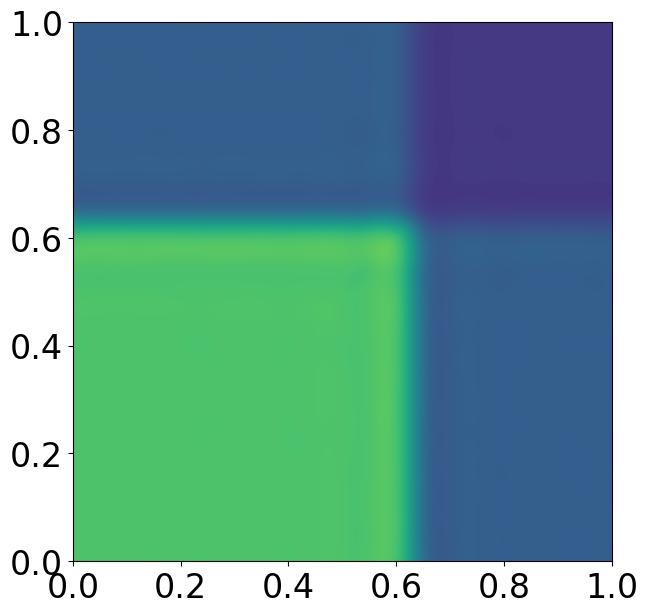}
    \label{fig:upperdeviation-k3-10-50}
  \end{subfigure}\hfill
  \begin{subfigure}{0.24\linewidth}
    \includegraphics[width=\linewidth]{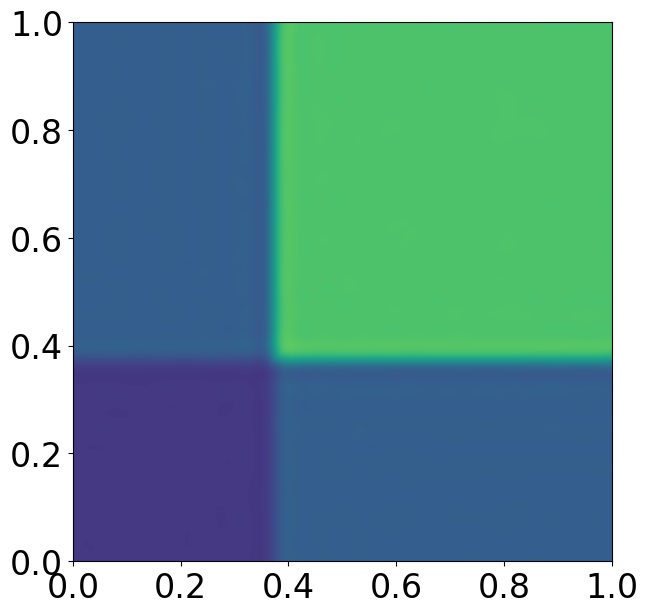}
    \label{fig:upperdeviation-c4-10-50}
  \end{subfigure}\hfill
  \begin{subfigure}{0.24\linewidth}
    \includegraphics[width=\linewidth]{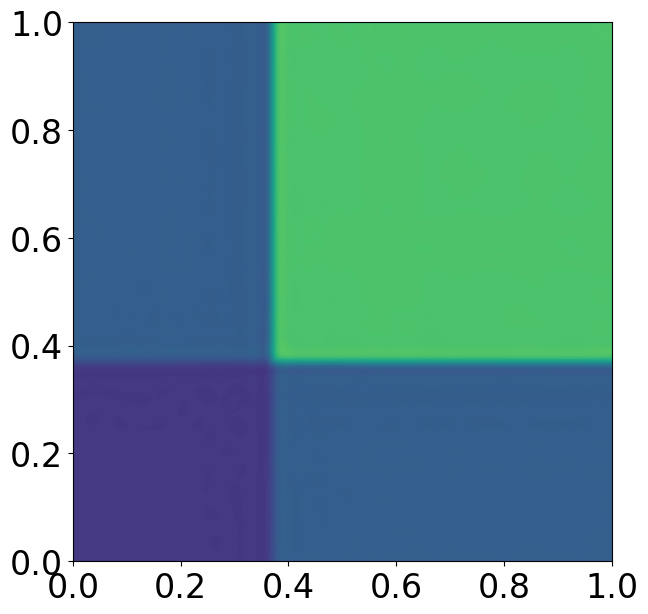}
    \label{fig:upperdeviation-c5-10-50}
  \end{subfigure}\hfill
  \begin{subfigure}{0.24\linewidth}
    \includegraphics[width=\linewidth]{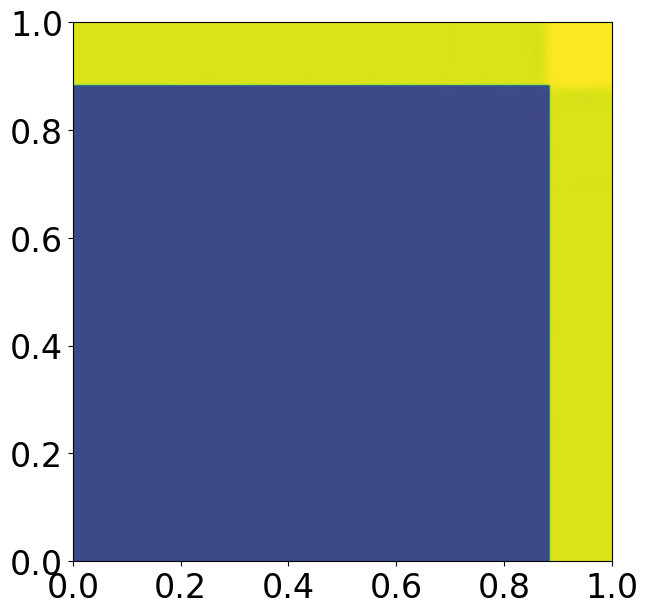}
    \label{fig:upperdeviation-k13-10-50}
  \end{subfigure}
  \caption{Learnt graphons for upper-tail variational problems with $q=0.1, r=0.5$ for $H = K_3, C_4, C_5, K_{1,3}$ (left to right).}
  \label{fig:upperdeviation}
\end{figure}

\begin{table}[t]
  \centering
  \captionof{table}{Relative-entropy $h_q(W)$ at our learnt graphon ($W_{\mathrm{ours}}$) and at the tripartite construction $W_{\mathrm{LZ}}$ of \citet{lubetzky2015replica}, for \textbf{(P2)} with $H = K_3$ on $(q,r) \in \{0.05, 0.1\} \times \{0.4, 0.5, 0.6\}$, together with a lower bound on the optimum (\Cref{sec:addexperiments}). Lower is better.\label{tab:largedeviation}}
  {\small
  \begin{tabular}{lcccccc}
    \toprule
    $(q,r)$ & $(0.05,0.4)$ & $(0.05,0.5)$ & $(0.05,0.6)$ & $(0.10,0.4)$ & $(0.10,0.5)$ & $(0.10,0.6)$ \\
    \midrule
    $h_q(W_{\mathrm{LZ}})$    & $0.556057$ & $0.830366$ & $1.144945$ & $0.311239$ & $0.510826$ & $0.750684$ \\
    $h_q(W_{\mathrm{ours}})$  & $0.464208$ & $0.727881$ & $1.051203$ & $0.309109$ & $0.506060$ & $0.747694$ \\
    Lower bound               & $0.460284$ & $0.725284$ & $1.049172$ & $0.306561$ & $0.504311$ & $0.746006$ \\
    \bottomrule
  \end{tabular}
  }
\end{table}

Turning to larger cliques ($k > 3$), we ran $\min_W t(K_k, W)$ s.t.\ $t(K_2, W){=}1{-}1/k$ for $k\in\{4,5,6\}$. Even though the larger motif size increases the variance of our estimator and the optimal graphon involves more blocks, our method recovers the complete $k$-partite structure (\Cref{fig:clique}).

\noindent{\it Open problems: $C_5$ and $C_7$.}\ $\,$
We next apply our method to two graphon optimisation problems that, to the best of our knowledge, remain open in the tested regimes: minimising the homomorphism densities of $C_5$ and $C_7$, each under a fixed edge density~$p$. In \Cref{sec:addexperiments}, we also report results for the Petersen graph, for which no prior optima or conjectured optima are known to us.

For $C_5$, \citet{bennett2020minimizing} characterised the minimisers when $p=1-1/k$ and conjectured them for other $p$. We applied our method for $p\in\{1/2, 2/3, 4/5, 4/7, 5/8, 5/7\}$ (\Cref{fig:5cycleminimisation}); the resulting values agree with the conjectured optima within $0.1\%$ (\Cref{tab:c5result}). When $p=1-1/k$, the learnt graphon recovers the complete $k$-partite structure (top row); for other $p$, it exhibits pronounced diagonal (banded) structure (bottom row), suggesting a qualitatively different extremal configuration away from the breakpoints $p = 1-1/k$.
For $C_7$, no prior results are known to us. Motivated by the $C_5$ results of \citet{bennett2020minimizing}, we propose a conjecture for the structure and optimum value of $C_7$-minimising graphons under a fixed edge density; see \Cref{sec:oddcycleconjecture}. Empirically, our learnt graphons match our conjectured optima within $0.1\%$ across all tested $p$ (\Cref{tab:c7result}). For $p\in\{1/2,2/3,4/5\}$ ($p=1-1/k$), they recover the complete $k$-partite structure (top row of \Cref{fig:7cycleminimisation}); for $p\in\{4/7,5/8\}$, they show diagonal patterns similar to $C_5$, suggesting that this is a common feature of odd-cycle minimisation; for $p=5/7$, the structure is qualitatively different, with block-like rather than diagonal patterns (bottom-right of \Cref{fig:7cycleminimisation}).

\noindent{\it Exhaustive study over small graphs.}\ $\,$
We now describe a systematic study enabled by our method.
We applied our method to the problem \textbf{(P1)} in \eqref{eqn:targetproblems} for all non-bipartite graphs $H$ with $v(H) \le 6$ and with no isolated vertices or isolated edges, for a total of 117 graphs; isolated vertices and isolated edges are excluded because they only multiply $t(H, W)$ by $1$ and $p$, respectively.
For each $H$, we trained our network under a single fixed hyperparameter configuration (GELU activation, learning rate $10^{-3}$) with three random seeds to minimise $t(H,W)$ subject to $t(K_2,W)=p$, for the six densities $p = 1 - 1/(k-1+j)$ with $j \in \{1/3, 1/2, 2/3, 1, 2, 3\}$, where $k$ is the chromatic number of $H$.
These values of $p$ are chosen to probe the structure of the curve
$x \mapsto \min_{W:t(K_2,W)=x} t(H,W)$:
the values with $j \in \{1, 2, 3\}$ correspond to breakpoints of piecewise-defined curves established for several classes of graphs~\citep{razborov2008minimal,reiher2016clique,bennett2020minimizing}, while those with $j < 1$ lie inside the segment $(k-2)/(k-1) < p < (k-1)/k$.
To assess the optimality of our solutions, we computed lower bounds on the minimum $H$-density using flag algebras~\citep{razborov2007flag} on $7$-vertex graphs.

Taking the best of the three seeds, the $H$-density of the learnt graphon is within $1\%$ of the flag-algebra lower bound in 72.6\% of the 702 $(H,p)$ instances, and a graphon whose $H$-density is more than $1\%$ smaller is known in 20.8\% of them; neither holds for the remaining 6.6\%.

Using the lower bounds, we also examined whether the minimum is convex or concave on the segment containing the three densities with $j < 1$. For 104 of the 117 graphs, including every graph with $k = 3$ except $K_3$, the minimum appears to be convex on this segment, as conjectured for odd cycles~\citep{bennett2020minimizing}; for 7 graphs, it appears to be concave, as for cliques~\citep{reiher2016clique}; and for 5 graphs, including $K_6$ minus an edge, it is neither convex nor concave.
Our experiments also suggest a novel family of extremal graphon structures. When $p \,{=}\, 1 \,{-}\, 1/k$, the minimisers for cliques and $C_5$ are known to be complete $k$-partite~\citep{reiher2016clique,bennett2020minimizing}, and we conjecture the same for $C_7$; in all three cases, our method recovers this structure (\Cref{fig:triangledensity-minimisers,fig:5cycleminimisation,fig:7cycleminimisation}). However, this is not universal: for the graph $H_6$ in Fig.~\ref{fig:noveloptimum}a at the same $p$, our method finds a candidate minimiser that is neither complete $k$-partite nor constant (\Cref{fig:noveloptimum}); the learnt graphons exhibit a geometric pattern in which edge probabilities appear to depend on the circular distance when $[0,1]$ is interpreted as a circle.

\noindent{\bf Variational problem from large deviation theory.}\ $\,$
We apply our framework to \textbf{(P2)}, the variational problem of \citet{chatterjee2011large}.
We test $(q, r) \in \{0.05, 0.1\} \times \{0.4, 0.5, 0.6\}$ for upper-tail variational problems with $H \in \{K_3, C_4, C_5, K_{1,3}\}$, and $(q, r) \in \{0.7, 0.8, 0.9\} \times \{0.05, 0.1\}$ for lower-tail variational problems with $H = K_3$; in all these regimes, the structure of optimisers is, to our knowledge, not fully understood. As a reference for $H = K_3$, we use the tripartite construction $W_{\mathrm{LZ}}$ proposed by Lemma 3.4 of \citet{lubetzky2015replica}; \Cref{tab:largedeviation} compares $h_q(W_{\mathrm{LZ}})$ to $h_q(W_{\mathrm{ours}})$ at our learnt graphon $W_{\mathrm{ours}}$ in the upper-tail regime.
Qualitatively, our method consistently finds two-block structures for all graphs and in both upper- and lower-tail regimes.
For the upper-tail problems, the 2-regular graphs $K_3$, $C_4$, $C_5$ share close edge densities and block sizes in their respective learnt graphons, while the non-2-regular graph $K_{1,3}$ shows a qualitatively different structure with a much smaller block size; see \Cref{fig:upperdeviation} for the learnt graphons with $q=0.1, r=0.5$ and \Cref{sec:addexperiments} for all upper-tail regimes.
For the lower-tail problems, we obtain a qualitatively different structure of nearly-bipartite graphons.
Quantitatively, in the upper-tail problems for $H = K_3$, our method consistently finds graphons with lower relative entropy than the tripartite construction $W_{\mathrm{LZ}}$, showing that the latter is suboptimal in these regimes; see \Cref{sec:addexperiments} for details.

\noindent{\bf Ablation study, baselines, and multi-task variant.}\  $\,$
To assess the contribution of each architectural component and to compare the neural approach with simpler alternatives, we ran an ablation study and four non-neural baselines on triangle-density minimisation under $t(K_2, W) = 7/9$. Replacing the sinusoidal features with GELU features, the constraint solver with a regulariser, or the cosine-annealing schedule with a constant learning rate increases the triangle density, and using the sinusoidal encoding only in the first layer degrades the recovered block structure; the best constant encoding scale reaches a similar density. The four non-neural baselines tailored to step graphons (a fixed-block SBM with $256$ blocks; SBMs with trainable block sizes; a decision tree; a tree ensemble) all reach higher triangle densities than our method. To explore whether a single model can handle multiple problem instances simultaneously, we also trained a FiLM-conditioned multi-task variant over $p \in [1/2, 2/3]$ for $H = K_3$ \citep{perez2018film}, which demonstrates that the architecture extends naturally to this setting. Full details are deferred to \Cref{sec:ablation}.

\begin{ack}
We thank Sangho Lim, Gyeongwon Jung, Seonghun Park, Seonghyuk Im, and Kyeongsik Nam for helpful discussions.
T.~Kim and H.~Yang were supported by the National Research Foundation of Korea (NRF) grant funded by the Korean Government (MSIT) (No.~RS-2023-00279680). H.~Yang was also supported by a KIAS Individual Grant (CG109601) at the Korea Institute for Advanced Study. J.~Lee was supported by Samsung STF Grant (SSTF-BA2201-02) and the National Research Foundation of Korea (NRF) grant funded by the Korean Government (MSIT) (No.~NRF-2022R1C1C1010300).
J.~Baek was supported by KIAS Individual Grant (HP105001).
\end{ack}

\bibliography{references}
\bibliographystyle{plainnat}

\appendix

\newpage

\allowdisplaybreaks[0]

\section{Related work}

Implicit neural representations of graphons were introduced by \citet{xia2023implicit} for \emph{graphon learning}, and \citet{azizpour2025scalable} proposed a scalable variant. In graphon learning, one observes a collection of graphs assumed to be sampled from a common unknown graphon and estimates this graphon from the samples. Recovering sharp details of the true graphon is not the focus of these methods, and such details are often lost in the learnt graphons; see, e.g., \citet[Figs.~3(b) and~9]{azizpour2025scalable}. Our setting is different: there is neither a graph dataset nor a target graphon, and we search directly over graphons to minimise a homomorphism-density or relative-entropy functional under a density constraint. Applying graphon learning to these problems would first require near-optimal finite graphs, which is itself a hard discrete optimisation problem.

Razborov's \emph{flag algebra} framework~\citep{razborov2007flag} enables proofs of inequalities between graph homomorphism densities by means of sums of squares (ultimately, applications of the Cauchy--Schwarz inequality) in an algebra of graphs, where each finite graph can be interpreted as a functional on the space of graphons. This viewpoint also enables the automation of Cauchy--Schwarz-style arguments via semidefinite programming (SDP), and has been implemented in the {\sc Flagmatic} package~\citep{falgas2013applications}.
Flag algebras and Flagmatic have resolved many open problems in extremal combinatorics over the past decade (for a survey, see~\citet{razborov2013flag}), but their computational cost can be prohibitive. \footnote{For example, to derive a nontrivial inequality for a graph $G$, {\sc Flagmatic} typically enumerates all graphs $H$ with $v(H) = k \geq v(G)$ and solves an SDP over vectors indexed by such graphs. Graphs $H$ with more than eight vertices are currently unsupported due to computational complexity.}
This limits applicability to some instances we consider, such as problems involving the Petersen graph.
We view our method as \emph{complementary} to flag algebras rather than a replacement: flag algebras provide rigorous \emph{lower} bounds together with certificates, whereas our method searches directly for candidate graphons and yields candidate \emph{upper} bounds and concrete candidate minimisers. When an upper bound from our method nearly matches a rigorous lower bound, this gives strong evidence for the correct optimum and produces a graphon whose structure can be inspected and turned into mathematical conjectures (as in our exhaustive sweep in \Cref{sec:experiments}).

For fixed values of $n$ (i.e., without taking the limit in \Cref{eqn:extremalproblem:generalisedTuran} or related formulations), one may instead attempt to solve the discrete optimisation problem directly. A classical approach is local (e.g., Tabu) search, which remains useful in combination with reinforcement learning; see, e.g.,~\citet{parczyk2025new,mehrabian2024finding}.
As a direct reinforcement-learning approach, \citet{mehrabian2024finding} used {\sc AlphaZero} to search for $n$-vertex graphs with as many edges as possible and no copies of $K_3$ and $C_4$.
The so-called \emph{deep cross entropy method} was used by~\citet{wagner2021constructions} for analysing discrete problems in combinatorics. It iteratively trains a neural network to learn a policy that promotes the generation of high-quality examples, which succeeded in disproving conjectures regarding the largest eigenvalue of the adjacency matrix and the matching number.
Finally, {\sc PatternBoost}~\citep{charton2024patternboost} takes a hybrid approach that combines local search with a learned generative model; one illustrative example is generating lower-bound examples related to Mantel's theorem by solving finite-$n$ instances of \Cref{eqn:extremalproblem:generalisedTuran} for $H=K_3$ (for small $n$) up to edge density $p \le 1/2$.
For discrete objectives, these approaches can be more versatile than our method because they can optimise non-differentiable quantities (e.g., the matching number). In contrast, they operate at a fixed graph size, whereas our method targets asymptotic problems in the dense-graph limit.
Because many combinatorial objects are highly symmetric, solutions produced by such discrete approaches can be difficult to interpret or distil into human-understandable structure.
In contrast, as demonstrated by visualisations of learnt graphons, our method often captures structure that is amenable to inspection and interpretation.

From the perspective of implicit neural representations, the work most closely related to ours is \citet{mundinger2025neural}, which applies implicit neural representations to the Hadwiger--Nelson problem: colouring the plane $\mathbb{R}^2$ with the minimum number of colours such that no two points at unit distance share a colour. Analogously to our approach, they represent a colouring by a neural network and recover a 6-colouring by minimising a monochromatic edge-density objective. With human post-processing, they construct two new 6-colourings that expand the known range of realisable distances from $[0.415, 0.447]$ to $[0.354, 0.657]$. In contrast, we focus on asymptotic extremal problems in graph theory that can be phrased as constrained optimisation over graphons. Moreover, our approach integrates problem structure into the learning procedure, including an architectural bias toward discontinuous, step-like solutions, explicit constraint enforcement via an embedded solver, and variance reduction in homomorphism density estimation, with the aim of supporting systematic conjecture generation and exploration across diverse extremal problems in graph theory.

\section{Societal impacts and limitations}

Our work presents a method for finding structured candidate optima of graphon variational problems, which is of interest to mathematicians working on extremal graph theory and the large deviation theory of dense random graphs.
Compared with existing baselines, our method produces candidate optima that are markedly more interpretable, offering insight into the structure of optimal graphons and thereby supporting mathematical conjecture generation and discovery.
We do not foresee any major negative societal impacts arising from this work.

Our method has several limitations.
First, it is tailored to graphon optimisation and therefore does not apply to combinatorial problems on sparse graphs, for which the graphon limit is degenerate.
Second, although the candidate optima it produces are highly interpretable, the method itself does not provide rigorous guarantees of optimality; it should therefore be used in conjunction with rigorous lower-bound techniques such as flag algebras~\citep{razborov2007flag,falgas2013applications} to verify optimality and provide certificates.
Third, evaluating our method requires estimating homomorphism densities, for which we use either discretisation-based computation or Monte Carlo estimation; both can be computationally expensive for large motifs.
Discretisation-based computation of $t(H, W)$ on an $M \times M$ grid becomes infeasible for motifs with many vertices, such as the Petersen graph; Monte Carlo estimation can suffer from large relative error when the motif density is very small.

\section{Background on graphons, odd cycles, and large deviations}\label{sec:graphon-background}

In this section, we give an informal introduction to the graphon formulation of \Cref{sec:background} and state the conjectured optima for odd cycles used in \Cref{sec:experiments}.

\paragraph{Graphons as limits of graphs.}
Extremal graph theory and the large deviation theory of random graphs ask questions about very large graphs, such as how few triangles a graph with a given edge density can have. Graphons replace a sequence of growing graphs by a single function on the unit square, and our method optimises over these functions.

A graph $G$ with vertex set $[n]$ can be drawn as an image on the unit square through its adjacency matrix:
\[
W_G(x, y) \,\coloneqq\,
\begin{cases}
1 & \text{if } x \in \bigl(\frac{i-1}{n}, \frac{i}{n}\bigr],\ y \in \bigl(\frac{j-1}{n}, \frac{j}{n}\bigr], \text{ and } \{i, j\} \in E(G), \\
0 & \text{otherwise}.
\end{cases}
\] Note that isomorphic graphs give images that differ only by a relabelling, i.e., a permutation of rows and columns. The heatmaps in our figures show graphons in the same way.

Consider the limit of such images when the number of vertices grows. For the complete graph $K_n$, the image takes the value $1$ except on the $n$ diagonal squares, whose total area is $1/n$, so it converges to the constant function $1$ as $n \to \infty$. For the Erd\H{o}s--R\'enyi random graph $G(n, q)$, the image is noise with entries $0$ and $1$, which never approaches the constant function $q$ pointwise. Nevertheless, the constant graphon $W \equiv q$ is the natural limit of $G(n, q)$. This limit is characterised by the so-called cut distance. For a symmetric measurable function $W : [0,1]^2 \to [0,1]$, we consider a measure-preserving bijection $\sigma : [0,1] \to [0,1]$, which is a continuous counterpart of a relabelling of vertices, and write $W^\sigma(x, y) \coloneqq W(\sigma(x), \sigma(y))$. Then the cut distance is the infimum of the difference between two graphons at a cut $S \times T$ over such transformed graphons:
\[
\delta_\square(U, W) \,\coloneqq\, \inf_{\sigma}\, \sup_{S, T \subseteq [0,1]} \biggl| \int_{S \times T} \bigl(U(x, y) - W^\sigma(x, y)\bigr)\, \mathrm{d}x\, \mathrm{d}y \biggr|.
\]
A sequence of graphs $(G_n)$ converges to a graphon $W$ if $\delta_\square(W_{G_n}, W) \to 0$. The space of graphons, considered up to measure-preserving transformations (more precisely, with graphons at cut distance zero identified), is compact in the cut distance, and the images $W_G$ of graphs are dense in it~\citep{lovasz2012large}. In particular, every sequence of graphs has a subsequence that converges to a graphon.

Homomorphism densities are compatible with this limit. For every graph $H$ and graphons $U$ and $W$, we have
\[
|t(H, U) - t(H, W)| \,\le\, e(H)\, \delta_\square(U, W),
\]
so the $H$-density of a large graph is close to that of any graphon near it in the cut distance. Conversely, for a sequence of graphs $(G_n)$, if $t(H, W_{G_n}) \to t(H, W)$ for every graph $H$, then $W_{G_n} \to W$ in the cut distance~\citep{lovasz2012large}. By the continuity of homomorphism densities and the compactness of the space of graphons, an optimisation problem over large graphs whose objective and constraints are continuous in the cut distance, such as homomorphism densities, becomes an optimisation problem over graphons: the asymptotic optimum over graph sequences equals the optimum over graphons, and the latter is attained. For example, the asymptotic minimum in~\eqref{eqn:extremalproblem:generalisedTuran} equals the minimum in \textbf{(P1)}. The optimum is also attained for objectives that are only lower semicontinuous in the cut distance, such as the relative entropy $h_q$ in \textbf{(P2)}~\citep{chatterjee2011large}.

\paragraph{Conjectured optima for odd cycles.}\label{sec:oddcycleconjecture}
In \citet{bennett2020minimizing}, the conjectured optimal graphon is constructed as follows.
Let $k$ be the integer satisfying $1 - \frac{1}{k} < p < 1-\frac{1}{k+1}$.
First, we partition the interval $[0,1]$ into $(k+1)$ sets, $X_1, \ldots, X_{k-1}, Y_1, Y_2$.
We set the lengths of $X_1,\ldots,X_{k-1}$ to be $x$, and the lengths of $Y_1$ and $Y_2$ to be $\frac{y}{2}$, where $y=1-(k-1)x$.
We then define the graphon by
\[
  W(x_1, x_2) = \begin{cases}
    1 & \text{if } x_1 \in X_m,\ x_2 \in X_n,\ m \neq n
    \\
    1 & \text{if } x_1 \in X_m,\ x_2 \in Y_n,\ \text{or vice versa}
    \\
    0 & \text{if } x_1 \in X_m,\ x_2 \in X_m
    \\
    \rho & \text{if } x_1 \in Y_1,\ x_2 \in Y_2\ \text{or vice versa}
    \\
    0 & \text{if } x_1 \in Y_m,\ x_2 \in Y_m
  \end{cases}
\]
where $\rho$ is chosen to satisfy the edge-density constraint.

One can compute the $C_5$ density of this graphon, which can be written as a polynomial $f_5(x, y, \rho)$ given by
\begin{align*}
    &f_5(x, y, \rho)
    \\
    {} = &\left(\frac{(k-1)_5}{10} + \frac{(k-1)_4}{2} + \frac{(k-1)_3}{2} \right)x^5
    \\
    {} + &\left( \frac{(k-1)_4}{2} + \frac{3(k-1)_3}{2} + \frac{(k-1)_2}{2}\right)x^4 y
    \\
    {} + &\left(\left(\frac{1}{2} + \frac{\rho}{2}\right) (k-1)_3 + \left(1 + \frac{\rho}{2}\right) (k-1)_2\right) x^3 y^2
    \\
    {} + &\left( \left(\frac{\rho}{2} + \frac{\rho^2}{2}\right) (k-1)_2 + \frac{\rho}{2}(k-1) \right) x^2 y^3
    \\
    {} + &\frac{\rho^3}{2} (k-1) xy^4,
\end{align*}
where $(k-1)_r$ is a falling factorial, $(k-1)_r = (k-1) \cdot (k-2) \cdots (k-r)$.
With these definitions, the conjectured optimal value is obtained by optimising over $x$ and $\rho$ subject to the constraints below (with $y$ determined by $x$ via $(k-1)x + y = 1$).
The contribution of the $Y_1$--$Y_2$ rectangles to the ordered edge density is $\rho y^2/2$, since the two off-diagonal rectangles have total area $y^2/2$.
Equivalently, one can compute the conjectured optimum by solving
\begin{align*}
    &\min f_5(x, y, \rho)
    \\
  \text{subject to } & (k-1) x + y = 1,
    \\
    & (k-1)_2 x^2 + 2 (k-1) xy + \frac{\rho y^2}{2} = p,
    \\
    & x, y \ge 0, \quad 0 \le \rho \le 1.
\end{align*}
Using the same structure, we can generalise the conjecture to arbitrary odd cycles $C_{2\ell+1}$ by replacing $f_5$ with the corresponding $f_{2\ell+1}$.
For instance, $f_7$ is given as follows.

\begin{align*}
  &f_7(x, y, \rho)
  \\
  {} = &\left(\frac{(k-1)_7}{14} + (k-1)_6 + 4 (k-1)_5 + 5(k-1)_4 + \frac{3(k-1)_3}{2}\right) x^7
  \\
  {} + &\left( \frac{(k-1)_6}{2} + 5 (k-1)_5 + \frac{25(k-1)_4}{2} + \frac{15 (k-1)_3}{2} + \frac{(k-1)_2}{2} \right) x^6 y
  \\
  {} + &\left( (k-1)_5 + (3\rho + 7)(k-1)_4 + \left( \frac{7\rho}{2} + 10 \right) (k-1)_3 + \left( \frac{\rho}{2} + 2 \right) (k-1)_2 \right) x^5 y^2
  \\
  {} + &\left( \left( \frac{3 \rho}{2} + \frac{1}{2} \right) (k-1)_4 + \left( \frac{3\rho^2}{2} + 6 \rho + \frac{5}{2} \right)(k-1)_3 + \left( \frac{\rho^2}{2} + 3\rho + 2 \right)(k-1)_2\right) x^4 y^3
  \\
  {} + &\left( \left( \frac{\rho^3}{2} + \frac{3\rho^2}{2} + \frac{\rho}{2} \right)(k-1)_3 + \left( \frac{\rho^3}{2} + 3\rho^2 + \frac{3\rho}{2} \right)(k-1)_2 + \frac{\rho}{2} (k-1) \right) x^3 y^4
  \\
  {} + &\left( \left( \frac{\rho^4}{2} + \rho^3 \right) (k-1)_2 + \rho^3 (k-1) \right) x^2 y^5
  \\
  {} + & \frac{\rho^5}{2} (k-1) x y^6.
\end{align*}

\paragraph{Large deviations and the relative entropy.}
For $a, q \in (0,1)$, write
\[
\mathrm{KL}(a \,\|\, q) \,\coloneqq\, \mathrm{KL}\bigl(\mathrm{Bern}(a) \,\|\, \mathrm{Bern}(q)\bigr) \,=\, a \log\frac{a}{q} + (1-a) \log\frac{1-a}{1-q}.
\]
In $G(n, q)$, for each of the $N = \binom{n}{2}$ vertex pairs, the two vertices are adjacent independently with probability $q$. If $aN$ is an integer, Stirling's formula gives $\log\binom{N}{aN} = -N\bigl(a \log a + (1-a)\log(1-a)\bigr) + o(N)$. Hence,
\begin{align*}
\log \Pr_{G \sim G(n, q)}\bigl[e(G) = aN\bigr]
&= \log\Bigl(\binom{N}{aN} q^{aN} (1-q)^{(1-a)N}\Bigr)
\\
&= -N\Bigl(a \log\frac{a}{q} + (1-a) \log\frac{1-a}{1-q}\Bigr) + o(N)
\\
&= -N\, \mathrm{KL}(a \,\|\, q) + o(N).
\end{align*}
Thus the probability that $G(n, q)$ has edge density $a$ rather than the typical value $q$ decays exponentially in $N$, at rate $\mathrm{KL}(a \,\|\, q)$. The large deviation principle of \citet{chatterjee2011large} extends this from a single edge density to the whole profile of edge probabilities: informally, the probability that $G(n, q)$ is close in cut distance to a graphon $W$ is $\exp(-N h_q(W) + o(n^2))$, where $h_q(W) = \int_{[0,1]^2} \mathrm{KL}(W(x,y) \,\|\, q)\, \mathrm{d}x\, \mathrm{d}y$ as in \Cref{sec:background}. The probability of an event such as $t(H, G(n,q)) \ge r^{e(H)}$ is dominated by the graphons that satisfy this constraint with the smallest relative entropy, which gives the variational formulas in \Cref{sec:background}, and conditioned on the event, $G(n, q)$ is close to these minimisers~\citep{lubetzky2015replica}.

\paragraph{Optimal solutions of large deviation problems.}
Consider the upper-tail problem in \textbf{(P2)}, where $q < r$:
\[
\min_{W} h_q(W) \quad \text{s.t.} \quad t(H, W) = r^{e(H)}.
\]
The constant graphon $W \equiv r$ satisfies the constraint $t(H, W) = r^{e(H)}$, so it is a feasible solution, with objective value $h_q(W) = \mathrm{KL}(r \,\|\, q)$. For a $d$-regular graph $H$ with $d \ge 2$, \citet{lubetzky2015replica} characterised when this constant graphon is optimal: there is a curve $q_c(r)$ such that $W \equiv r$ is the unique optimum of \textbf{(P2)} if $q > q_c(r)$, and some graphon has a smaller objective value than $W \equiv r$ if $q < q_c(r)$. The region where the constant graphon $W \equiv r$ is optimal is called the \emph{replica-symmetric phase}, and the region where it is not optimal is called the \emph{symmetry-breaking phase}.

\section{Proven results on the learnt structures}\label{sec:proven}

The graphons found by our method motivated the following three theorems. We state them in the notation of this paper and explain how each relates to our experiments. Recall the balanced complete $k$-partite graphon $W_k$ from \Cref{sec:experiments}.

\begin{theorem}[\citealp{im2026goodman}]\label{thm:oddcycle}
Let $m \ge 3$ be odd, and let $W$ be a graphon with $t(K_2, W) = p \ge 2/3$. Then
\[
t(C_m, W) \,\ge\, p^m - p(1-p)^{m-1}.
\]
In particular, when $p = 1 - 1/k$ for some integer $k \ge 3$, the balanced complete $k$-partite graphon $W_k$ minimises the density of every odd cycle.
\end{theorem}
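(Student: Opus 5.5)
The plan is to argue spectrally. View $W$ as the Hilbert--Schmidt integral operator $T_W$ on $L^2[0,1]$, with real eigenvalues $\lambda_1 \ge |\lambda_i|$ for all $i$. The standard identity $t(C_m,W)=\sum_i \lambda_i^m$ holds for $m\ge 3$. The target bound is exactly the cycle density of the extremal spectrum. For $W_k$ with $p=1-1/k$, the spectrum is $p$ once and $-(1-p)$ with multiplicity $k-1 = p/(1-p)$. This gives $\sum\lambda_i^m = p^m - \tfrac{p}{1-p}(1-p)^m = p^m - p(1-p)^{m-1}$. So the task is to show that no admissible spectrum does better. Splitting into positive and negative parts, with $\nu_j := -\lambda_j > 0$ for the negative eigenvalues, and using that $m$ is odd, it suffices to show
\[
\lambda_1^m - \sum_j \nu_j^m \,\ge\, p^m - p(1-p)^{m-1},
\]
since the other positive eigenvalues only help.

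The constraints I will use are the following.
\begin{itemize}
\item[(a)] $\lambda_1 \ge \langle \mathbf 1, T_W\mathbf 1\rangle = p$.
\item[(b)] $\sum_i\lambda_i^2=\int W^2\le \int W = p$, hence $\sum_j\nu_j^2 \le p-\lambda_1^2$.
\item[(c)] A bound on the size of the negative eigenvalues, obtained from $0\le W\le 1$. Write $-T_W = T_{1-W} - J$, where $J=\mathbf 1\otimes\mathbf 1$ has rank one. By Weyl interlacing, the negative eigenvalues $\nu_1\ge\nu_2\ge\dots$ of $T_W$ satisfy $\nu_j \le \mu_j$, where $\mu_j$ are the eigenvalues of the graphon $1-W$.
\end{itemize}
The case $m=3$ is a sanity check. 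There, (a) and (b) together with $\sum\nu_j^3\le \nu_1\sum\nu_j^2$ should reproduce Goodman's bound $t(K_3,W)\ge p(2p-1) = p^3-p(1-p)^2$. I will first redo that case in this language, so the right auxiliary inequality for $\nu_1$ is pinned down.

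For general odd $m$, I would use $\sum_j \nu_j^m \le \nu_1^{m-2}\sum_j\nu_j^2 \le \nu_1^{m-2}(p-\lambda_1^2)$. This reduces everything to a two-variable inequality in $(\lambda_1,\nu_1)$, together with a coupling constraint between $\nu_1$ and $\lambda_1$. The natural conjecture is $\nu_1 \le 1-p$, or more likely a coupled form such as $\nu_1 \le \lambda_1-(2\lambda_1 - 1)$, tight at $W_k$. Such a coupling should come from applying (c) to the top eigenvector of $1-W$ orthogonally to the Perron direction of $W$. It can be combined with a test-function argument: take the eigenvector $g$ of $\nu_1$ and compare $\langle g,T_Wg\rangle=-\nu_1\|g\|^2$ with $\langle g, T_{W}g\rangle \ge -\langle g,T_{1-W}g\rangle + \langle g,\mathbf 1\rangle^2$. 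Given such a coupling, the remaining step is one-dimensional calculus. Minimise $\lambda_1^m - \nu_1^{m-2}(p-\lambda_1^2)$ over the feasible region, and check that for $p\ge 2/3$ the minimum is at $\lambda_1=p$, $\nu_1=1-p$. The hypothesis $p\ge 2/3$ should enter exactly here, guaranteeing that $\nu_1\le 1-p \le \lambda_1/2$ keeps the derivative in $\lambda_1$ nonnegative.

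The main obstacle is step (c): obtaining a sharp enough upper bound on the most negative eigenvalue in terms of $p$ and $\lambda_1$. The crude bound $\nu_1\le\sqrt{p-\lambda_1^2}$ only yields $p^m-(p(1-p))^{m/2}$, which is strictly weaker than the target for $p>1/2$. If a direct eigenvalue coupling proves elusive, I would fall back on induction on $m$. Write $\sum\lambda_i^{m+2} - \lambda_1^2\sum\lambda_i^m = \sum_i \lambda_i^m(\lambda_i^2-\lambda_1^2)$. The terms with $\lambda_i<0$ contribute nonnegatively for odd $m$. The terms with $\lambda_i>0$ contribute nonpositively, but with total mass controlled by (b). This would give the step $m\to m+2$ from the Goodman base case $m=3$, with $p\ge 2/3$ again used to control signs.
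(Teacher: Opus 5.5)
There is a genuine gap, and it lies in the core reduction itself, not only in the coupling you have not yet found.

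\emph{The proposed couplings are false, and no true coupling would suffice.} Since $\lambda_1\ge p$, both $\nu_1\le 1-p$ and $\nu_1\le 1-\lambda_1$ would imply $\nu_1\le 1-p$. Take $W=1-\mathbf 1_{A\times A}-\mathbf 1_{B\times B}$ with disjoint $A,B$ of measure $a=\sqrt{(1-p)/2}$, so that $t(K_2,W)=p$. Then $\mathbf 1_A-\mathbf 1_B$ is an eigenfunction with eigenvalue $-a$, and $a>1-p$ whenever $p>1/2$.

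More seriously, the minimisation step fails for \emph{any} valid constraint on $(\lambda_1,\nu_1)$. The function $F(\lambda_1,\nu_1)=\lambda_1^m-\nu_1^{m-2}(p-\lambda_1^2)$ falls below the target at pairs realised by actual graphons. Take $p=3/4$ and modify $W_4$ to equal $1-\delta$ on $(X_1\times X_2)\cup(X_2\times X_1)$ and $2\delta$ on $X_4\times X_4$.
\begin{itemize}
\item The edge density stays $3/4$. The perturbation $E$ has $\int E=0$, so the simple top eigenvalue moves only at second order: $\lambda_1=3/4+O(\delta^2)$.
\item The test function $\mathbf 1_{X_1}+\mathbf 1_{X_2}-2\cdot\mathbf 1_{X_3}$ has Rayleigh quotient $-\tfrac14-\tfrac{\delta}{12}$, so $\nu_1\ge\tfrac14+\tfrac{\delta}{12}$.
\item Hence $F(\lambda_1,\nu_1)\le p^m-p(1-p)^{m-1}-\tfrac{m-2}{3}\,p(1-p)^{m-1}\delta+O(\delta^2)$, which is strictly below the target for small $\delta$.
\end{itemize}
This already happens at $m=3$, so your sanity check cannot reproduce Goodman's bound this way either.

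What you discard is first-order information. The threefold eigenvalue $-1/4$ splits with net movement toward $0$, since the perturbation has trace $+\delta/2$ on that eigenspace. Also $\int W^2=p-\delta/4+O(\delta^2)$. The bounds $\sum_j\nu_j^m\le\nu_1^{m-2}\sum_j\nu_j^2\le\nu_1^{m-2}(p-\lambda_1^2)$ see neither effect, while $\nu_1-(1-p)$ is of order $\delta$ and $\lambda_1-p=O(\delta^2)$.

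\emph{The induction fallback fails for the same reason.} If you drop the nonnegative terms coming from negative eigenvalues, the best you obtain is $t(C_{m+2},W)\ge p^2\bigl(p^m-p(1-p)^{m-1}\bigr)$. But $p^{m+2}-p(1-p)^{m+1}-p^2\bigl(p^m-p(1-p)^{m-1}\bigr)=p(1-p)^{m-1}(2p-1)>0$ for $p>1/2$. This deficit is exactly the negative-eigenvalue contribution at $W_k$. So you would need a sharp lower bound on it, which is the same missing control of the negative spectrum.

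\emph{What your argument does prove is the regular case.} If $W$ is $p$-regular, then $\lambda_1=p$ and eigenfunctions of negative eigenvalues lie in $\mathbf 1^\perp$. There $-T_W=T_{1-W}$, and $\|T_{1-W}\|\le 1-p$ by Schur's test. So $\nu_1\le 1-p$, and the bound follows for \emph{every} $p$. The whole difficulty, and the only place where $p\ge 2/3$ can matter, is degree irregularity, which $\lambda_1$ does not measure precisely enough.

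One way to expose the relevant quantities is to write $T_W=J-T_U$ with $U=1-W$. Expanding $\mathrm{tr}(T_W^m)$ then gives $-t(C_m,U)$ plus signed products of walk counts $\langle\mathbf 1,T_U^{a}\mathbf 1\rangle$. For regular $U$ these products sum to $(1-q)^m+q^m$ with $q=1-p$, and then $t(C_m,U)\le q^{m-1}$ finishes. The irregular case is where a genuinely new inequality is required.
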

The second statement holds because $t(C_m, W_k) = p^m - p(1-p)^{m-1}$ at $p = 1 - 1/k$. Hence the learnt $C_7$ graphons at $p \in \{2/3, 4/5\}$ (top row of \Cref{fig:7cycleminimisation}), which recover the complete $k$-partite structure, are optimal.

A graph is \emph{chordal} if no cycle of length greater than $3$ appears as an induced subgraph of it.
\begin{theorem}\label{thm:chordal}
Let $H$ be a connected chordal graph whose maximal cliques all have the same size $r \ge 3$. Then, for every integer $k \ge r$, the graphon $W_k$ minimises $t(H, W)$ subject to $t(K_2, W) = 1 - 1/k$.
\end{theorem}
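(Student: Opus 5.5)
The plan is to reduce the statement to clique densities in two steps. First, a tree-decomposition (entropy) inequality bounds $t(H,W)$ below by a ratio of clique densities. Second, ratios of consecutive clique densities are bounded below by their values at $W_k$.

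\emph{Step 1 (decomposition).} Since $H$ is connected and chordal, it has a clique tree. Its nodes are the maximal cliques $Q_1,\dots,Q_m$, all copies of $K_r$, and $H$ is obtained by successively gluing $Q_{i+1}$ onto the part built so far along a separator $S_i$ that is a clique of size $s_i\in[1,r-1]$. I will prove
\[
t(H,W)\,\ge\,\frac{t(K_r,W)^m}{\prod_{i=1}^{m-1} t(K_{s_i},W)}
\]
by an entropy argument in the style of Kopparty--Rossman.
\begin{enumerate}
\item Let $\nu$ be the probability measure on $[0,1]^r$ with density $\prod_{i<j}W(x_i,x_j)/t(K_r,W)$. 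It is exchangeable, so its marginal on any $s$ coordinates is a fixed measure $\mu_s$.
\item Build a random map $Y:V(H)\to[0,1]$ by following the clique tree. Sample $Q_1$ from $\nu$; then sample each new clique from $\nu$ conditioned on its agreement with the already-fixed separator. This is a Markov random field along the tree.
\item By Gibbs' inequality,
\[
\log t(H,W)\ \ge\ \mathrm{Ent}(Y)+\mathbb{E}\sum_{e\in E(H)}\log W(Y_e).
\]
\item Both $\mathrm{Ent}(Y)$ and the edge sum split as a sum over cliques minus a sum over separators. Here I use that every edge of $H$ lies in some maximal clique, and that the edges lying in several cliques are counted via inclusion--exclusion along the tree.
\item The clique terms equal $\log t(K_r,W)$ exactly, by the choice of $\nu$.
\item Each separator term is $\mathrm{Ent}(\mu_s)+\mathbb{E}_{\mu_s}\log\prod W$, which is at most $\log t(K_s,W)$, again by Gibbs' inequality.
\end{enumerate}
Combining these gives the displayed bound. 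At $W=W_k$ everything is uniform on proper colourings, so the bound holds with equality there.

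\emph{Step 2 (clique ratios).} Write the right-hand side as
\[
\prod_{i=1}^{m}\frac{t(K_r,W)}{t(K_{s_i},W)},\qquad s_0\coloneqq 1,\ t(K_1,W)=1.
\]
Each factor telescopes as $\prod_{j=s}^{r-1}t(K_{j+1},W)/t(K_j,W)$. It then suffices to show
\[
\frac{t(K_{j+1},W)}{t(K_j,W)}\,\ge\,\frac{k-j}{k}\qquad\text{whenever } t(K_2,W)=1-\tfrac1k \text{ and } j<k .
\]
This holds with equality for $W_k$, so multiplying the factors yields $t(H,W)\ge t(H,W_k)$.

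For $j=1$ the claim is the constraint itself. For larger $j$ I would use the Moon--Moser/Khadzhiivanov--Nikiforov inequality. Writing $\tau_j=t(K_j,W)$ for clique densities, it reads
\[
\frac{\tau_{j+1}}{\tau_j}\,\ge\,\frac{1}{j-1}\Bigl(j\,\frac{\tau_j}{\tau_{j-1}}-1\Bigr).
\]
Set $a_j=\tau_{j+1}/\tau_j$. Induction from $a_1=(k-1)/k$ then gives
\[
a_j\ \ge\ \frac{j\,(k-j+1)/k-1}{j-1}\ =\ \frac{k-j}{k}.
\]
Degenerate cases, where some $\tau_j=0$, are handled separately. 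They cannot arise for $j\le k$, by the iterated bound itself or by Tur\'an-type positivity.

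\emph{Main obstacle.} I expect the main difficulty to be Step 2: confirming that the Moon--Moser-type ratio inequality holds in exactly this graphon form, including the direction of the inequality. If it does not, the fallback is Reiher's clique density theorem, which gives $\tau_r\ge (k)_r/k^r$ at $p=1-1/k$, combined with a conditional version of that theorem applied to common neighbourhoods of $s$-cliques.

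A secondary check is in Step 1: the separator terms must be controlled with the right sign, since they appear in the denominator. This is exactly why the separator bound uses Gibbs' inequality as an upper bound on the separator free energy.
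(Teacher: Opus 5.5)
The paper gives no written proof of this theorem; it only points to a Lean~4 formalisation. So I cannot say whether your route matches theirs. Judged on its own, your argument is correct and complete up to routine details.

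\textbf{Step 1.} Three points check out.
\begin{itemize}
\item The equal-size hypothesis is exactly what makes the Markov gluing consistent. The parent clique's marginal on $S_i$ is $\mu_{s_i}$, so every clique marginal is $\nu$, and the clique terms are equalities rather than mere upper bounds. With cliques of different sizes the child's marginal would not be its own Gibbs measure, and the inequality would go the wrong way.
\item The edge bookkeeping is right. The maximal cliques containing a fixed edge form a subtree with $n_e$ nodes and $n_e-1$ tree edges, so each edge is counted exactly once.
\item The separator terms do enter with the favourable sign.
\end{itemize}
There is one index slip in Step~2: with $s_0\coloneqq 1$, the product should run over $i=0,\dots,m-1$, not $i=1,\dots,m$.

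\textbf{Step 2.} The step you flagged as the main obstacle is fine. Since $\hom(K_s,G)=s!\,k_s(G)$, the Moon--Moser inequality $k_{s+1}/k_s\ge (s^2k_s/k_{s-1}-n)/(s^2-1)$ (proved in full by Khadzhiivanov--Nikiforov) is, for finite graphs, exactly your ratio inequality, with the direction you wrote.

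You can avoid both the citation and the limit argument by proving it directly for graphons. Let $\kappa_{j-1}$ be the $K_{j-1}$-weight on $(j-1)$-tuples and $d$ their common-neighbourhood measure.
\begin{itemize}
\item For $w\in[0,1]^j$ one has $\sum_{i=1}^{j}\prod_{a\ne i}w_a\le (j-1)\prod_{a}w_a+1$. The difference is multilinear, so it suffices to check the vertices of the cube.
\item Apply this with $w_a=W(c_a,y)$, integrate against the $K_j$-weight of $c$, and symmetrise over which coordinate plays the new vertex. This gives $\int\kappa_{j-1}d^2\le \frac1j\bigl((j-1)\tau_{j+1}+\tau_j\bigr)$.
\item Apply Cauchy--Schwarz to $\tau_j=\int\kappa_{j-1}d$ with weight $\kappa_{j-1}$. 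This yields $(j-1)\tau_{j+1}\tau_{j-1}\ge j\tau_j^2-\tau_j\tau_{j-1}$.
\end{itemize}
In this cleared-denominator form, positivity propagates along your induction. From $\tau_{j-1},\tau_j>0$ you get $\tau_{j+1}/\tau_j\ge (k-j)/k>0$ for $j<k$. So no separate degenerate case is needed, and $\tau_r>0$, which Step~1 requires to define $\nu$, follows as well.
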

A proof of \Cref{thm:chordal}, formalised in Lean~4, is available at \url{https://github.com/mekty2012/PureChordalGoodman}. \Cref{thm:chordal} applies to 16 graphs of our exhaustive study in \Cref{sec:experiments} other than cliques, and settles $48$ of its instances.

A graphon $W$ is \emph{bipodal} if there is a measurable set $A \subseteq [0,1]$ such that, with $B = [0,1] \setminus A$, the graphon $W$ is almost everywhere constant on each of $A \times A$, $(A \times B) \cup (B \times A)$, and $B \times B$. Recall the curve $q_c(r)$ of \citet{lubetzky2015replica} from \Cref{sec:graphon-background}.
\begin{theorem}[{\citealp[Theorem~1.5]{lim2026bipodal}}]\label{thm:bipodal}
Let $H$ be a $d$-regular graph with $d \ge 2$, and let $r_0 \in (0,1) \setminus \{(d-1)/d\}$. There is an open neighbourhood $U$ of $(q_c(r_0), r_0)$ such that, for every $(q, r) \in U$ with $q < q_c(r)$, the minimiser of $h_q(W)$ subject to $t(H, W) \ge r^{e(H)}$ is unique up to relabelling, nonconstant, and bipodal.
\end{theorem}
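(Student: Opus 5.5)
The proof proceeds by a local analysis near the phase boundary, in three stages. First, reduce the minimisation to a finite-dimensional problem. Second, show that near $(q_c(r_0), r_0)$ every minimiser is close to the constant graphon $W \equiv r$. Third, use the Euler--Lagrange equation to force bipodality.

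\textbf{Reduction and existence.} By lower semicontinuity of $h_q$ and compactness of graphon space under the cut distance, a minimiser exists. In the upper tail ($q<r$), the constraint is active at the minimiser, so $t(H,W)=r^{e(H)}$. The Lagrange condition then gives, for almost every $(x,y)$,
\[
\log\frac{W(x,y)(1-q)}{(1-W(x,y))\,q} \,=\, \lambda\, \partial_{x y} t(H,W),
\]
where $\partial_{xy}t(H,W)$ is the edge-rooted density. This follows from the Lubetzky--Zhao/Chatterjee--Dembo style variational analysis. Since $H$ is $d$-regular, the right-hand side can be expanded around a nearly constant graphon.

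\textbf{Proximity to the constant graphon.} As $(q,r)\to(q_c(r_0),r_0)$, the optimal value tends to $\mathrm{KL}(r_0\|q_c(r_0))$. Lubetzky--Zhao show that on the boundary curve, when $r_0\ne (d-1)/d$, the constant graphon is the unique minimiser; I expect this uniqueness to fail at the exceptional point, which is why it is excluded. Compactness then gives $\delta_\square(W,r)\to 0$ for any minimiser. Feeding this into the Euler--Lagrange equation upgrades the convergence to uniform closeness, because the rooted densities converge uniformly when $W$ is cut-close to a constant. Write $W = r + \varepsilon U$ with $U$ small in $L^\infty$. The degree function $d_W(x)=\int W(x,y)\,dy$ controls the leading term of $\partial_{xy} t(H,W)$. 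For a $d$-regular $H$, $t(H,W)$ can be approximated by a symmetric function of the degree profile together with a second-order correction $t(C_4,\cdot)$-type term. The fixed-point equation thus shows that $W(x,y)$ is, to leading order, a function $F(d_W(x),d_W(y))$.

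\textbf{Bipodality and uniqueness (main obstacle).} It remains to show that the degree function takes at most two values and that the resulting two-block optimum is unique. Substituting $W=F(d(x),d(y))$ back, $d(x)$ must satisfy a scalar self-consistency equation $d(x)=\Phi(d(x))$, with $\Phi$ depending on the distribution of $d$. Near the constant solution, $\Phi$ is a small perturbation of a function whose derivative structure (exploiting $r_0\neq(d-1)/d$, which makes a relevant second derivative nonzero) allows at most two fixed points in the relevant range. This is the hard step. I would first try a convexity/concavity argument on $\Phi-\mathrm{id}$, showing it has a nonvanishing second derivative near $r_0$, and then use a Taylor expansion with quantitative error bounds. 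Once $W$ is shown to be bipodal, the problem reduces to a three-parameter optimisation over the block size $a$ and the three block values. One then checks by an implicit-function argument at the bifurcation point that the nonconstant critical point is a unique, nondegenerate minimiser for $q<q_c(r)$. Uniqueness up to relabelling follows, and the solution is nonconstant because $q<q_c(r)$ places us in the symmetry-breaking phase.
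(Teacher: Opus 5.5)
\textbf{There is a genuine gap, in your proximity step.} You claim the Euler--Lagrange equation upgrades cut-distance closeness to uniform closeness, so that $W=r+\varepsilon U$ with $U$ small in $L^\infty$. This is false at these boundary points. Everything after it rests on that claim: the expansion of $t(H,W)$ in the degree profile with a $C_4$-type correction, the reduction to $W\approx F(d_W(x),d_W(y))$, the local Taylor/convexity analysis of $\Phi-\mathrm{id}$ near $r_0$, and the implicit-function argument at a ``bifurcation point''.

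\emph{Why it fails.} Put $\psi_q(x)\coloneqq\mathrm{KL}(x^{1/d}\,\|\,q)$, so $h_q(W)=\int\psi_q(W^d)$. For $d$-regular $H$, Finner's generalised H\"older inequality gives $t(H,W)\le(\int W^d)^{e(H)/d}$, so feasibility forces $\int W^d\ge r^d$. At a boundary point, $r_0^d$ is an endpoint of the linear segment of the convex minorant of $\psi_{q_c(r_0)}$, so a supporting line touches $\psi_{q_c(r_0)}$ there. Writing $x=u^d$, the sign of $\psi_q''(x)$ is that of $\frac{1}{1-u}-(d-1)\bigl(\log\frac{u}{1-u}-\log\frac{q}{1-q}\bigr)$. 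Its $u$-derivative vanishes only at $u=(d-1)/d$. So at any zero of $\psi_q''$ with $u\neq(d-1)/d$ the graph crosses its tangent line, which cannot happen at a point touched by a supporting line. Hence $\psi_{q_c(r_0)}''(r_0^d)>0$. By continuity there are $c,\delta_0>0$ such that for all $(q,r)$ near $(q_c(r_0),r_0)$ and all feasible $W$ with $\|W-r\|_\infty\le\delta_0$,
\[
h_q(W)\,\ge\,\psi_q(r^d)+\psi_q'(r^d)\Bigl(\int W^d-r^d\Bigr)+\tfrac{c}{2}\int\bigl(W^d-r^d\bigr)^2\,\ge\,h_q(r)+\tfrac{c}{2}\,\bigl\|W^d-r^d\bigr\|_2^2,
\]
using $\psi_q'(r^d)>0$ for $r>q$. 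So the constant graphon is a strict local minimiser on an $L^\infty$-ball of fixed radius, and for $q<q_c(r)$ every minimiser has $\|W-r\|_\infty\ge\delta_0$.

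Yet minimisers do converge to $r$ in $L^1$ as $(q,r)\to(q_c(r_0),r_0)$; this follows from your compactness argument plus strong convexity of $\mathrm{KL}(\cdot\,\|\,q)$ and convergence of the optimal values. Hence the minimiser deviates from $r$ by an amount of order one on a set of vanishing measure. For a bipodal $W$, this means $|A|\to0$ while the value on $A\times B$ or $A\times A$ stays away from $r$. The transition is first order, and no nonconstant branch bifurcates from the constant. Your claim that edge-rooted densities converge uniformly when $W$ is cut-close to a constant fails for the same reason. Cut-closeness does not control individual rows $W(x,\cdot)$: take a graphon equal to $r$ except on $(A\times B)\cup(B\times A)$ with $|A|$ tiny.

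\textbf{This also inverts the role of $(d-1)/d$ in your account.} The constant graphon is the unique minimiser on the entire boundary, cusp included. With $\hat\psi_q$ the convex minorant, equality throughout $\int\psi_q(W^d)\ge\int\hat\psi_q(W^d)\ge\hat\psi_q(\int W^d)\ge\hat\psi_q(r^d)=h_q(r)$ forces $W^d\equiv r^d$. What degenerates at $r_0=(d-1)/d$ is $\psi_q''(r_0^d)=0$, i.e.\ the two tangency points of the double tangent merge. That is exactly where a small-amplitude bifurcation analysis like yours would be natural, and exactly the case the theorem excludes.

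\textbf{What the non-degenerate case needs instead.} Here the small parameter is the measure $a$ of the atypical set, and the amplitude on it must be treated exactly. For instance, take a bipodal $W$ with values $w_{AA},w_{AB},w_{BB}$ and $|A|=a$. Both $t(H,W)$ and $(\int W^d)^{e(H)/d}$ equal
$w_{BB}^{e(H)}+v(H)\,a\,w_{BB}^{e(H)-d}\bigl(w_{AB}^d-w_{BB}^d\bigr)+O(a^2)$, because $2e(H)=d\,v(H)$. So such graphons saturate Finner's inequality to first order. To leading order, $w_{AB}^d$ is then governed by the \emph{other} tangency point of the double tangent, at distance of order one from $r_0^d$. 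No local analysis of your $\Phi$ near $r_0$ can see this value. The real work is to show that every minimiser has exactly this two-type structure, and that the reduced problem has a unique minimiser in the singular limit $a\to0$; an implicit-function argument at a bifurcation will not do this. (The paper itself gives no proof and cites Lim et al.\ for the result.)
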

This partially resolves the conjecture that the upper-tail large deviation problem for regular graphs has bipodal graphons as optimal solutions. The learnt upper-tail graphons for the $2$-regular graphs $K_3$, $C_4$, and $C_5$ have this two-block form (\Cref{fig:upperdeviation} and \Cref{sec:addexperiments}).

\section{Properties of the density estimator and the implicit gradient}

\begin{lemma} \label{lem:unbiased}
The estimator in~\Cref{eqn:estimator} is unbiased, i.e., $\mathbb{E}[\widehat{t}(H,f_\theta[c])] = t(H,f_\theta[c])$.
\end{lemma}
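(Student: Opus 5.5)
The plan is to combine linearity of expectation with the fact that a fixed permutation of i.i.d.\ coordinates leaves their joint law unchanged. Throughout, $\theta$ and $c$ are treated as fixed (deterministic) parameters. The only randomness is in the samples $x_i^{(k)}$, and $\mathcal{S}$ is a fixed, nonempty set of permutations. Every summand is a product of values in $[0,1]$, since $f_\theta[c]$ takes values in $[0,1]$ because of the sigmoid. All expectations are therefore finite, and linearity applies without integrability concerns.

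First, apply linearity of expectation to \eqref{eqn:estimator}:
\[
\mathbb{E}\bigl[\widehat{t}(H, f_\theta[c])\bigr] = \frac{1}{N|\mathcal{S}|}\sum_{k\in[N]}\sum_{\pi\in\mathcal{S}} \mathbb{E}\Bigl[\prod_{\{i,j\}\in E(H)} f_\theta[c]\bigl(x^{(k)}_{\pi(i)}, x^{(k)}_{\pi(j)}\bigr)\Bigr].
\]
It then suffices to show that each inner expectation equals $t(H, f_\theta[c])$, because the normalisation $1/(N|\mathcal{S}|)$ exactly cancels the number of summands.

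Fix $k$ and $\pi$, and set $y_i \coloneqq x^{(k)}_{\pi(i)}$ for $i\in[v(H)]$. The variables $x^{(k)}_1,\ldots,x^{(k)}_{v(H)}$ are i.i.d.\ $\mathrm{Unif}[0,1]$, and $\pi$ is a bijection of $[v(H)]$. Hence $(y_1,\ldots,y_{v(H)})$ is a reordering of an i.i.d.\ vector and is itself i.i.d.\ $\mathrm{Unif}[0,1]$: the product measure is invariant under coordinate permutations. The inner expectation is
\[
\mathbb{E}\Bigl[\prod_{\{i,j\}\in E(H)} f_\theta[c](y_i,y_j)\Bigr],
\]
which is exactly the defining expression \eqref{eqn:homomorphismdensitygraphon} for $t(H, f_\theta[c])$. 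Because $f_\theta[c]$ is symmetric, writing each edge as $\{i,j\}$ in either order gives the same product.

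No step is a genuine obstacle. The one point needing care is the exchangeability argument: the same sample tuple is reused across different $\pi$, so the summands are dependent. Unbiasedness, however, uses only linearity of expectation, which does not require independence, so this dependence does not matter; it affects only the variance. I would state this explicitly to pre-empt confusion.
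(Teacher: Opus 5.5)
Your proof is correct and follows the same route as the paper: linearity of expectation over the $N|\mathcal{S}|$ summands, then identifying each summand's expectation with $t(H,f_\theta[c])$ because permuting i.i.d.\ uniform coordinates preserves their joint law. Your remarks on boundedness and on the dependence between summands that reuse the same tuple are accurate, and they make explicit points the paper leaves implicit.
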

\begin{proof}[Proof]
We compute the claimed equality as follows:
\begin{align*}
\mathbb{E}\left[\widehat{t}(H, f_\theta{[c]})\right]
& {} =
\mathbb{E}\left[\frac{1}{N|\mathcal{S}|} \sum_{k \in [N], \pi \in \mathcal{S}} \left(\prod_{\{i, j\} \in E(H)} f_\theta[c]\Big(x_{\pi(i)}^{(k)}, x_{\pi(j)}^{(k)}\Big)\right)\right]
\\
& {} =
\frac{1}{N|\mathcal{S}|} \sum_{k \in [N], \pi \in \mathcal{S}}
\mathbb{E}\left[\prod_{\{i, j\} \in E(H)} f_\theta[c]\Big(x_{\pi(i)}^{(k)}, x_{\pi(j)}^{(k)}\Big)\right]
\\
& {} =
\frac{1}{N|\mathcal{S}|} \sum_{k \in [N], \pi \in \mathcal{S}}
\mathbb{E}_{x_1,\ldots,x_{v(H)} \stackrel{\mathrm{iid}}{\sim}\mathrm{Unif}[0,1]}\left[\prod_{\{i, j\} \in E(H)} f_\theta[c](x_i, x_j)\right]
\\
& {} =
\frac{1}{N{|\mathcal{S}|}}
\sum_{k \in [N], \pi \in \mathcal{S}} t(H,f_\theta{[c]}) = t(H,f_\theta{[c]}).
\end{align*}
\end{proof}

\begin{lemma}[Restatement of \Cref{lem:implicitdifferentiation}]
For all $\mathbf{x} \in [0,1]^2$,
\begin{align*}
\nabla_\theta\, f_\theta[c_\theta](\mathbf{x})
\,&=\, \frac{\partial f_\theta[c](\mathbf{x})}{\partial \theta}\bigg|_{c = c_\theta}
\,-\, \mathit{sm}'\bigl(h_\theta(\mathbf{x}) + c_\theta\bigr)
\sum_{\mathbf{y} \in \mathcal{E}} w_\theta(\mathbf{y})\, \nabla_\theta h_\theta(\mathbf{y}),
\\
\text{where }
w_\theta(\mathbf{y}) \,&\coloneqq\, \frac{\mathit{sm}'(h_\theta(\mathbf{y}) + c_\theta)}{\sum_{\mathbf{z} \in \mathcal{E}} \mathit{sm}'(h_\theta(\mathbf{z}) + c_\theta)}.
\end{align*}
\end{lemma}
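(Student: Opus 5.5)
The plan is to apply the implicit function theorem to the defining equation of the solver output and then use the chain rule. Write
\[
F(\theta, c) \coloneqq \frac{1}{|\mathcal{E}|} \sum_{\mathbf{y} \in \mathcal{E}} \mathit{sm}\bigl(h_\theta(\mathbf{y}) + c\bigr) - p,
\]
so that $f_\theta[c](\mathbf{y}) = \mathit{sm}(h_\theta(\mathbf{y}) + c)$ and, by \eqref{eqn:solver-specification}, $c_\theta$ is characterised by $F(\theta, c_\theta) = 0$. The function $F$ is continuously differentiable in $(\theta, c)$. This holds because $h_\theta$ is built from sines, the differentiable activation $\phi$, and affine maps; the $\min$/$\max$ symmetrisation acts only on the input $\mathbf{x}$, never on $\theta$.

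First, note that $\partial F/\partial c = |\mathcal{E}|^{-1}\sum_{\mathbf{z}\in\mathcal{E}} \mathit{sm}'(h_\theta(\mathbf{z}) + c)$ is strictly positive, since $\mathit{sm}' > 0$ everywhere. The implicit function theorem therefore applies at every $\theta$: the map $\theta \mapsto c_\theta$ is $C^1$ locally. The uniqueness statement following \eqref{eqn:solver-specification} ensures that this local branch coincides with the globally defined $\textsc{Solve}_\theta(\mathcal{E},p)$. The theorem gives
\[
\nabla_\theta c_\theta = -\frac{\partial_\theta F}{\partial_c F}\bigg|_{c = c_\theta} = -\frac{\sum_{\mathbf{y}\in\mathcal{E}} \mathit{sm}'(h_\theta(\mathbf{y})+c_\theta)\,\nabla_\theta h_\theta(\mathbf{y})}{\sum_{\mathbf{z}\in\mathcal{E}} \mathit{sm}'(h_\theta(\mathbf{z})+c_\theta)} = -\sum_{\mathbf{y}\in\mathcal{E}} w_\theta(\mathbf{y})\,\nabla_\theta h_\theta(\mathbf{y}).
\]
The $1/|\mathcal{E}|$ factors cancel in this quotient.

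Second, apply the total-derivative chain rule to $\theta \mapsto f_\theta[c_\theta](\mathbf{x}) = \mathit{sm}(h_\theta(\mathbf{x}) + c_\theta)$. This gives the partial derivative in $\theta$ at fixed $c = c_\theta$, plus $\partial_c f_\theta[c](\mathbf{x})|_{c=c_\theta}\,\nabla_\theta c_\theta$. Here $\partial_c f_\theta[c](\mathbf{x}) = \mathit{sm}'(h_\theta(\mathbf{x}) + c)$. Substituting the expression for $\nabla_\theta c_\theta$ yields exactly the claimed formula.

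The main point needing care is justifying the implicit function theorem: nonvanishing of $\partial_c F$ and smoothness of $F$ in $\theta$. Positivity of $\mathit{sm}'$ settles the former, and differentiability of $\phi$ settles the latter. A further subtlety is identifying the implicit local solution with the global solver output, which uniqueness handles. I will also remark that the formula is exact for the exact root $c_\theta$, not for the Newton iterate returned in practice.
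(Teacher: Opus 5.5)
Your proposal is correct and follows essentially the same route as the paper's proof: apply the chain rule to $\theta \mapsto \mathit{sm}(h_\theta(\mathbf{x}) + c_\theta)$, then implicitly differentiate the empirical constraint $\frac{1}{|\mathcal{E}|}\sum_{\mathbf{y}\in\mathcal{E}} f_\theta[c_\theta](\mathbf{y}) = p$ to solve for $\nabla_\theta c_\theta$. Your added justification (positivity of $\mathit{sm}'$ and identifying the local implicit branch with the unique solver output) goes beyond the paper, which differentiates formally; strictly, continuity of $\partial F$ needs $\phi$ to be $C^1$ rather than merely differentiable, which holds for the GELU and $\sin$ activations used.
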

\begin{proof}[Proof]
Let $\mathbf{x} \in [0,1]^2$. Note that
\begin{align*}
    \nabla_\theta\;f_\theta[c_\theta](\mathbf{x})
    & = \frac{\partial f_\theta[c](\mathbf{x})}{\partial\theta}\Big|_{c = c_\theta}
        + \frac{\partial f_\theta[c](\mathbf{x})}{\partial c}\Big|_{c = c_\theta} \nabla_\theta c_\theta
    \\
    & = \frac{\partial f_\theta[c](\mathbf{x})}{\partial\theta}\Big|_{c = c_\theta}
        + \mathit{sm}'(h_\theta(\mathbf{x}) + c_\theta) \nabla_\theta c_\theta
\end{align*}
Thus, it suffices to show that
\[
\nabla_\theta c_\theta =
 {-}  \sum_{\mathbf{y} \in \mathcal{E}}\frac{\mathit{sm}'(h_\theta(\mathbf{y}) + c_\theta)}{\sum_{\mathbf{z} \in \mathcal{E}} \mathit{sm}'(h_\theta(\mathbf{z}) + c_\theta)} \nabla_\theta\, h_\theta(\mathbf{y}).
\]
We prove the above equality by applying implicit differentiation to the following equation about $c_\theta$, which holds by the choice of $c_\theta$:
\[
\frac{1}{|\mathcal{E}|} \sum_{\mathbf{y} \in \mathcal{E}} f_{\theta}[c_\theta](\mathbf{y}) = p.
\]
First, we multiply both sides of the equation by $|\mathcal{E}|$, and differentiate them with respect to $\theta$:
\[
\left(\sum_{\mathbf{y} \in \mathcal{E}} \frac{\partial f_{\theta}[c](\mathbf{y})}{\partial \theta}\Big|_{c = c_\theta}\right)
+
\left(\sum_{\mathbf{z} \in \mathcal{E}} \frac{\partial f_{\theta}[c](\mathbf{z})}{\partial c}\Big|_{c = c_\theta} \nabla_\theta c_\theta\right)
= 0
\]
Thus,
\begin{align*}
    \nabla_\theta c_\theta
    & {} = {-} \left(\sum_{\mathbf{y} \in \mathcal{E}} \frac{\partial f_{\theta}[c](\mathbf{y})}{\partial \theta}\Big|_{c = c_\theta}\right) \Bigg/  \left(\sum_{\mathbf{z} \in \mathcal{E}} \frac{\partial f_{\theta}[c](\mathbf{z})}{\partial c}\Big|_{c = c_\theta}\right)
    \\
    & =  {-} \sum_{\mathbf{y} \in \mathcal{E}} \frac{\mathit{sm}'(h_\theta(\mathbf{y}) + c_\theta)}{\sum_{\mathbf{z} \in \mathcal{E}} \mathit{sm}'(h_\theta(\mathbf{z}) + c_\theta)} \nabla_\theta\, h_\theta(\mathbf{y}),
\end{align*}
as desired.
\end{proof}

The gradient in \Cref{lem:implicitdifferentiation} is the exact gradient for the empirical constraint~\eqref{eqn:solver-specification}. Since $\nabla_\theta c_\theta$ is a ratio of two sample averages, it is in general a biased estimator of the gradient of the offset that enforces the population constraint $t(K_2, f_\theta[c]) = p$. It is, however, a consistent estimator, i.e., it converges to this gradient in the limit of infinitely many samples.

\begin{lemma}[Consistency of the implicit gradient]\label{lem:consistency}
Assume that $h_\theta(\mathbf{y})$ and $\nabla_\theta h_\theta(\mathbf{y})$ are continuous in $(\theta, \mathbf{y})$. Fix $\theta$ and $p \in (0,1)$, and let $c^*_\theta$ be the unique $c \in \R$ with $t(K_2, f_\theta[c]) = p$. Let $\mathbf{y}_1, \mathbf{y}_2, \ldots \stackrel{\mathrm{iid}}{\sim} \mathrm{Unif}([0,1]^2)$, and for $K \ge 1$ let $c_{\theta,K} \coloneqq \textnormal{\textsc{Solve}}_\theta(\{\mathbf{y}_1, \ldots, \mathbf{y}_K\}, p)$. Then, almost surely, $c_{\theta,K} \to c^*_\theta$ and
\begin{align*}
\nabla_\theta c_{\theta,K}
&= -\frac{\sum_{k=1}^{K} \mathit{sm}'\bigl(h_\theta(\mathbf{y}_k) + c_{\theta,K}\bigr)\, \nabla_\theta h_\theta(\mathbf{y}_k)}{\sum_{k=1}^{K} \mathit{sm}'\bigl(h_\theta(\mathbf{y}_k) + c_{\theta,K}\bigr)}
\\
&\longrightarrow\;
\nabla_\theta c^*_\theta
\,=\, -\frac{\mathbb{E}\bigl[\mathit{sm}'\bigl(h_\theta(\mathbf{y}) + c^*_\theta\bigr)\, \nabla_\theta h_\theta(\mathbf{y})\bigr]}{\mathbb{E}\bigl[\mathit{sm}'\bigl(h_\theta(\mathbf{y}) + c^*_\theta\bigr)\bigr]}
\end{align*}
as $K \to \infty$, where $\mathbf{y} \sim \mathrm{Unif}([0,1]^2)$.
\end{lemma}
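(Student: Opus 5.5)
The argument has three steps: a uniform law of large numbers in the scalar variable $c$, consistency of the root $c_{\theta,K}$, and then continuity of the two sample averages in the implicit-gradient formula.

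\emph{Step 1: uniform convergence in $c$.} Define the empirical and population constraint functions
\[
F_K(c) \coloneqq \frac{1}{K}\sum_{k=1}^K \mathit{sm}\bigl(h_\theta(\mathbf{y}_k)+c\bigr),
\qquad
F(c) \coloneqq \mathbb{E}\bigl[\mathit{sm}\bigl(h_\theta(\mathbf{y})+c\bigr)\bigr] = t(K_2, f_\theta[c]).
\]
Because $h_\theta$ is continuous on the compact set $[0,1]^2$, it is bounded, say $|h_\theta|\le M$. The summands are bounded and $1/4$-Lipschitz in $c$, uniformly in $\mathbf{y}$. Take a countable dense set of values of $c$ and apply the strong law of large numbers at each point. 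An equicontinuity argument then gives, almost surely, $F_K\to F$ uniformly on every compact interval. The same reasoning applied to $\mathit{sm}'$, which is bounded and Lipschitz, gives $G_K(c)\coloneqq \frac1K\sum_k \mathit{sm}'(h_\theta(\mathbf{y}_k)+c)\to G(c)\coloneqq\mathbb{E}[\mathit{sm}'(h_\theta(\mathbf{y})+c)]$ uniformly on compacts. It also gives the vector analogue $N_K(c)\coloneqq\frac1K\sum_k \mathit{sm}'(h_\theta(\mathbf{y}_k)+c)\nabla_\theta h_\theta(\mathbf{y}_k)\to N(c)$, using that $\nabla_\theta h_\theta(\cdot)$ is continuous, hence bounded, on $[0,1]^2$.

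\emph{Step 2: $c_{\theta,K}\to c^*_\theta$.} Fix $\varepsilon>0$. Since $F$ is strictly increasing, $F(c^*_\theta-\varepsilon)<p<F(c^*_\theta+\varepsilon)$. By Step 1, for all large $K$ we have $F_K(c^*_\theta-\varepsilon)<p<F_K(c^*_\theta+\varepsilon)$. Each $F_K$ is strictly increasing, so the unique root $c_{\theta,K}$ lies in $(c^*_\theta-\varepsilon,c^*_\theta+\varepsilon)$. Applying this over a countable sequence $\varepsilon\downarrow 0$ yields almost-sure convergence. Note also that $|h_\theta|\le M$ forces $c^*_\theta$ and all $c_{\theta,K}$ into a fixed compact interval $I$. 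Indeed, $F_K(c)\le \mathit{sm}(M+c)$ and $F_K(c)\ge \mathit{sm}(-M+c)$, which confines every root.

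\emph{Step 3: limit of the gradient.} The displayed formula for $\nabla_\theta c_{\theta,K}$ is exactly the implicit-differentiation identity from the proof of \Cref{lem:implicitdifferentiation}, with $\mathcal{E}=\{\mathbf{y}_1,\ldots,\mathbf{y}_K\}$; the $1/K$ factors cancel. That formula is $-N_K(c_{\theta,K})/G_K(c_{\theta,K})$. Uniform convergence on $I$, together with continuity of $N$ and $G$ and Step 2, gives $N_K(c_{\theta,K})\to N(c^*_\theta)$ and $G_K(c_{\theta,K})\to G(c^*_\theta)$.

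The denominator limit satisfies $G(c^*_\theta)\ge \mathit{sm}'(M+|c^*_\theta|)>0$, since $\mathit{sm}'$ is decreasing in $|u|$ and $|h_\theta+c^*_\theta|\le M+|c^*_\theta|$. Hence the ratio converges to $-N(c^*_\theta)/G(c^*_\theta)$. The same bound shows $G_K$ is bounded away from zero, so the finite-sample ratio is well defined.

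It remains to identify this limit as $\nabla_\theta c^*_\theta$. Apply the implicit function theorem to $\Phi(\theta,c)=\mathbb{E}[\mathit{sm}(h_\theta(\mathbf{y})+c)]-p$. Its partial derivatives in $\theta$ and $c$ can be taken under the expectation by dominated convergence, since the integrands are continuous on the compact set $[0,1]^2$ near $\theta$. This requires the joint continuity in $(\theta,\mathbf{y})$ assumed in the statement. The resulting derivative is $-\partial_\theta\Phi/\partial_c\Phi = -N(c^*_\theta)/G(c^*_\theta)$, as claimed.

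I expect the main obstacle to be this justification of differentiating under the integral and invoking the implicit function theorem. Everything else is a routine uniform law of large numbers in the single scalar variable $c$.
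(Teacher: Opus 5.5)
Your proof is correct and follows the paper's argument: you bracket the root using monotonicity of $F_K$ and the SLLN at $c^*_\theta \pm \varepsilon$, confine all roots to a fixed compact interval, bound the denominator below using that $\mathit{sm}'$ decreases in $|u|$, and identify the limit via the implicit function theorem. The only difference is that you prove a uniform law of large numbers on compacts to handle the random argument $c_{\theta,K}$, whereas the paper applies the SLLN only at the single point $c^*_\theta$ and absorbs $c_{\theta,K} - c^*_\theta$ through the Lipschitz bound on $\mathit{sm}'$. That is the same equicontinuity estimate applied directly, so it amounts to the same argument.
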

\begin{proof}
Let $M \coloneqq \max_{\mathbf{y} \in [0,1]^2} |h_\theta(\mathbf{y})|$, which is finite because $h_\theta$ is continuous on the compact set $[0,1]^2$, and let $\ell \coloneqq \mathit{sm}^{-1}(p)$. For $c \in \R$, define
\[
F_K(c) \,\coloneqq\, \frac{1}{K} \sum_{k=1}^{K} \mathit{sm}\bigl(h_\theta(\mathbf{y}_k) + c\bigr),
\qquad
F(c) \,\coloneqq\, \mathbb{E}\bigl[\mathit{sm}\bigl(h_\theta(\mathbf{y}) + c\bigr)\bigr] \,=\, t(K_2, f_\theta[c]).
\]
Both functions are continuous and strictly increasing in $c$. If $c < \ell - M$, then $\mathit{sm}(h_\theta(\mathbf{y}) + c) < \mathit{sm}(\ell) = p$ for every $\mathbf{y}$, and if $c > \ell + M$, then $\mathit{sm}(h_\theta(\mathbf{y}) + c) > p$ for every $\mathbf{y}$. Hence $c_{\theta,K}$ and $c^*_\theta$ lie in $[\ell - M, \ell + M]$.

\emph{Convergence of the offsets.} Let $\varepsilon > 0$. Since $F$ is strictly increasing, $F(c^*_\theta - \varepsilon) < p < F(c^*_\theta + \varepsilon)$. By the strong law of large numbers, $F_K(c^*_\theta \pm \varepsilon) \to F(c^*_\theta \pm \varepsilon)$ almost surely, so almost surely $F_K(c^*_\theta - \varepsilon) < p < F_K(c^*_\theta + \varepsilon)$ for all sufficiently large $K$, and then $|c_{\theta,K} - c^*_\theta| < \varepsilon$ because $F_K$ is increasing. Taking $\varepsilon = 1/m$ for every $m \in \mathbb{N}$ gives $c_{\theta,K} \to c^*_\theta$ almost surely.

\emph{Convergence of the gradients.} The formula for $\nabla_\theta c_{\theta,K}$ is the one derived in the proof of \Cref{lem:implicitdifferentiation}. Let $g : [0,1]^2 \to \R^{m'}$ be bounded and measurable. Since $|\mathit{sm}''| \le 1$, the function $\mathit{sm}'$ is $1$-Lipschitz, so
\begin{multline*}
\Bigl\| \frac{1}{K} \sum_{k=1}^{K} \mathit{sm}'\bigl(h_\theta(\mathbf{y}_k) + c_{\theta,K}\bigr)\, g(\mathbf{y}_k) - \mathbb{E}\bigl[\mathit{sm}'\bigl(h_\theta(\mathbf{y}) + c^*_\theta\bigr)\, g(\mathbf{y})\bigr] \Bigr\|
\\
\le\, \|g\|_\infty\, |c_{\theta,K} - c^*_\theta|
+ \Bigl\| \frac{1}{K} \sum_{k=1}^{K} \mathit{sm}'\bigl(h_\theta(\mathbf{y}_k) + c^*_\theta\bigr)\, g(\mathbf{y}_k) - \mathbb{E}\bigl[\mathit{sm}'\bigl(h_\theta(\mathbf{y}) + c^*_\theta\bigr)\, g(\mathbf{y})\bigr] \Bigr\|.
\end{multline*}
The first term tends to $0$ almost surely by the convergence of the offsets, and the second by the strong law of large numbers. We apply this with $g \equiv 1$ and with $g = \nabla_\theta h_\theta$, which is bounded because it is continuous on $[0,1]^2$. The limit of the denominator is positive: since $h_\theta(\mathbf{y}) + c^*_\theta \in [\ell - 2M, \ell + 2M]$ and $\mathit{sm}'(z)$ decreases in $|z|$,
\[
\mathbb{E}\bigl[\mathit{sm}'\bigl(h_\theta(\mathbf{y}) + c^*_\theta\bigr)\bigr]
\,\ge\, \min\bigl\{\mathit{sm}'(\ell - 2M),\, \mathit{sm}'(\ell + 2M)\bigr\} \,>\, 0.
\]
Hence the ratio converges almost surely to $-\mathbb{E}[\mathit{sm}'(h_\theta(\mathbf{y}) + c^*_\theta)\, \nabla_\theta h_\theta(\mathbf{y})] / \mathbb{E}[\mathit{sm}'(h_\theta(\mathbf{y}) + c^*_\theta)]$.

Finally, this limit equals $\nabla_\theta c^*_\theta$. The map $(\theta, c) \mapsto \mathbb{E}[\mathit{sm}(h_\theta(\mathbf{y}) + c)]$ is continuously differentiable, since its partial derivatives are expectations of functions continuous in $(\theta, c, \mathbf{y})$ over a compact domain, and its derivative in $c$ is the positive denominator above. The implicit function theorem applied to $\mathbb{E}[\mathit{sm}(h_\theta(\mathbf{y}) + c^*_\theta)] = p$ then gives the stated formula for $\nabla_\theta c^*_\theta$.
\end{proof}

When we extend our constraint solver beyond edge-density constraints to $H$-density constraints, we sample
$x_{i}^{(k)}$ for $k \in [N]$ and $i \in [v(H)]$ independently from $\mathrm{Unif}[0,1]$, and estimate the $H$-density constraint by the Monte Carlo estimator
\begin{align*}
\widehat{t}(H, f_\theta[c])
&= \frac{1}{N|\mathcal{S}|}
\sum_{k \in [N]}\sum_{\pi \in \mathcal{S}}
\prod_{\{i, j\} \in E(H)} f_\theta[c]\Big(x_{\pi(i)}^{(k)}, x_{\pi(j)}^{(k)}\Big),
\end{align*}
where $\mathcal{S}$ is a set of permutations of $V(H)$.

\begin{lemma}[Implicit differentiation for the $H$-density estimator]
\label{lem:implicitdifferentiation:Hdensity}
Fix the samples \(\{x_i^{(k)} : k\in[N],\ i\in[v(H)]\}\) and permutation set \(\mathcal{S}\). Suppose \(c_\theta\) is chosen so that
\(\widehat{t}(H,f_\theta[c_\theta]) = p\), and that \(\partial_c\widehat{t}(H,f_\theta[c])\big|_{c=c_\theta} \neq 0\).
Then for all \(x \in [0,1]^2\),
\begin{align*}
\nabla_\theta f_\theta[c_\theta](x)
&= \frac{\partial f_\theta[c](x)}{\partial \theta}\Big|_{c=c_\theta}
\;-
\frac{\partial f_\theta[c](x)}{\partial c}\Big|_{c=c_\theta}
\cdot
\frac{\nabla_\theta\widehat{t}(H,f_\theta[c])\big|_{c=c_\theta}}{\partial_c\widehat{t}(H,f_\theta[c])\big|_{c=c_\theta}},
\end{align*}
where
\begin{align*}
&\nabla_\theta\widehat{t}(H,f_\theta[c])
\\
{} = &\frac{1}{N|\mathcal{S}|}
\sum_{k\in[N],\,\pi\in\mathcal{S}}
\sum_{\{i,j\}\in E(H)}
\Bigg(\prod_{\substack{\{i',j'\}\in E(H)\\ \{i',j'\}\neq\{i,j\}}}
f_\theta[c]\bigl(x_{\pi(i')}^{(k)},x_{\pi(j')}^{(k)}\bigr)\Bigg)
\nabla_\theta f_\theta[c](x_{\pi(i)}^{(k)},x_{\pi(j)}^{(k)}),
\\
&\partial_c\widehat{t}(H,f_\theta[c])
\\
{} = &\frac{1}{N|\mathcal{S}|}
\sum_{k\in[N],\,\pi\in\mathcal{S}}
\sum_{\{i,j\}\in E(H)}
\Bigg(\prod_{\substack{\{i',j'\}\in E(H)\\ \{i',j'\}\neq\{i,j\}}}
f_\theta[c]\bigl(x_{\pi(i')}^{(k)},x_{\pi(j')}^{(k)}\bigr)\Bigg)
\frac{\partial f_\theta[c](x_{\pi(i)}^{(k)},x_{\pi(j)}^{(k)})}{\partial c}.
\end{align*}
\end{lemma}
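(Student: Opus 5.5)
The plan mirrors the proof of \Cref{lem:implicitdifferentiation}, replacing the edge-density estimator by the $H$-density estimator. First I would justify that $c_\theta$ is a differentiable function of $\theta$ near the given parameter. Define $G(\theta,c) \coloneqq \widehat{t}(H,f_\theta[c]) - p$ with the samples and $\mathcal{S}$ held fixed. Then $G$ is a finite sum of finite products of the maps $(\theta,c)\mapsto f_\theta[c](\mathbf{y})=\mathit{sm}(h_\theta(\mathbf{y})+c)$. These maps are continuously differentiable whenever the activation $\phi$ is (the sigmoid and $\sin$ are smooth), so $G$ is $C^1$. We have $G(\theta,c_\theta)=0$, and by hypothesis $\partial_c G(\theta,c_\theta)\neq 0$. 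The implicit function theorem therefore gives a $C^1$ local solution map $\theta\mapsto c_\theta$ with
\[
\nabla_\theta c_\theta = -\frac{\nabla_\theta \widehat{t}(H,f_\theta[c])\big|_{c=c_\theta}}{\partial_c \widehat{t}(H,f_\theta[c])\big|_{c=c_\theta}}.
\]
Since monotonicity in $c$ makes the root unique, this local map coincides with the solver output.

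Second, I would apply the chain rule to $\theta\mapsto f_\theta[c_\theta](x)$, exactly as in the first display of the proof of \Cref{lem:implicitdifferentiation}:
\[
\nabla_\theta f_\theta[c_\theta](x)=\frac{\partial f_\theta[c](x)}{\partial\theta}\Big|_{c=c_\theta}+\frac{\partial f_\theta[c](x)}{\partial c}\Big|_{c=c_\theta}\nabla_\theta c_\theta .
\]
Substituting the expression for $\nabla_\theta c_\theta$ yields the claimed formula.

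Third, I would obtain the two displayed expressions for $\nabla_\theta\widehat{t}$ and $\partial_c\widehat{t}$. These follow by linearity of differentiation over the finite sum in $k$ and $\pi$, together with the product (Leibniz) rule over the edges of $H$. Each factor $f_\theta[c](x^{(k)}_{\pi(i)},x^{(k)}_{\pi(j)})$ is differentiated once while the remaining factors are kept.

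There is essentially no obstacle here. The only point needing care is the first step: one must observe that the nondegeneracy hypothesis $\partial_c\widehat{t}\neq0$ replaces the automatic positivity used in the edge-density case. One must also note that smoothness of $f_\theta[c]$ in $(\theta,c)$ is inherited from the architecture. Note that $\partial_c\widehat t\ge 0$ holds automatically, since every factor increases in $c$; it can only vanish in degenerate configurations, which is why the hypothesis is stated explicitly.
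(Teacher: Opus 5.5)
Your proposal is correct and follows essentially the same route as the paper. Both arguments apply the chain rule to $f_\theta[c_\theta](x)$, differentiate the empirical constraint $\widehat{t}(H,f_\theta[c_\theta])=p$ to solve for $\nabla_\theta c_\theta$ using $\partial_c\widehat{t}\neq 0$, and use linearity plus the product rule over $E(H)$ for the two displayed derivatives. Your implicit-function-theorem step, together with uniqueness of the root by strict monotonicity in $c$, makes explicit the differentiability of $\theta\mapsto c_\theta$, which the paper's proof uses tacitly.
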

\begin{proof}
Let \(x \in [0,1]^2\). By the chain rule,
\begin{align*}
\nabla_\theta\, f_\theta[c_\theta](x)
&= \frac{\partial f_\theta[c](x)}{\partial \theta}\Big|_{c=c_\theta}
 + \frac{\partial f_\theta[c](x)}{\partial c}\Big|_{c=c_\theta}\, \nabla_\theta c_\theta.
\end{align*}
Thus, it suffices to derive an expression for \(\nabla_\theta c_\theta\).

By assumption, \(c_\theta\) is chosen so that
\(\widehat{t}(H,f_\theta[c_\theta]) = p\).
Differentiating both sides with respect to \(\theta\) yields
\begin{align*}
\nabla_\theta\,\widehat{t}(H,f_\theta[c_\theta]) &= \nabla_\theta p = 0.
\end{align*}
Viewing \(\widehat{t}(H,f_\theta[c])\) as a function of \((\theta,c)\), another application of the chain rule gives
\begin{align*}
0
&= \nabla_\theta\,\widehat{t}(H,f_\theta[c])\Big|_{c=c_\theta}
 + \partial_c\widehat{t}(H,f_\theta[c])\Big|_{c=c_\theta}\, \nabla_\theta c_\theta.
\end{align*}
Since \(\partial_c\widehat{t}(H,f_\theta[c])\big|_{c=c_\theta} \neq 0\), we can solve for
\begin{align*}
\nabla_\theta c_\theta
&= {-}\,\frac{\nabla_\theta\widehat{t}(H,f_\theta[c])\big|_{c=c_\theta}}{\partial_c\widehat{t}(H,f_\theta[c])\big|_{c=c_\theta}}.
\end{align*}
Substituting this into the first display gives the claimed expression for
\(\nabla_\theta f_\theta[c_\theta](x)\).

Finally, we derive the displayed expressions for \(\nabla_\theta\widehat{t}(H,f_\theta[c])\) and
\(\partial_c\widehat{t}(H,f_\theta[c])\). For fixed \(k\in[N]\) and \(\pi\in\mathcal{S}\), write
\[
P_{k,\pi}(c) \coloneqq \prod_{\{i,j\}\in E(H)} f_\theta[c]\bigl(x_{\pi(i)}^{(k)},x_{\pi(j)}^{(k)}\bigr).
\]
Then \(\widehat{t}(H,f_\theta[c]) = \frac{1}{N|\mathcal{S}|}\sum_{k,\pi} P_{k,\pi}(c)\), and hence
\begin{align*}
&\nabla_\theta\widehat{t}(H,f_\theta[c])
\\
{} = &\frac{1}{N|\mathcal{S}|}\sum_{k\in[N],\,\pi\in\mathcal{S}} \nabla_\theta P_{k,\pi}(c)
\\
{} = &\frac{1}{N|\mathcal{S}|}\sum_{k\in[N],\,\pi\in\mathcal{S}}
\sum_{\{i,j\}\in E(H)}
\Bigg(\prod_{\substack{\{i',j'\}\in E(H)\\ \{i',j'\}\neq\{i,j\}}}
f_\theta[c]\bigl(x_{\pi(i')}^{(k)},x_{\pi(j')}^{(k)}\bigr)\Bigg)
\nabla_\theta f_\theta[c]\bigl(x_{\pi(i)}^{(k)},x_{\pi(j)}^{(k)}\bigr),
\end{align*}
where the second line is the product rule.
The expression for \(\partial_c\widehat{t}(H,f_\theta[c])\) is identical, replacing
\(\nabla_\theta f_\theta[c](\cdot)\) by \(\partial_c f_\theta[c](\cdot)\).
\end{proof}

\section{Details of experiments} \label{sec:experimentdetails}

All experiments were conducted on a single NVIDIA RTX A5000 GPU.
The execution time depends on the size of the graph, from 30 minutes for a triangle to a few hours for the Petersen graph.

In all experiments, we used a network of width $d=256$ and depth $L=6$, which was sufficient to represent sharp boundaries and structural properties.
The activation function $\phi$ was either $\sin$ or the GELU nonlinearity; we did not observe a consistent advantage of one over the other.
In all experiments, we used the increasing input-encoding scale $s(\ell)=2^{\ell-1}$. The constant-scale schedules in the ablation study are described in \Cref{sec:ablation}.

The set $\mathcal{S}$ in \Cref{eqn:estimator} was chosen according to the graph. For example, for complete graphs we used $\mathcal{S}=\{\mathrm{id}\}$, since any permutation yields the same summand.
For cycles, we used a set of representatives for the cosets $S_{V(C_k)} / \text{Aut}(C_k)$, where $\text{Aut}(C_k)$ is the automorphism group of $C_k$, which gives $|\mathcal{S}| = (k-1)! / 2$.
For the other graphs, we used a set $\mathcal{S}=\{\pi^j : j = 0, 1, \ldots, m-1\}$ generated by a permutation $\pi \in S_{V(H)}$, where $\pi$ and $m$ are chosen so that the set
\[
  \{\{\pi^j(v), \pi^j(w)\} : \{v, w\} \in E(H) \text{ and } j = 0, 1, \ldots, m-1\}
\]
equals the set of all unordered pairs of distinct vertices $\{\{v, w\} : v, w \in V(H) \text{ and } v \neq w\}$.

For training, we used the largest number of samples that fit in GPU memory, ranging from $2^{12}$ to $2^{16}$.
For large graphs such as the Petersen graph, we repeated multiple forward passes to accumulate gradients, yielding an effective sample size of $2^{16}$.
For evaluation, we used $2^{28}$ evaluation samples.
With caching of pairwise network evaluations, one training iteration takes $O(Nv(H)^2Ld^2 + N|\mathcal{S}|e(H))$ time and $O(Nv(H)^2Ld)$ memory. Larger motifs therefore reduce the feasible batch size under a fixed memory budget. The Monte Carlo estimator retains the usual $O(N^{-1/2})$ absolute-error rate for bounded integrands, but relative error becomes much harder to control when the true density is extremely small.

For optimisation, we used the Adam optimiser with $\beta_1=0.9$ and $\beta_2=0.999$.
We repeated the experiments with learning rates $\eta \in \{10^{-3}, 5 \times 10^{-4}, 10^{-4}, 5 \times 10^{-5}\}$, and set the minimum learning rate in the scheduler to 1\% of the initial learning rate.
We trained for 20,000 epochs, corresponding to five cosine-annealing cycles.

For the constraint solver, we found that applying Newton's method directly (without modification) can be unstable, owing to vanishing derivatives in many regimes.
In particular, since many optimal graphons take values close to 0 or 1, the sigmoid derivative $\mathit{sm}(x)\,(1-\mathit{sm}(x))$ is close to zero over much of the domain.
Consequently, the denominator in the Newton update can become very small, leading to excessively large updates and unstable convergence.
To address this, we use a clipped variant of Newton's method. To set the clipping range, we compute upper and lower bounds on the solution $c^*$ and use their difference as the initial clipping range.
We then gradually decrease the clipping range as the iteration proceeds, which improves stability.
The detailed algorithms are described in \Cref{alg:cap} (edge-density constraint) and \Cref{alg:cap-hom} (homomorphism-density constraint).

\begin{algorithm}
  \caption{Edge-Density Constraint Solver}\label{alg:cap}
  \begin{algorithmic}
    \REQUIRE $p \in (0, 1), y \in \R^N$
    \ENSURE $\frac{1}{N} \sum_{i=1}^N \mathit{sm}(y_i + c^*) \approxeq p$
    \STATE $c_1 \gets \mathit{sm}^{-1}(p) - \frac{1}{N} \sum_{i=1}^N y_i$ \hfill \COMMENT{Initialise with guess}
    \STATE $D \gets \max_i \{y_i\} - \min_i \{y_i\}$ \hfill \COMMENT{Compute range of $y$ values for clipping}
    \FOR{$t=1, 2, \ldots, T_{\max}$}
      \STATE $f(c_t) \gets \frac{1}{N} \sum_{i=1}^N \mathit{sm}(y_i + c_t) - p$
      \IF{$|f(c_t)| < \epsilon$}
        \STATE \textbf{break}
      \ENDIF
      \STATE $f'(c_t) \gets \frac{1}{N} \sum_{i=1}^N \mathit{sm}(y_i + c_t) (1 - \mathit{sm}(y_i + c_t))$
      \STATE $c_{t+1} \gets c_t - \mathrm{Clip}_{[-D/t, D/t]}\left(\frac{f(c_t)}{f'(c_t)}\right)$ \hfill \COMMENT{Clipping range $D/t$ shrinks with the iteration index $t$.}
    \ENDFOR
    \STATE \textbf{Return} $c_t$
  \end{algorithmic}
\end{algorithm}

\begin{algorithm}
  \caption{Homomorphism-Density Constraint Solver}\label{alg:cap-hom}
  \begin{algorithmic}
    \REQUIRE $p \in (0,1)$, $y \in \R^{B \times m}$
    \ENSURE $\frac{1}{B} \sum_{b=1}^B \prod_{j=1}^m \mathit{sm}(y_{b,j} + c^*) \approxeq p$
    \STATE $c_1 \gets \mathit{sm}^{-1}(p^{1/m}) - \frac{1}{Bm} \sum_{b=1}^B \sum_{j=1}^m y_{b,j}$ \hfill \COMMENT{Initialise with guess}
    \STATE $D \gets \max_{b,j}\{y_{b,j}\} - \min_{b,j}\{y_{b,j}\}$ \hfill \COMMENT{Compute range of $y$ values for clipping}
    \FOR{$t=1, 2, \ldots, T_{\max}$}
      \STATE $S \gets \bigl(\mathit{sm}(y_{b,j} + c_t)\bigr)_{b \in [B],\, j \in [m]}$
      \STATE $P \gets \Bigl(\prod_{j=1}^m S_{b,j}\Bigr)_{b \in [B]}$
      \STATE $f(c_t) \gets \frac{1}{B} \sum_{b=1}^B P_b - p$
      \IF{$|f(c_t)| < \epsilon$}
        \STATE \textbf{break}
      \ENDIF
      \STATE $f'(c_t) \gets \frac{1}{B} \sum_{b=1}^B P_b\,\sum_{j=1}^m (1 - S_{b,j})$
      \STATE $c_{t+1} \gets c_t - \mathrm{Clip}_{[-D/t,\, D/t]}\left(\frac{f(c_t)}{f'(c_t)}\right)$
    \ENDFOR
    \STATE \textbf{Return} $c_t$
  \end{algorithmic}
\end{algorithm}

For the variants in the ablation study, we performed hyperparameter searches for each variant.
When using constant per-layer input encoding scale $s$, we searched over $s(\ell) \in \{1, 2, 4, 8\}$.
For the strength of the regulariser, we searched over $\lambda \in \{0.01, 0.1, 1, 10, 100\}$.
For the constant learning rate, we used two values, $10^{-4}$ (low) and $10^{-3}$ (high).

When reporting the homomorphism density computed by discretisation, we use a $(2^{14} \times 2^{14})$ grid for $K_3, C_5, C_7$ and a $(2^{10} \times 2^{10})$ grid for $H_6$, and 64-bit floating point precision.

The code to replicate our experiments is available at \url{https://github.com/mekty2012/INR4GraphonVariationalProblem}.

\section{Additional experimental results}
\label{sec:addexperiments}

We consider the problem of minimising the homomorphism density of the Petersen graph under the edge-density constraint $t(K_2, W) = p$ for various values of $p$.
For minimising the Petersen homomorphism density, there are (to the best of our knowledge) no known results or conjectures, likely due to the graph size and the lack of an obvious extremal structure.
We applied our method for $p \in \{1/2, 2/3, 4/5, 4/7, 5/8, 5/7\}$; the learnt graphons are shown in \Cref{fig:petersenminimisation}.
As there is no known conjectured optimum, we compare our results to those obtained by a hand-crafted graphon motivated by~\citet{razborov2008minimal}; see \Cref{sec:handcraftedgraphons} for details.

\begin{table}[t]
    \centering
  \caption{Homomorphism densities of the Petersen graph in graphons under the edge-density constraint $t(K_2, W)=p$, found by our method and by a hand-crafted construction. \label{tab:petersenresult2}}
  {\small
    \begin{tabular}{cll}
         \toprule
         $\mathbf{p}$ & \textbf{Ours (Estimated)} & \textbf{Hand-crafted}
         \\ \midrule
         $1/2$ & $(\mathit{0.00000000}, \mathit{0.00000000})$ & $0$
         \\
         $2/3$ & $(\mathit{0.00184293}, \mathit{0.00184619})$ & $0.0020322$
         \\
         $4/5$ & $(\mathit{0.03412074}, \mathit{0.03413641})$ & $0.0340869$
         \\
         $4/7$ & $(\mathit{0.00006195}, \mathit{0.00006209})$ & $0.0001736$
         \\
         $5/8$ & $(\mathit{0.00054270}, \mathit{0.00054310})$ & $0.0007531$
         \\
         $5/7$ & $(\mathit{0.00571939}, \mathit{0.00572465})$ & $0.0059452$
         \\
         \bottomrule
    \end{tabular}
    }
\end{table}

\begin{figure}[t]
  \centering
  \begin{subfigure}{0.3\textwidth}
    \centering
    \includegraphics[width=\textwidth]{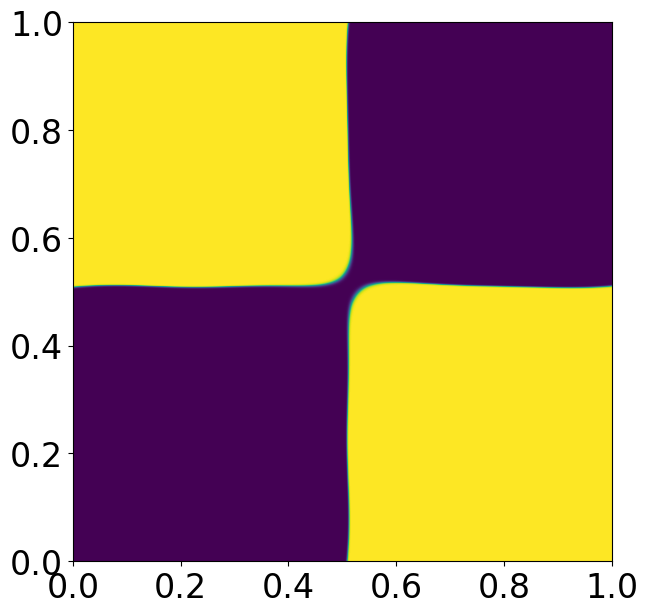}
    \caption{Found graphon, $p=1/2$.}
    \label{fig:opt-pet-50}
  \end{subfigure}
  \begin{subfigure}{0.3\textwidth}
    \centering
    \includegraphics[width=\textwidth]{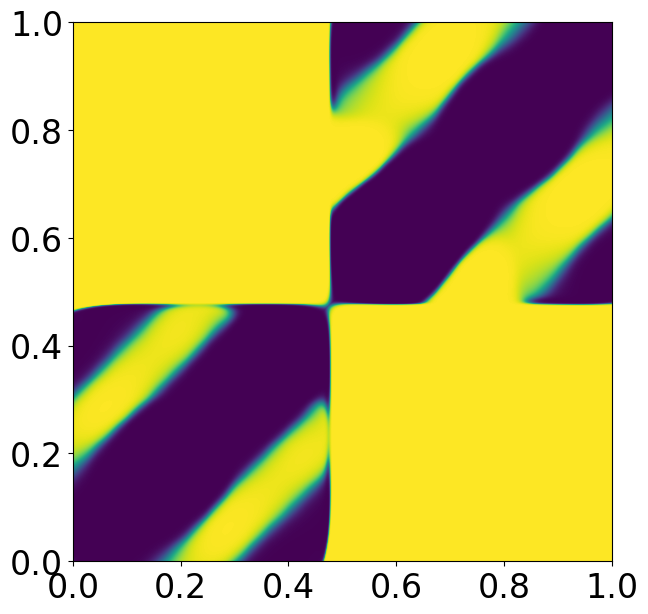}
    \caption{Found graphon, $p=2/3$.}
    \label{fig:opt-pet-67}
  \end{subfigure}
  \begin{subfigure}{0.3\textwidth}
    \centering
    \includegraphics[width=\textwidth]{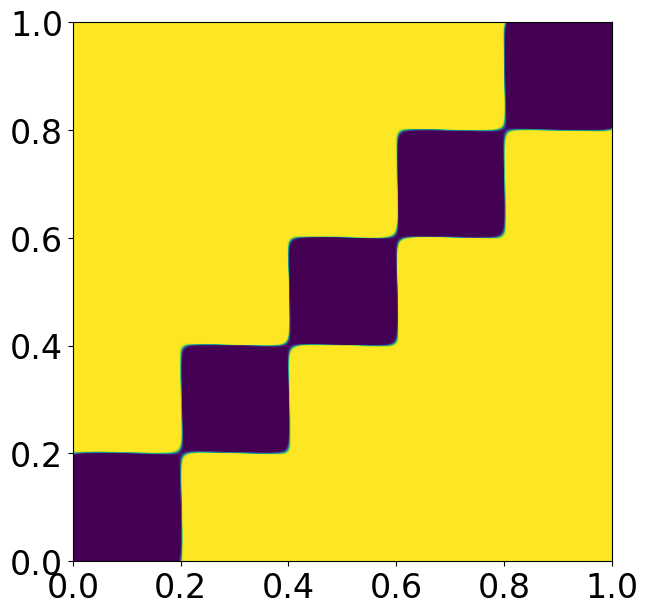}
    \caption{Found graphon, $p=4/5$.}
    \label{fig:opt-pet-80}
  \end{subfigure}

  \begin{subfigure}{0.3\textwidth}
    \centering
    \includegraphics[width=\textwidth]{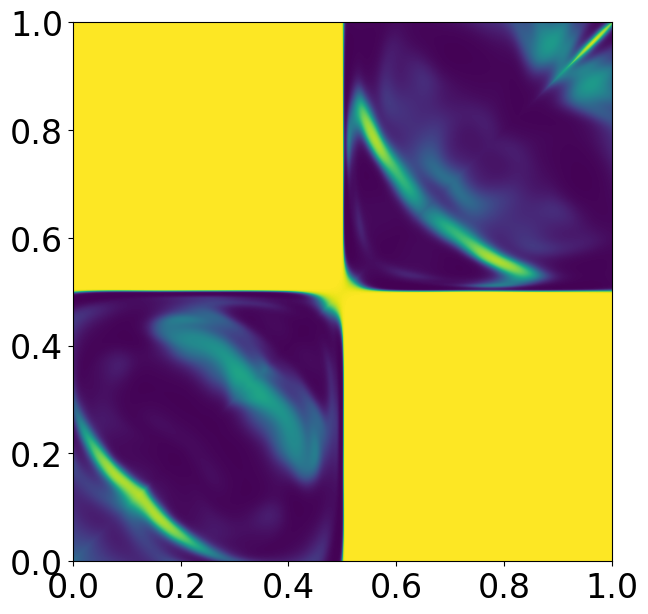}
    \caption{Found graphon, $p=4/7$.}
    \label{fig:opt-pet-57}
  \end{subfigure}
  \begin{subfigure}{0.3\textwidth}
    \centering
    \includegraphics[width=\textwidth]{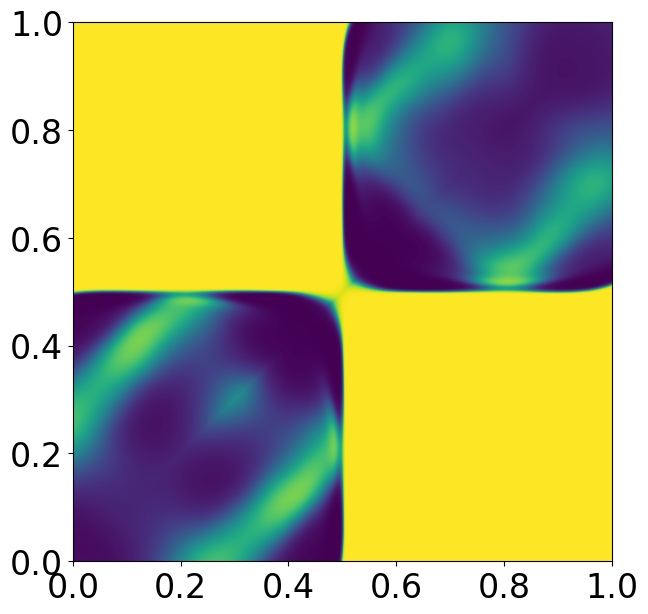}
    \caption{Found graphon, $p=5/8$.}
    \label{fig:opt-pet-62}
  \end{subfigure}
  \begin{subfigure}{0.3\textwidth}
    \centering
    \includegraphics[width=\textwidth]{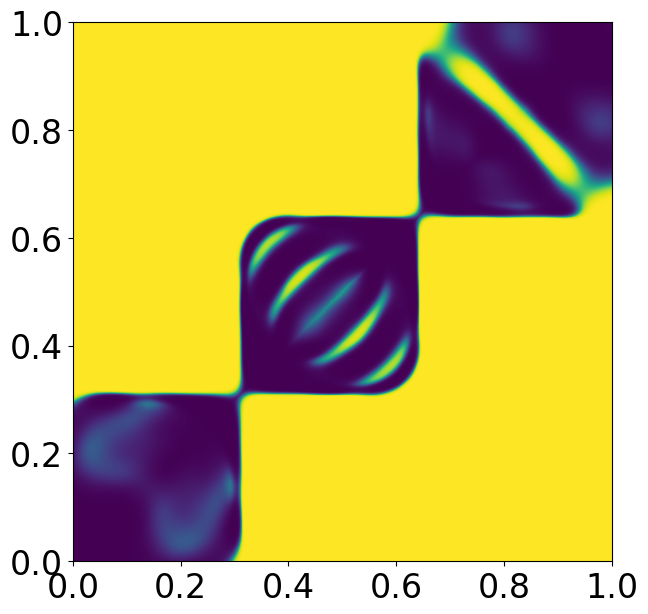}
    \caption{Found graphon, $p=5/7$.}
    \label{fig:opt-pet-71}
  \end{subfigure}
  \caption{Graphons found for the Petersen graph density minimisation under the edge-density constraint $t(K_2, W)=p$.}
  \label{fig:petersenminimisation}
\end{figure}

We also report the complete results of the variational problem discussed in \Cref{sec:experiments}; see \Cref{fig:k3upperdeviation,fig:k3lowerdeviation,fig:c4upperdeviation,fig:c5upperdeviation,fig:k13upperdeviation} for the found graphons.
We compute the results by solving the constraint after the discretisation of the learnt graphon, which gives a step graphon that satisfies the constraint exactly.
For the quantitative comparison, we include the tripartite construction $W_{\mathrm{LZ}}$ of \citet{lubetzky2015replica} only in the cases where this construction is available in the literature, namely the \emph{upper-tail} problem for regular graphs. Accordingly, \Cref{tab:upperdeviationcycle} compares $h_q(W_{\mathrm{ours}})$ with $h_q(W_{\mathrm{LZ}})$ for $H\in\{C_4,C_5\}$ under constraints $t(C_k,W)\ge r^k$. For the $K_{1,3}$ upper-tail problem, \Cref{tab:upperdeviationcycle} reports only our learnt solutions (no corresponding $W_{\mathrm{LZ}}$ construction is available in the literature for this case). For the \emph{lower-tail} problem for $K_3$, we likewise report only our learnt solutions; see \Cref{tab:lowerdeviation}.

When $H$ is $d$-regular, \citet{lubetzky2015replica} also prove a lower bound on the optimum of the upper-tail problem. Three graphs in our experiments, $K_3$, $C_4$, and $C_5$, satisfy this assumption, and the computed lower bound is reported in \Cref{tab:largedeviation,tab:upperdeviationcycle}; its values are the same for the three graphs, as the bound is determined by $d$.

\begin{table}[t]
  \centering
  \captionof{table}{Relative-entropy $h_q(W)$ at our learnt graphon ($W_{\mathrm{ours}}$) and, for $H\in\{C_4,C_5\}$, at the tripartite construction $W_{\mathrm{LZ}}$ of \citet{lubetzky2015replica}, for \textbf{(P2)} on $(q,r) \in \{0.05, 0.1\} \times \{0.4, 0.5, 0.6\}$. We also report a lower bound on the optimum for $H\in\{C_4,C_5\}$, and $W_{\mathrm{ours}}$ for $H=K_{1,3}$ (no corresponding $W_{\mathrm{LZ}}$ construction is available in the literature for this case). Lower is better.\label{tab:upperdeviationcycle}}
  {\small
  \begin{tabular}{lcccccc}
    \toprule
    & \multicolumn{3}{c}{$q=0.05$} & \multicolumn{3}{c}{$q=0.10$} \\
    \cmidrule(lr){2-4}\cmidrule(lr){5-7}
    $r$ & $0.4$ & $0.5$ & $0.6$ & $0.4$ & $0.5$ & $0.6$ \\
    \midrule
    $h_q(W_{\mathrm{LZ}})$, $H=C_4$    & $0.556057$ & $0.830366$ & $1.144945$ & $0.311239$ & $0.510826$ & $0.750684$ \\
    $h_q(W_{\mathrm{ours}})$, $H=C_4$  & $0.464162$ & $0.727364$ & $1.050351$ & $0.309184$ & $0.506141$ & $0.747042$ \\
    \midrule
    $h_q(W_{\mathrm{LZ}})$, $H=C_5$    & $0.556057$ & $0.830366$ & $1.144945$ & $0.311239$ & $0.510826$ & $0.750684$ \\
    $h_q(W_{\mathrm{ours}})$, $H=C_5$  & $0.465644$ & $0.727618$ & $1.052023$ & $0.308858$ & $0.506122$ & $0.746845$ \\
    \midrule
    Lower bound, $H\in\{C_4,C_5\}$    & $0.460284$ & $0.725284$ & $1.049172$ & $0.306561$ & $0.504311$ & $0.746006$ \\
    \midrule
    $h_q(W_{\mathrm{ours}})$, $H=K_{1,3}$    & $0.357819$ & $0.665939$ & $1.076113$ & $0.263666$ & $0.484864$ & $0.748724$ \\
    \bottomrule
  \end{tabular}
  }
\end{table}

\begin{table}[t]
  \centering
  \captionof{table}{Relative-entropy $h_q(W)$ at our learnt graphon ($W_{\mathrm{ours}}$) for \textbf{(P2)} with $H = K_3$ on $(q,r) \in \{0.7, 0.8, 0.9\} \times \{0.05, 0.1\}$. Lower is better.\label{tab:lowerdeviation}}
  {\small
  \begin{tabular}{lcccccc}
    \toprule
    & \multicolumn{3}{c}{$r=0.05$} & \multicolumn{3}{c}{$r=0.10$} \\
    \cmidrule(lr){2-4}\cmidrule(lr){5-7}
    $q$ & $0.7$ & $0.8$ & $0.9$ & $0.7$ & $0.8$ & $0.9$ \\
    \midrule
    $h_q(W_{\mathrm{ours}})$  & $0.600882$ & $0.591680$ & $0.804127$ & $0.796391$ & $1.151161$ & $1.144472$ \\
    \bottomrule
  \end{tabular}
  }
\end{table}

\begin{figure}
  \centering
  \begin{subfigure}{0.3\textwidth}
    \centering
    \includegraphics[width=\textwidth]{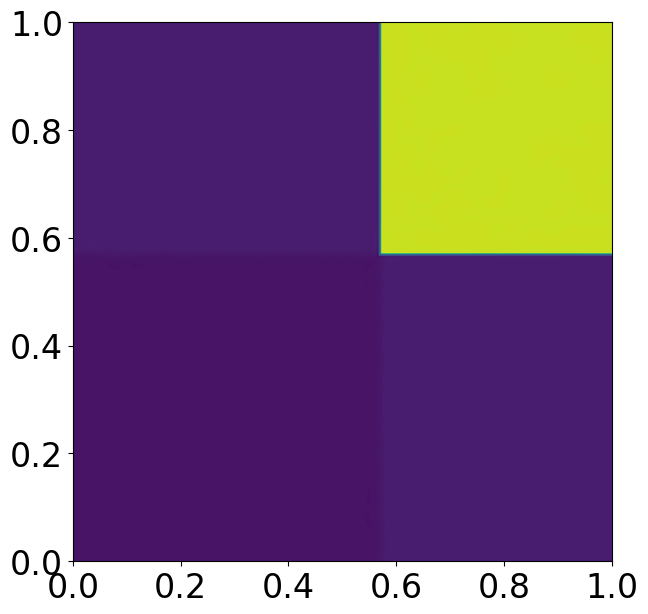}
  \end{subfigure}
  \begin{subfigure}{0.3\textwidth}
    \centering
    \includegraphics[width=\textwidth]{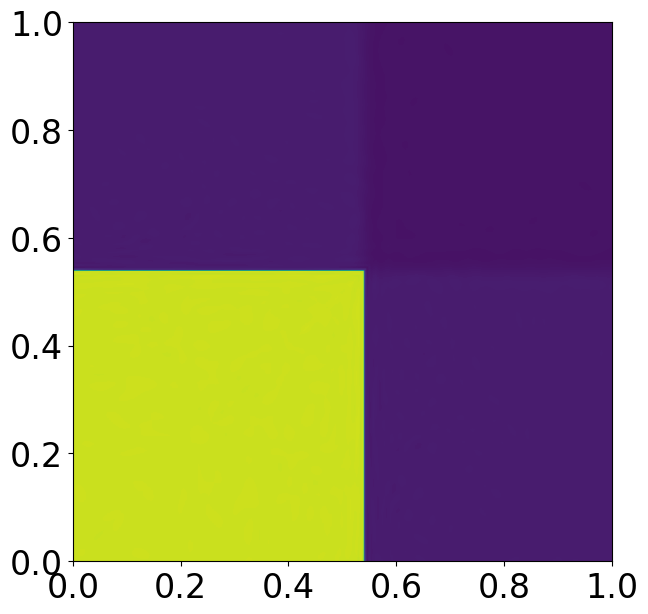}
  \end{subfigure}
  \begin{subfigure}{0.3\textwidth}
    \centering
    \includegraphics[width=\textwidth]{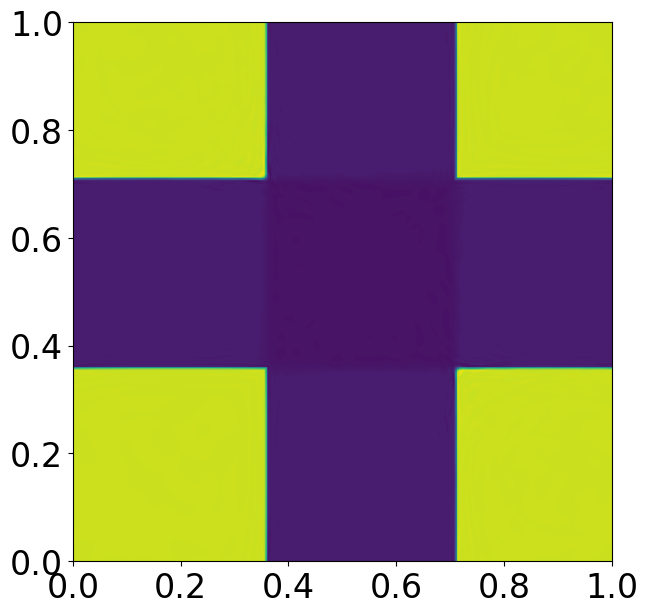}
  \end{subfigure}

  \begin{subfigure}{0.3\textwidth}
    \centering
    \includegraphics[width=\textwidth]{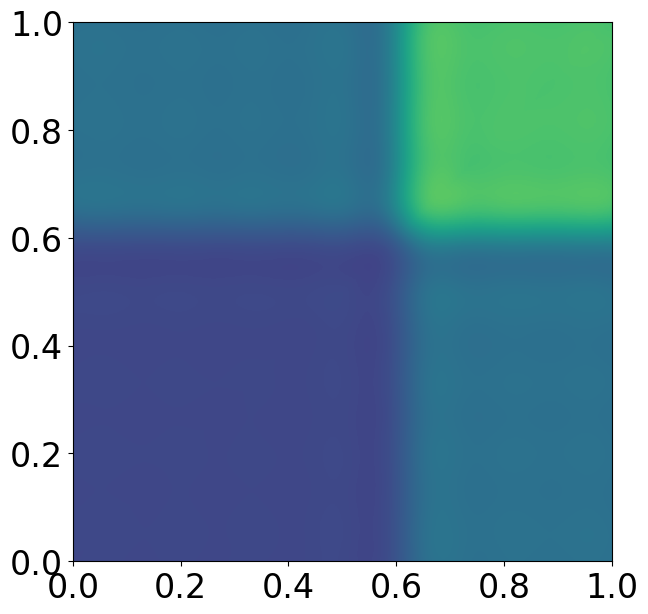}
  \end{subfigure}
  \begin{subfigure}{0.3\textwidth}
    \centering
    \includegraphics[width=\textwidth]{figures/upperdeviation_k3_10_50.png}
  \end{subfigure}
  \begin{subfigure}{0.3\textwidth}
    \centering
    \includegraphics[width=\textwidth]{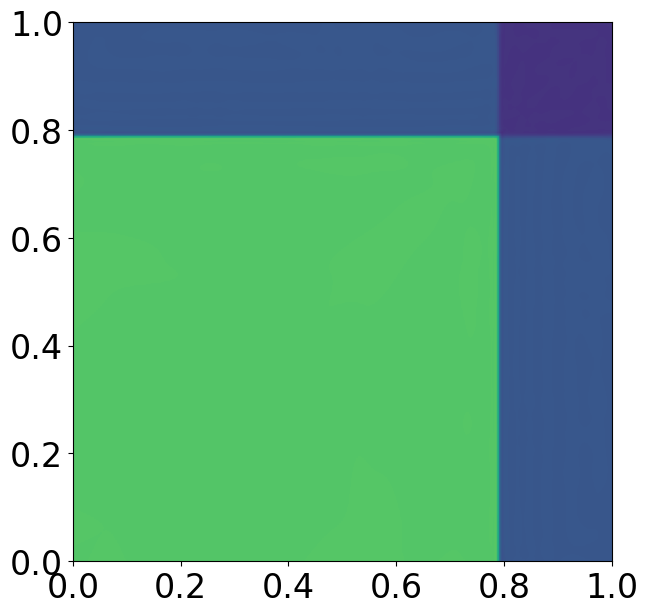}
  \end{subfigure}
  \caption{Found graphon for upper tail variational problems for $K_3$, for $q = 0.05$ (top) and $q = 0.1$ (bottom) and $r = 0.4, 0.5, 0.6$ (left to right).}
  \label{fig:k3upperdeviation}
\end{figure}

\begin{figure}
  \centering
  \begin{subfigure}{0.3\textwidth}
    \centering
    \includegraphics[width=\textwidth]{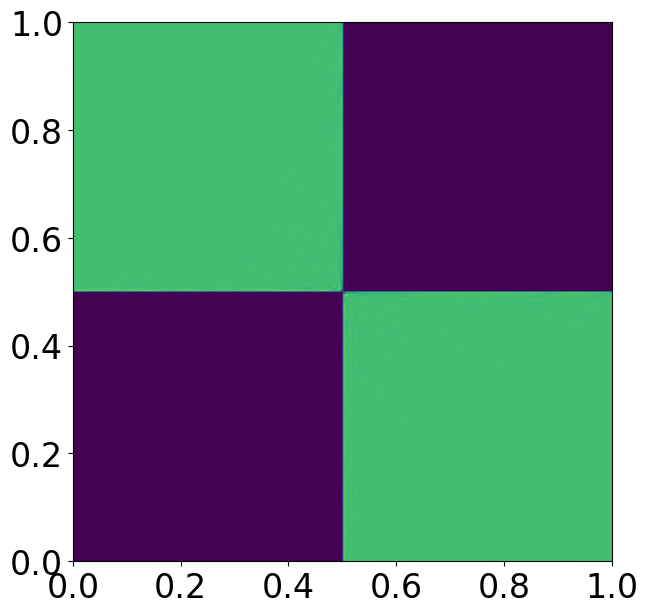}
  \end{subfigure}
  \begin{subfigure}{0.3\textwidth}
    \centering
    \includegraphics[width=\textwidth]{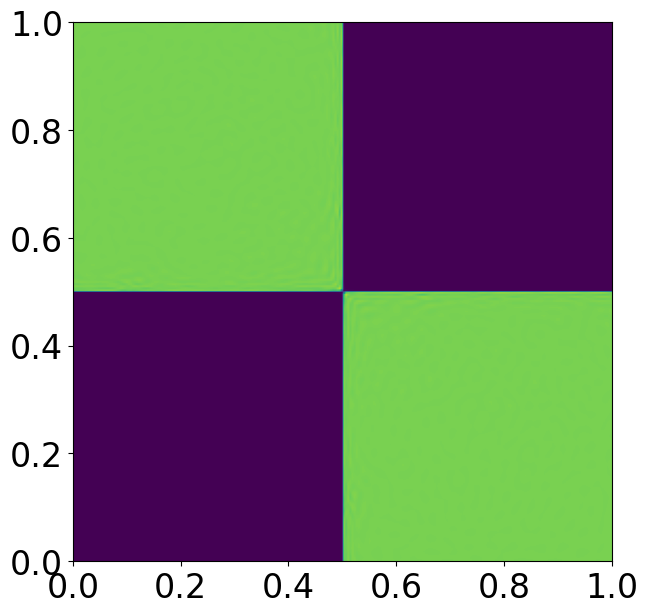}
  \end{subfigure}
  \begin{subfigure}{0.3\textwidth}
    \centering
    \includegraphics[width=\textwidth]{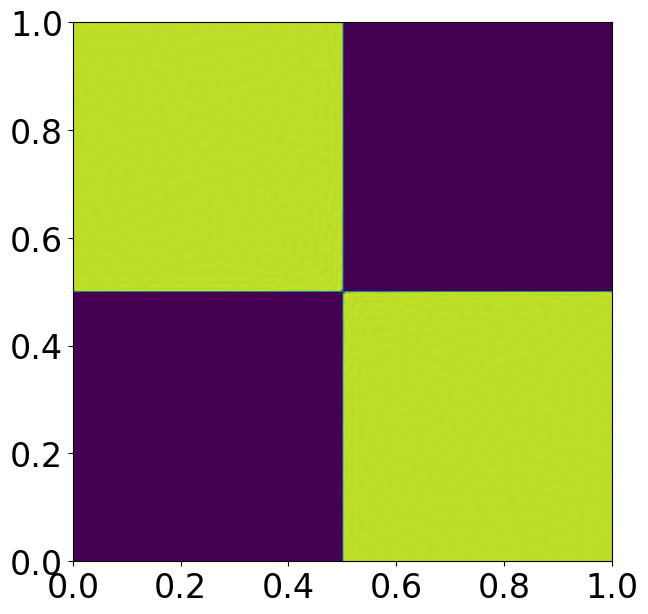}
  \end{subfigure}

  \begin{subfigure}{0.3\textwidth}
    \centering
    \includegraphics[width=\textwidth]{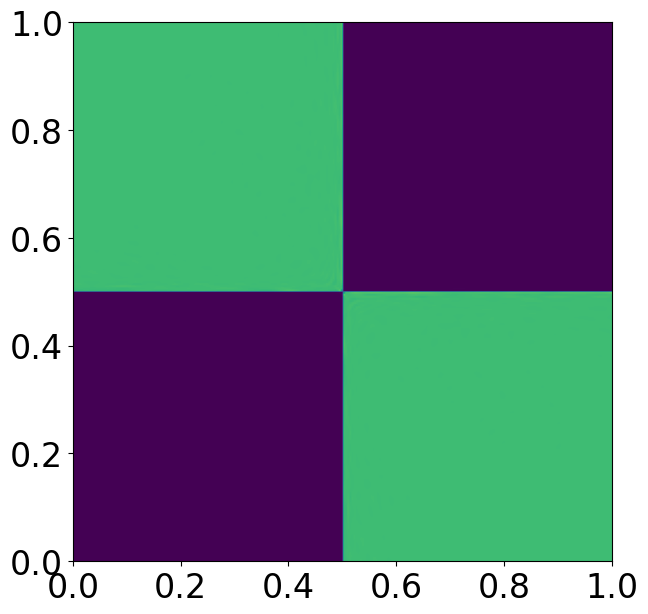}
  \end{subfigure}
  \begin{subfigure}{0.3\textwidth}
    \centering
    \includegraphics[width=\textwidth]{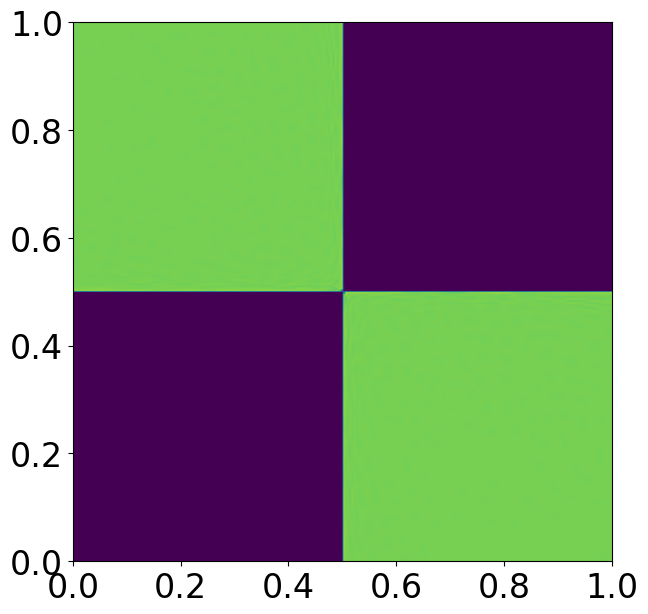}
  \end{subfigure}
  \begin{subfigure}{0.3\textwidth}
    \centering
    \includegraphics[width=\textwidth]{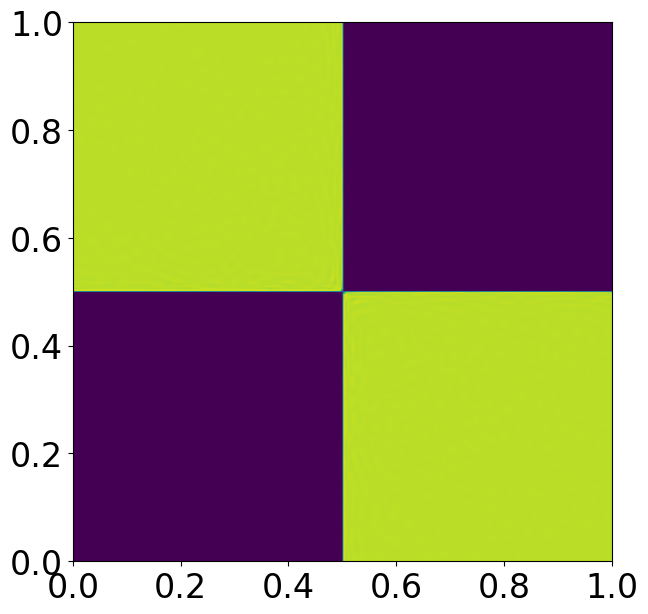}
  \end{subfigure}
  \caption{Found graphon for lower tail variational problems for $K_3$, for $r = 0.05$ (top) and $r = 0.1$ (bottom) and $q = 0.7, 0.8, 0.9$ (left to right).}
  \label{fig:k3lowerdeviation}
\end{figure}

\begin{figure}
  \centering
  \begin{subfigure}{0.3\textwidth}
    \centering
    \includegraphics[width=\textwidth]{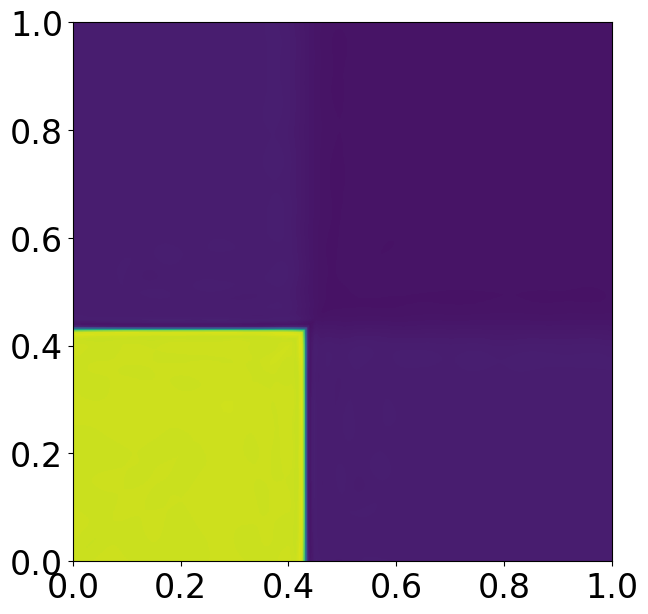}
  \end{subfigure}
  \begin{subfigure}{0.3\textwidth}
    \centering
    \includegraphics[width=\textwidth]{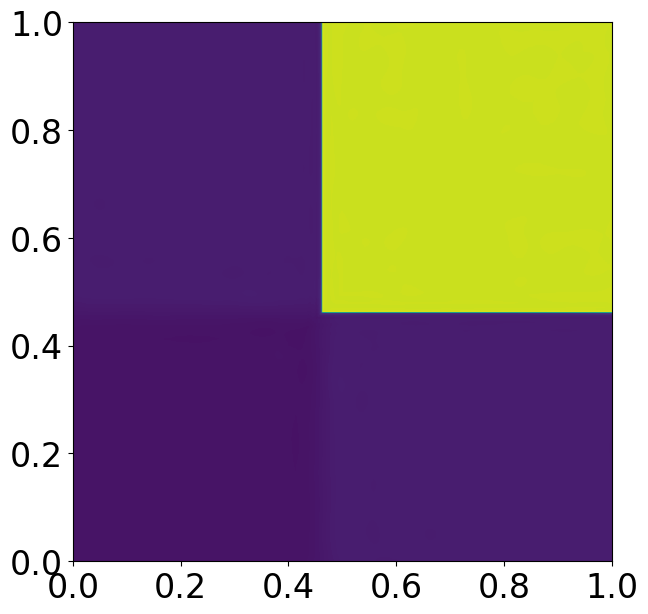}
  \end{subfigure}
  \begin{subfigure}{0.3\textwidth}
    \centering
    \includegraphics[width=\textwidth]{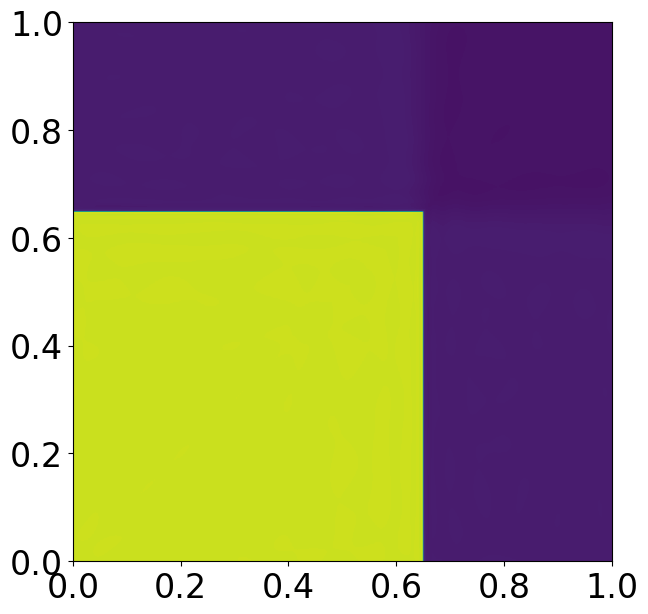}
  \end{subfigure}

  \begin{subfigure}{0.3\textwidth}
    \centering
    \includegraphics[width=\textwidth]{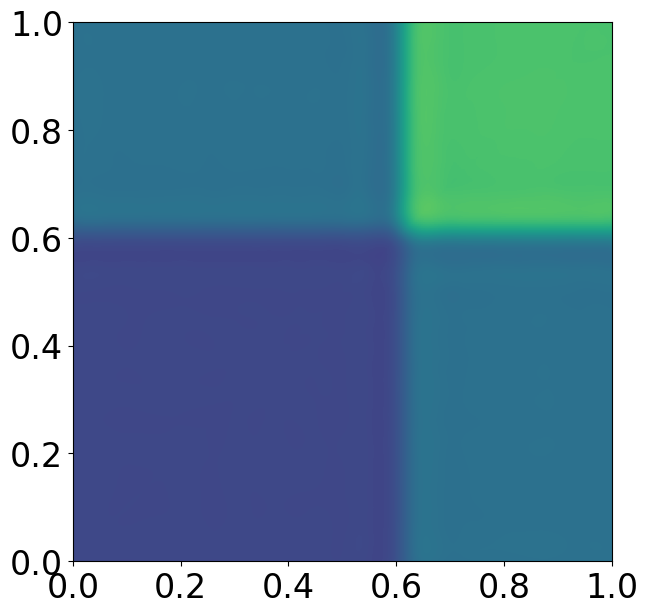}
  \end{subfigure}
  \begin{subfigure}{0.3\textwidth}
    \centering
    \includegraphics[width=\textwidth]{figures/upperdeviation_c4_10_50.png}
  \end{subfigure}
  \begin{subfigure}{0.3\textwidth}
    \centering
    \includegraphics[width=\textwidth]{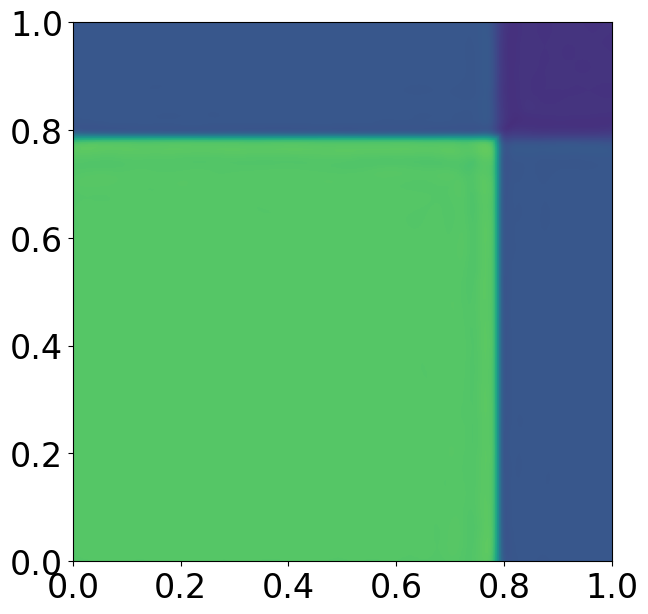}
  \end{subfigure}
  \caption{Found graphon for upper tail variational problems for $C_4$, for $q = 0.05$ (top) and $q = 0.1$ (bottom) and $r = 0.4, 0.5, 0.6$ (left to right).}
  \label{fig:c4upperdeviation}
\end{figure}

\begin{figure}
  \centering
  \begin{subfigure}{0.3\textwidth}
    \centering
    \includegraphics[width=\textwidth]{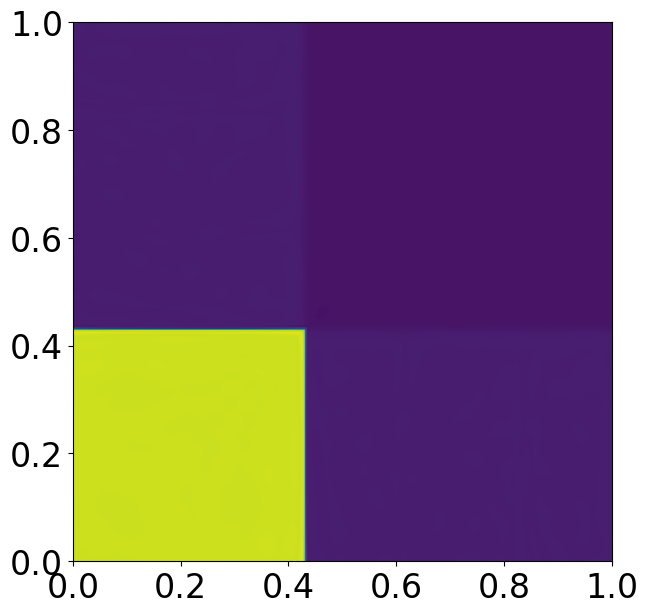}
  \end{subfigure}
  \begin{subfigure}{0.3\textwidth}
    \centering
    \includegraphics[width=\textwidth]{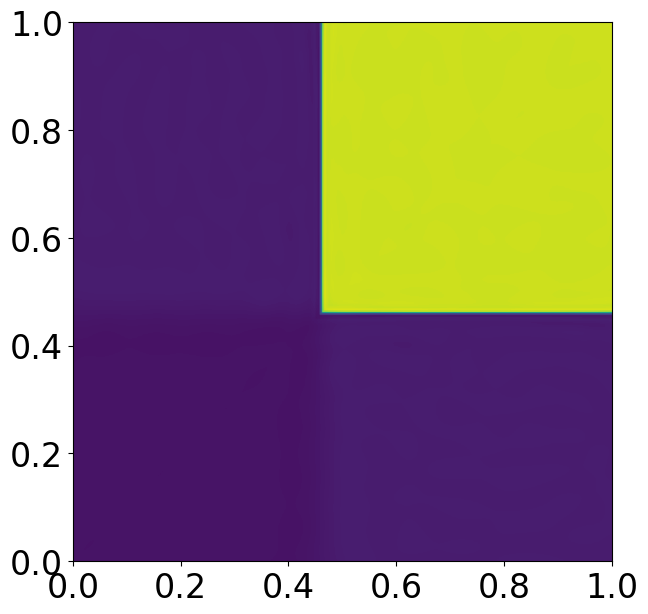}
  \end{subfigure}
  \begin{subfigure}{0.3\textwidth}
    \centering
    \includegraphics[width=\textwidth]{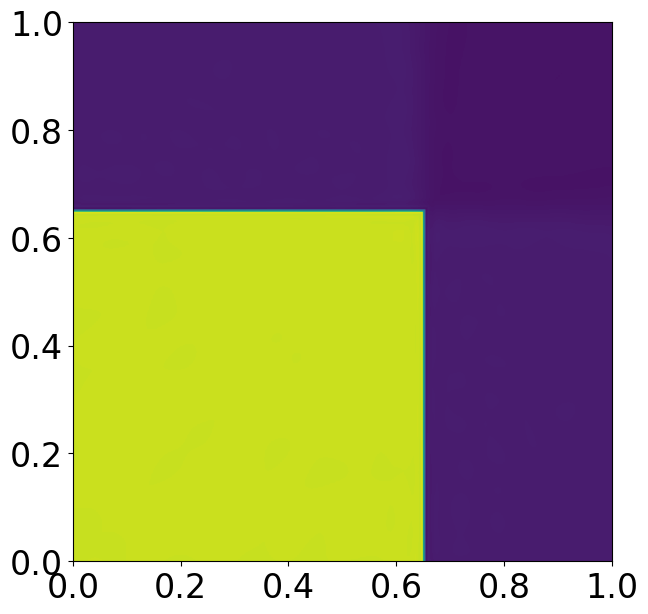}
  \end{subfigure}

  \begin{subfigure}{0.3\textwidth}
    \centering
    \includegraphics[width=\textwidth]{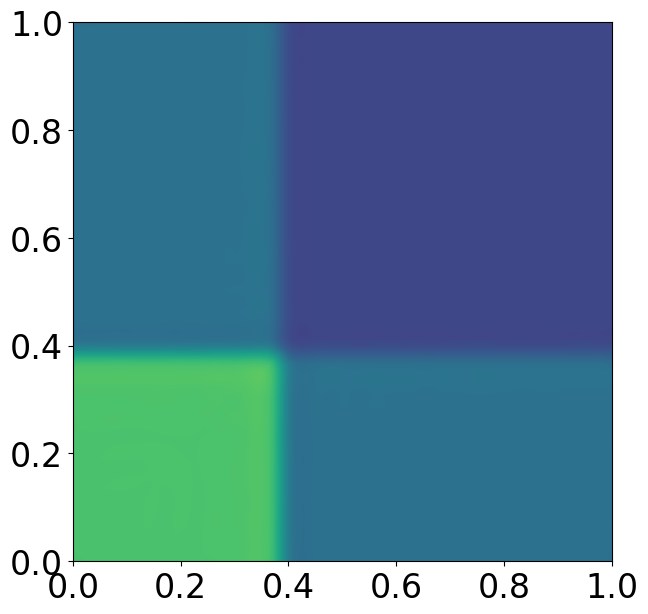}
  \end{subfigure}
  \begin{subfigure}{0.3\textwidth}
    \centering
    \includegraphics[width=\textwidth]{figures/upperdeviation_c5_10_50.png}
  \end{subfigure}
  \begin{subfigure}{0.3\textwidth}
    \centering
    \includegraphics[width=\textwidth]{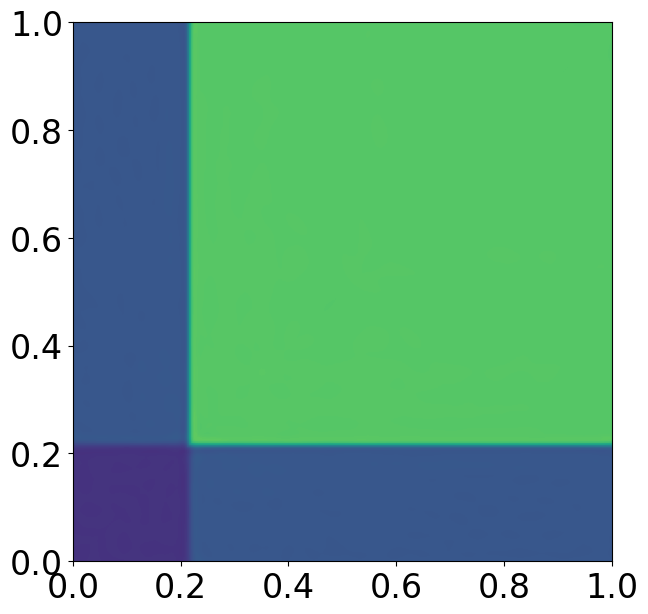}
  \end{subfigure}
  \caption{Found graphon for upper tail variational problems for $C_5$, for $q = 0.05$ (top) and $q = 0.1$ (bottom) and $r = 0.4, 0.5, 0.6$ (left to right).}
  \label{fig:c5upperdeviation}
\end{figure}

\begin{figure}
  \centering
  \begin{subfigure}{0.3\textwidth}
    \centering
    \includegraphics[width=\textwidth]{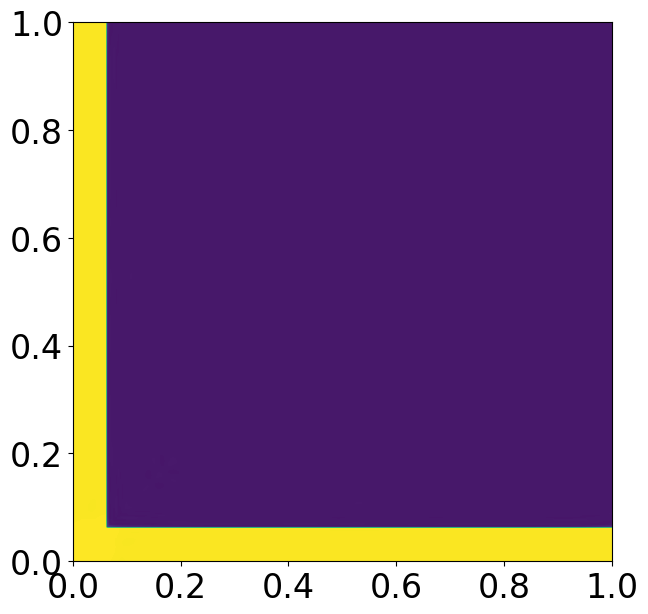}
  \end{subfigure}
  \begin{subfigure}{0.3\textwidth}
    \centering
    \includegraphics[width=\textwidth]{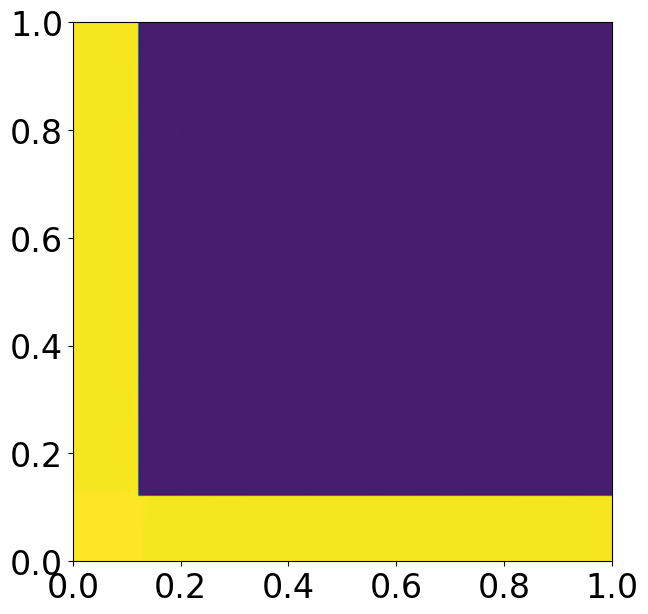}
  \end{subfigure}
  \begin{subfigure}{0.3\textwidth}
    \centering
    \includegraphics[width=\textwidth]{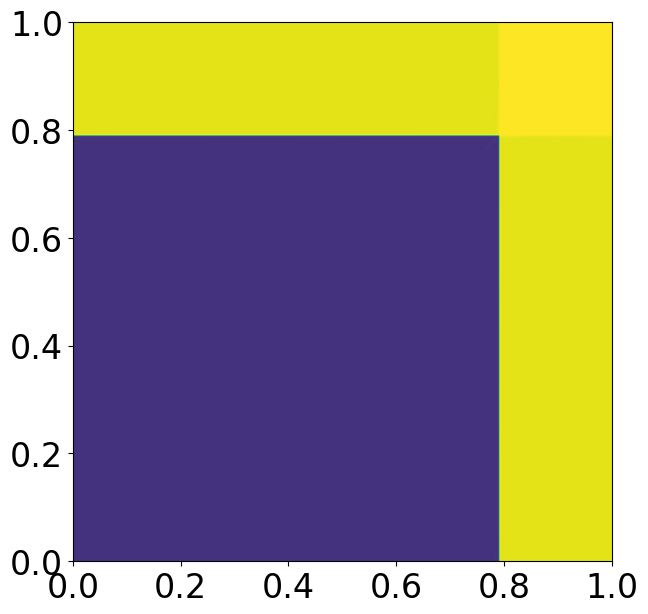}
  \end{subfigure}

  \begin{subfigure}{0.3\textwidth}
    \centering
    \includegraphics[width=\textwidth]{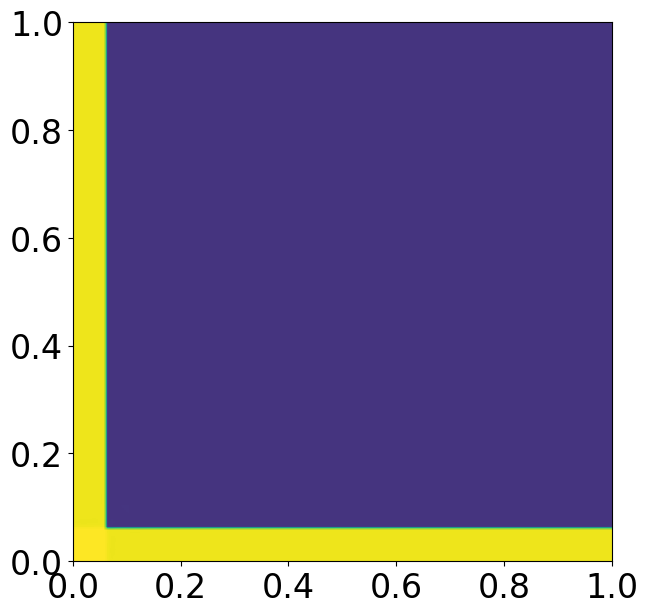}
  \end{subfigure}
  \begin{subfigure}{0.3\textwidth}
    \centering
    \includegraphics[width=\textwidth]{figures/upperdeviation_k13_10_50.png}
  \end{subfigure}
  \begin{subfigure}{0.3\textwidth}
    \centering
    \includegraphics[width=\textwidth]{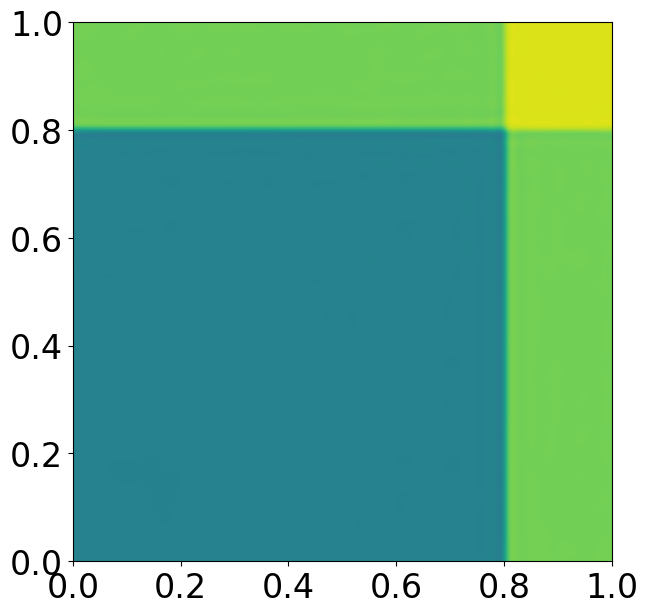}
  \end{subfigure}
  \caption{Found graphon for upper tail variational problems for $K_{1,3}$, for $q = 0.05$ (top) and $q = 0.1$ (bottom) and $r = 0.4, 0.5, 0.6$ (left to right).}
  \label{fig:k13upperdeviation}
\end{figure}

\section{Details of hand-crafted graphons}
\label{sec:handcraftedgraphons}

We consider edge densities satisfying $1 - 1/k < p < 1 - 1/(k+1)$, where $k$ is a positive integer.

The first two instances are graphons arising from the optimal solution to edge-density-constrained $K_3$ homomorphism-density minimisation.
We construct a complete $(k+1)$-partite graphon by partitioning $[0,1]$ into $(k+1)$ groups, $X_1, \ldots, X_{k+1}$, and defining
\[
  W_i(x_1, x_2) = \begin{cases}
    1 & \text{if } x_1 \in X_m,\ x_2 \in X_n,\ m \neq n
    \\
    0 & \text{if } x_1 \in X_m,\ x_2 \in X_m.
  \end{cases}
\]
We set $|X_1| = \cdots = |X_k| = x$ and $|X_{k+1}| = 1 - kx$, and choose $x$ to satisfy the edge-density constraint. This amounts to solving the quadratic equation
\[
  k(k-1)x^2 + 2k x (1-kx) = p.
\]
This quadratic has two solutions: one with $x > 1 - kx$ (denoted $W_1$) and one with $x < 1 - kx$ (denoted $W_2$). The first corresponds to the optimum for $K_3$ density minimisation.

The next too instance is constructed by deleting edges from the complete $(k+1)$-partite graphon.
We partition the interval $[0,1]$ into $k+1$ groups $X_1, \ldots, X_k, X_{k+1}$ with $|X_1| = \cdots = |X_k| = |X_{k+1}| = 1/(k+1)$, and define
\begin{align*}
  W_3(x_1, x_2) &= \begin{cases}
    \frac{(k+1)p}{k} & \text{if } x_1 \in X_m,\ x_2 \in X_n,\ m \neq n
    \\
    0 & \text{if } x_1 \in X_m,\ x_2 \in X_m,
  \end{cases}
  \\
  W_4(x_1, x_2) &= \begin{cases}
    \frac{p (k+1)^2 - k^2 - k + 2}{2} & \text{if } x_1 \in X_1,\ x_2 \in X_2\ \text{or vice versa}
    \\
    0 & \text{if } x_1 \in X_m, x_2 \in X_m,
    \\
    1 & \text{otherwise}.
  \end{cases}
\end{align*}
For $1 - 1/k < p < 1 - 1/(k+1)$, both nontrivial edge probabilities above lie in $[0,1]$: their non-negativity follows from the lower bound on $p$, while $(k+1)p/k < 1$ and $\bigl(p(k+1)^2-k^2-k+2\bigr)/2 < 1$ follow from the upper bound.

Finally, the last graphon is the constant graphon $W(x_1, x_2) = p$.

We visualised $W_1, \ldots, W_4$ for $p=0.7$ in \Cref{fig:handcraftedgraphons}.

\begin{figure}
  \centering
  \begin{subfigure}{0.22\textwidth}
      \centering
      \includegraphics[width=\twoimagewidth]{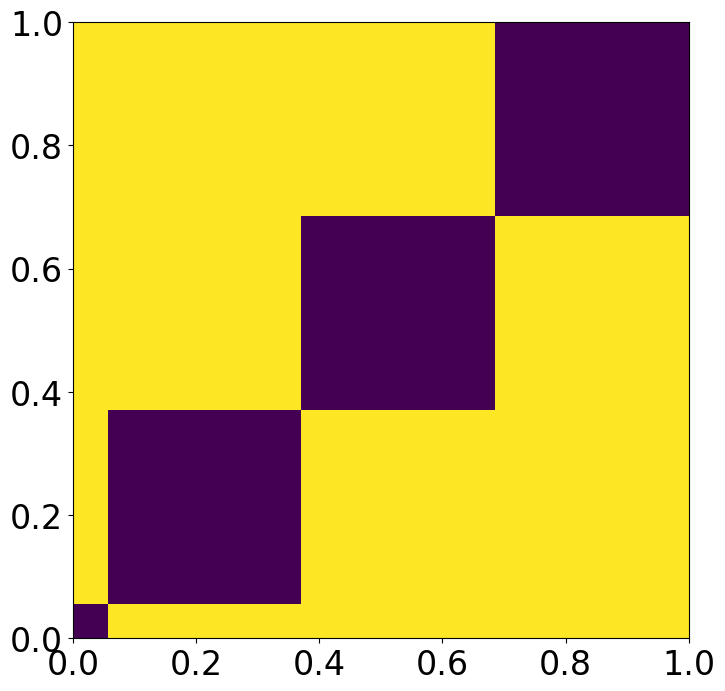}
  \end{subfigure}
  \begin{subfigure}{0.22\textwidth}
      \centering
      \includegraphics[width=\twoimagewidth]{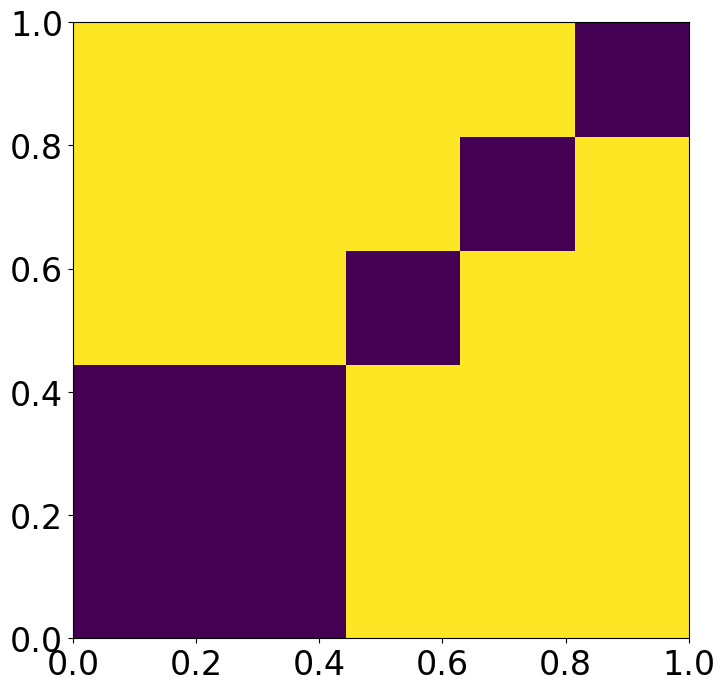}
  \end{subfigure}
  \begin{subfigure}{0.22\textwidth}
      \centering
      \includegraphics[width=\twoimagewidth]{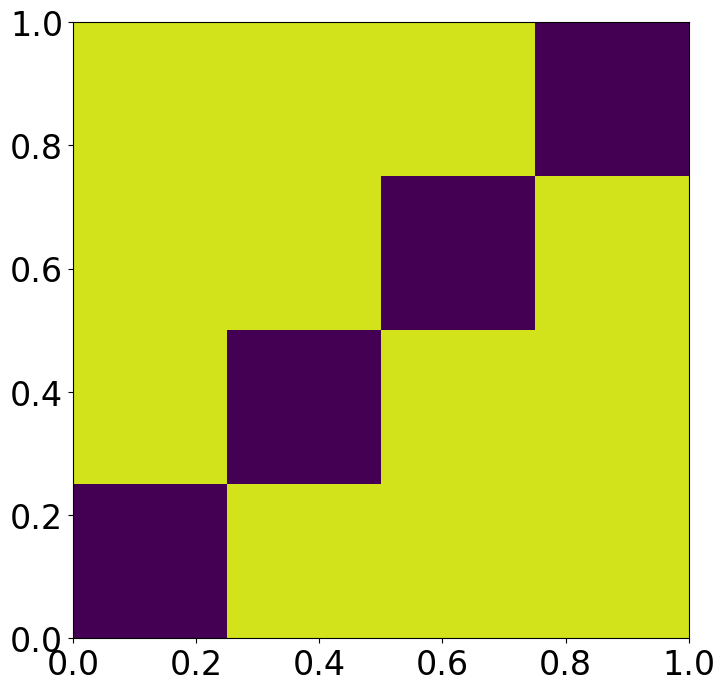}
  \end{subfigure}
  \begin{subfigure}{0.22\textwidth}
      \centering
      \includegraphics[width=\twoimagewidth]{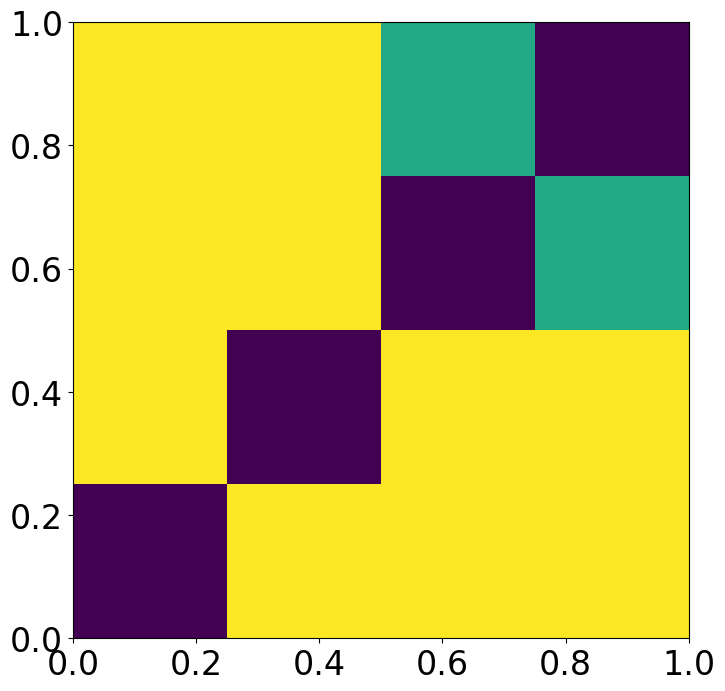}
  \end{subfigure}
  \caption{Hand-crafted graphons $W_1, W_2, W_3, W_4$ (left to right) with edge density $t(K_2, W)=0.7$.}
  \label{fig:handcraftedgraphons}
\end{figure}

\section{Ablation study, robustness, non-neural baselines, and multi-task variant}
\label{sec:ablation}

\paragraph{Ablation Study.}
We perform an ablation study on triangle-density minimisation under the constraint $t(K_2,W)=7/9$, which is not of the form $1-1/k$ and admits multiple types of optima.
We consider four variants: (1) replacing our architecture with a ResNet that uses sinusoidal input encoding only in the first layer; (2) using a constant per-layer input encoding scale $s$; (3) enforcing the constraint via a regulariser; and (4) using a constant learning rate instead of cosine-annealing scheduling.
For fair comparison, we performed hyperparameter searches for the constant value of $s$ in (2) and the strength of the regulariser in (3), and report the best run for each variant; for (4), we report the runs with a low and a high learning rate.
To separate the effect of the sinusoidal features from that of their per-layer injection, we also repeat our method and every variant with GELU in place of every sine function, keeping the per-layer injection; in the ResNet variant, the sinusoidal encoding of the first layer is kept.
As in \Cref{sec:addexperiments}, all values are computed after solving for the scalar bias $c$ on the discretised graphon.
The graphons found by these variants are shown in \Cref{fig:ablationstudy} and quantitative results in \Cref{tab:ablationstudy}.
With sinusoidal features, every variant reaches a lower $K_3$ density than with GELU. Since the GELU column also changes the activation $\phi$, we also compare with the ten runs in the robustness study below, which use sinusoidal features with $\phi = \mathrm{GELU}$: their mean $0.435661$ is also below the $0.436066$ of our method with GELU features.
The ResNet variant (1) and the constant-scale variant (2) often fail to recover sharp, straight boundaries, consistent with a bias toward smooth functions in (1) and toward high frequencies in (2).
With the penalty-based constraint (3), the learnt graphon often violates fine structural details (e.g., the size of the smallest block).
Without cosine-annealing scheduling (4), the learnt graphon either fails to capture the multi-block structure (low learning rates) or exhibits less stable boundaries (high learning rates), highlighting the benefit of alternating between higher and lower learning rates.

\begin{table}[t]
  \centering
  \caption{$K_3$ densities in graphons found by our method, ablations, and non-neural baselines for $K_3$ density minimisation under $t(K_2, W)=7/9$, with sinusoidal features or with GELU features in the per-layer encoding and activation. The optimum is $98/225 \approx 0.435556$.  \label{tab:ablationstudy}}
  {\small
  \begin{tabular}{lll}
  \toprule
                     & \multicolumn{2}{c}{$t(K_3, W)$} \\
  \cmidrule(lr){2-3}
                     & sin & GELU \\
  \midrule
  Ours                                     & $0.435712$ & $0.436066$ \\
  ResNet                                   & $0.435724$ & $0.436405$ \\
  Constant-scale $s$                       & $0.435689$ & $0.436347$ \\
  Regulariser                              & $0.437085$ & $0.437323$ \\
  Constant LR (low)                        & $0.437544$ & $0.442451$ \\
  Constant LR (high)                       & $0.436174$ & $0.436743$ \\
  \midrule
  SBM (256 blocks, fixed sizes)            & \multicolumn{2}{c}{$0.435817$} \\
  SBM (5 blocks, trainable sizes)          & \multicolumn{2}{c}{$0.470508$} \\
  SBM (64 blocks, trainable sizes)         & \multicolumn{2}{c}{$0.470508$} \\
  Decision Tree                            & \multicolumn{2}{c}{$0.455558$} \\
  Gradient Boosted Decision Tree           & \multicolumn{2}{c}{$0.454485$} \\
  \bottomrule
  \end{tabular}
  }
\end{table}

\begin{figure}[t]
  \centering
  \begin{subfigure}{0.25\textwidth}
    \centering
    \includegraphics[width=\textwidth]{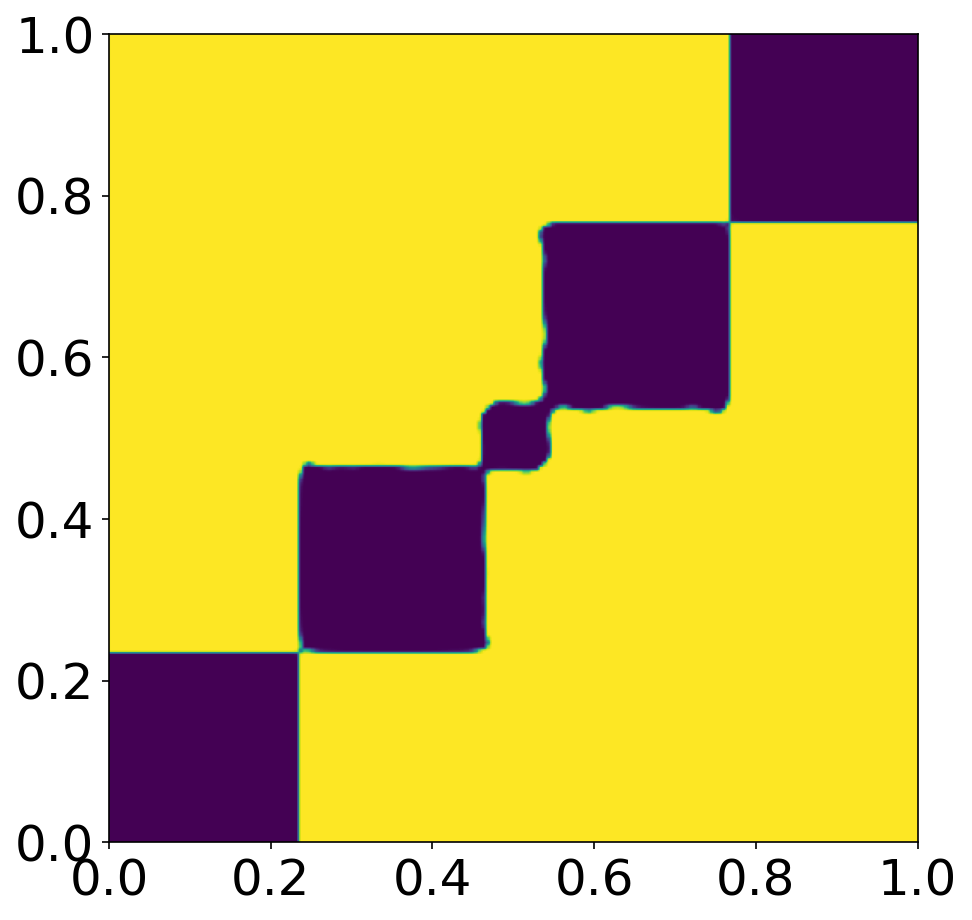}
    \caption{Our method.}
    \label{fig:baseline}
  \end{subfigure} \hfill
  \begin{subfigure}{0.25\textwidth}
    \centering
    \includegraphics[width=\textwidth]{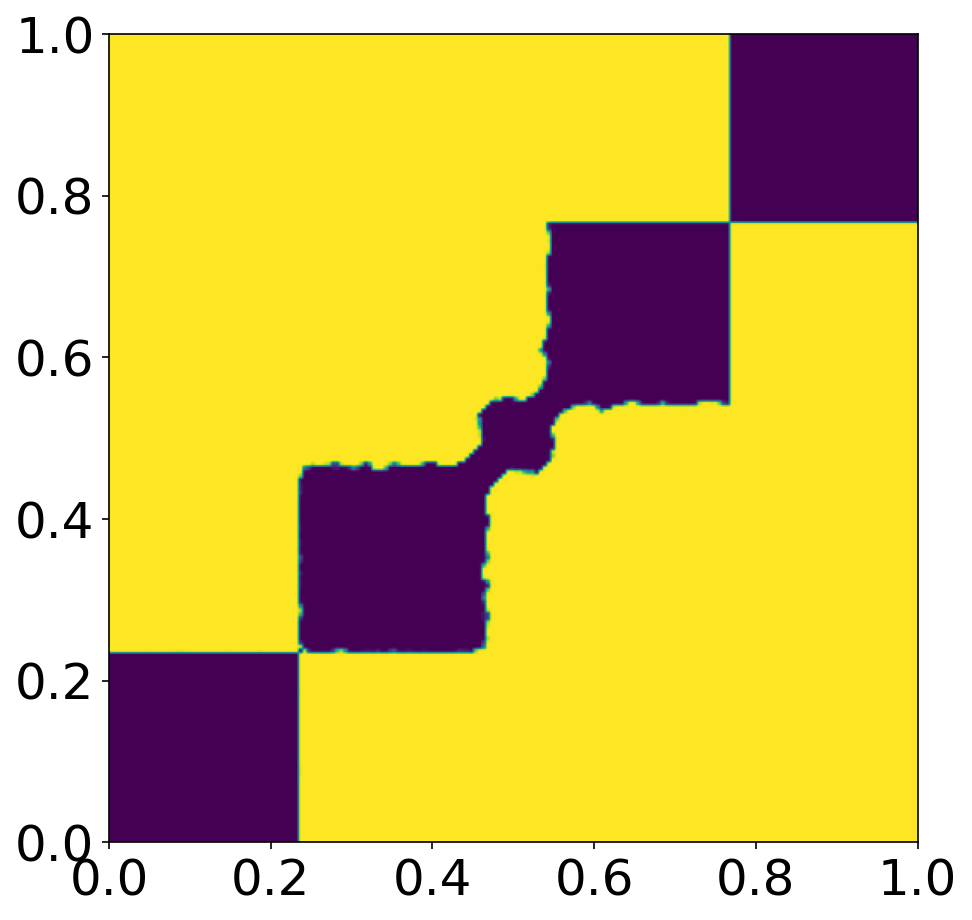}
    \caption{ResNet variant.}
    \label{fig:resnet}
  \end{subfigure} \hfill
  \begin{subfigure}{0.25\textwidth}
    \centering
    \includegraphics[width=\textwidth]{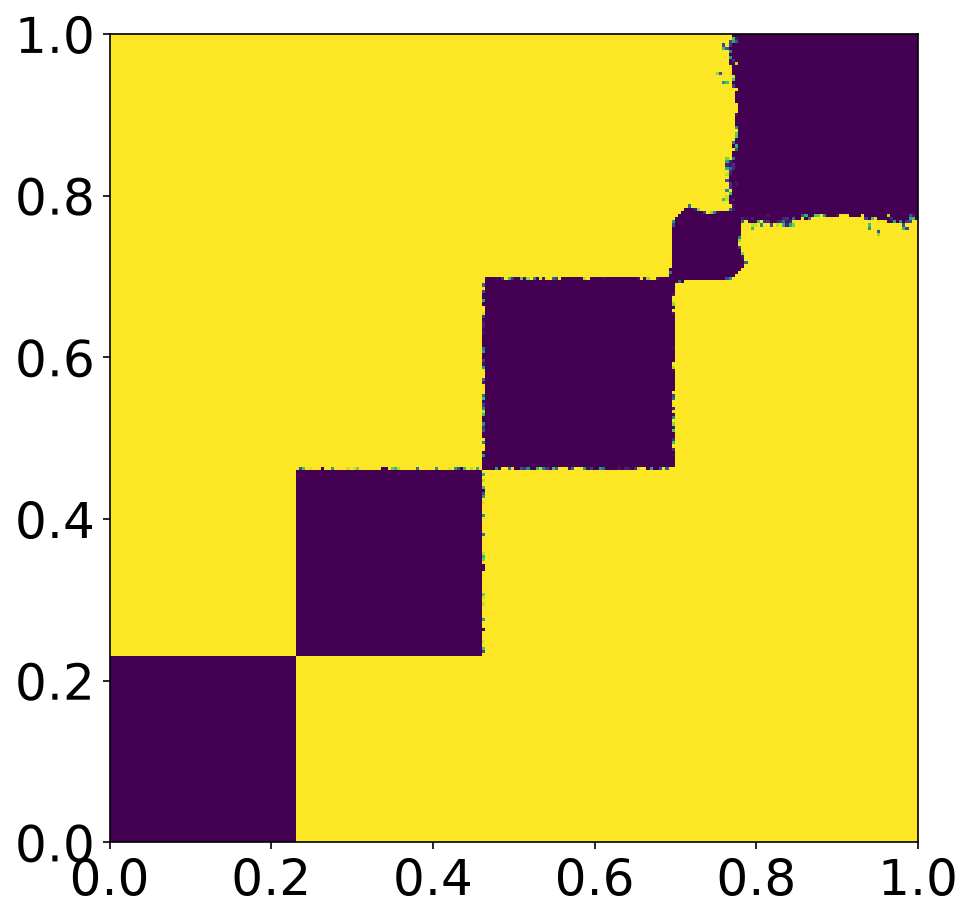}
    \caption{Constant-scale~$s$.}
    \label{fig:const2}
  \end{subfigure}

  \begin{subfigure}{0.25\textwidth}
    \centering
    \includegraphics[width=\textwidth]{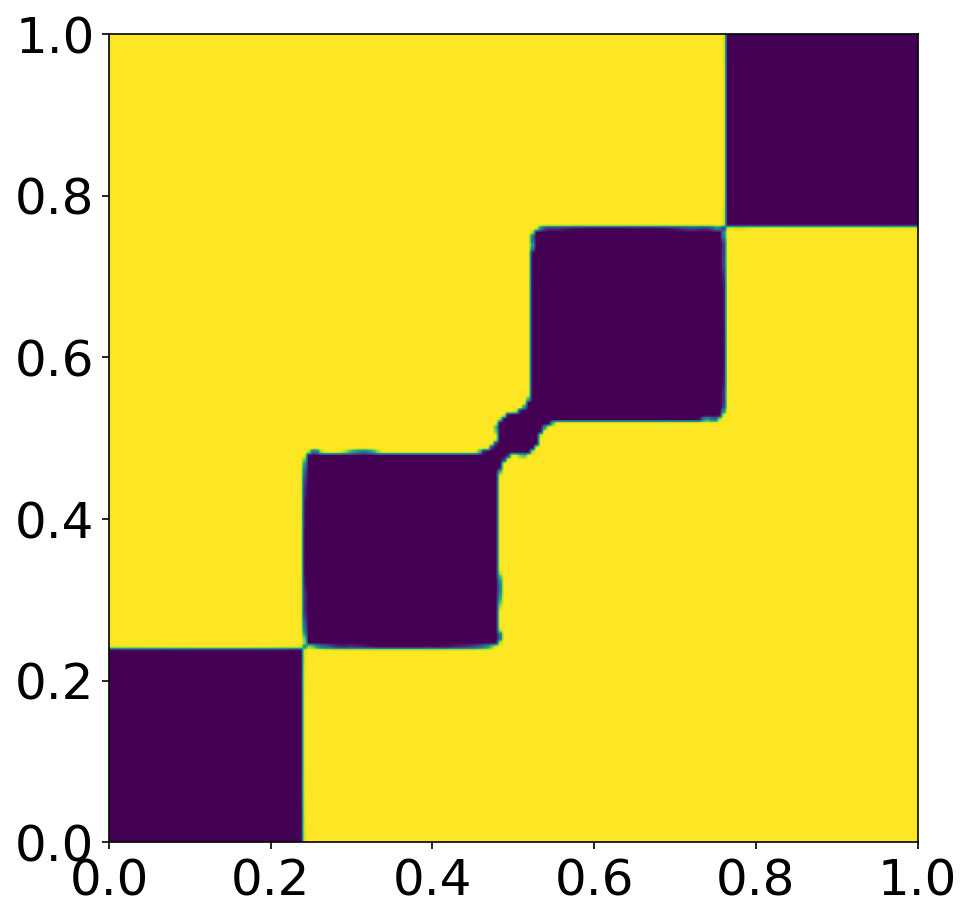}
    \caption{Regulariser.}
    \label{fig:noconsthighreg}
  \end{subfigure} \hfill
  \begin{subfigure}{0.25\textwidth}
    \centering
    \includegraphics[width=\textwidth]{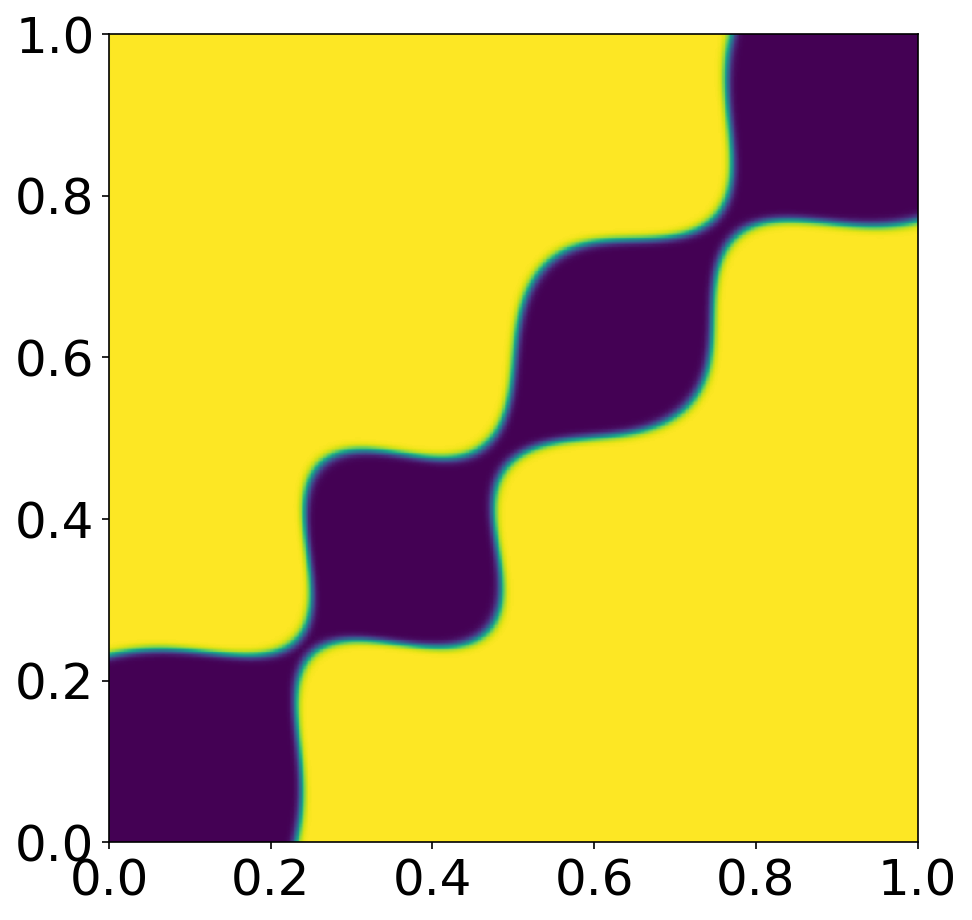}
    \caption{Const.~LR (low).}
    \label{fig:noscheduling-lowlr}
  \end{subfigure} \hfill
  \begin{subfigure}{0.25\textwidth}
    \centering
    \includegraphics[width=\textwidth]{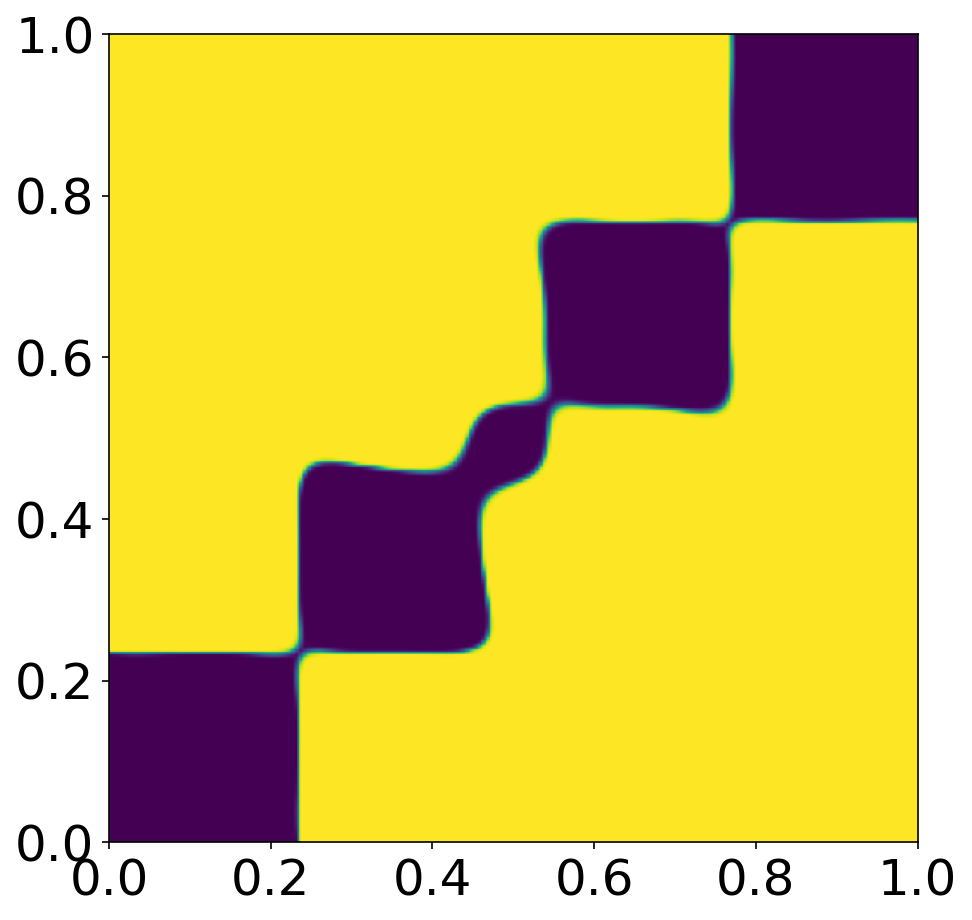}
    \caption{Const.~LR (high).}
    \label{fig:noscheduling-highlr}
  \end{subfigure}
  \caption{Graphons found by our method and its ablations, for $K_3$ density minimisation under the constraint $t(K_2, W) = 7/9$.}
  \label{fig:ablationstudy}
\end{figure}

\paragraph{Multi-scale encoding.}
The increasing scale $s(\ell)$ is motivated by the step-like structure of the optima: approximating a step boundary requires increasingly high spatial frequencies. Early layers, whose encodings have low frequencies, provide the coarse block geometry, while later layers, whose encodings have higher frequencies, refine the boundaries through the residual connections, so that earlier layers do not need to carry high-frequency information. The scale enters only through the initialisation of $U^{(\ell)}$, whose entries are drawn from $\mathrm{Unif}[-s(\ell)/\sqrt{2}, s(\ell)/\sqrt{2}]$, so $s(\ell)$ sets the initial spatial frequency of the encoding $\sin(U^{(\ell)}\mathbf{x})$ in layer $\ell$. Since $\sin$ is bounded and the remaining weights are initialised with scale $1/\sqrt{2d}$, increasing $s(\ell)$ gives later layers access to finer scales without increasing the magnitude of the hidden states. The constant-scale variant (2) above isolates the effect of increasing the scale. To see the contribution of each layer, \Cref{fig:multiscale} applies the output layer to the hidden state $g^{(\ell)}_\theta$ after each layer $\ell$ of a trained network. The first layer gives only a smooth gradient, the next layers form a band along the diagonal and split it into blocks, and the last layers sharpen the block boundaries into the five-block structure of the learnt graphon.

\begin{figure}[t]
  \centering
  \begin{subfigure}{0.155\textwidth}
    \centering
    \includegraphics[width=\linewidth]{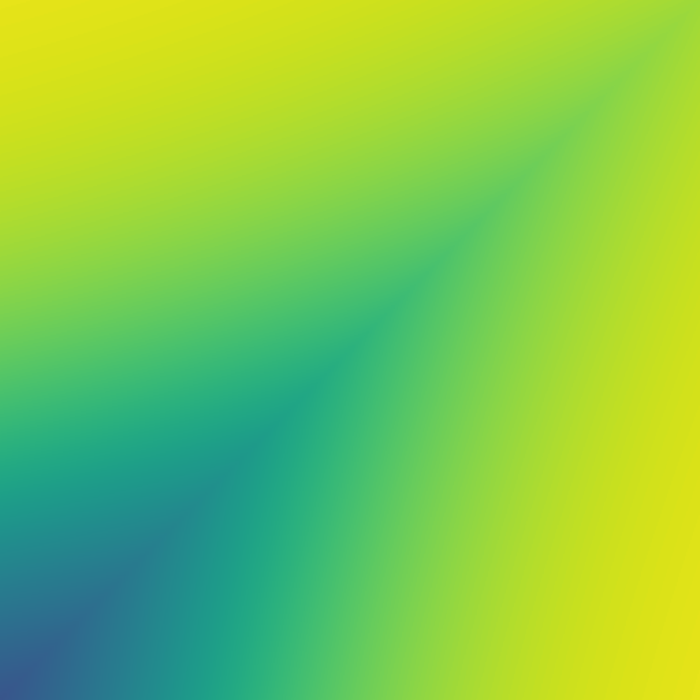}
  \end{subfigure} \hfill
  \begin{subfigure}{0.155\textwidth}
    \centering
    \includegraphics[width=\linewidth]{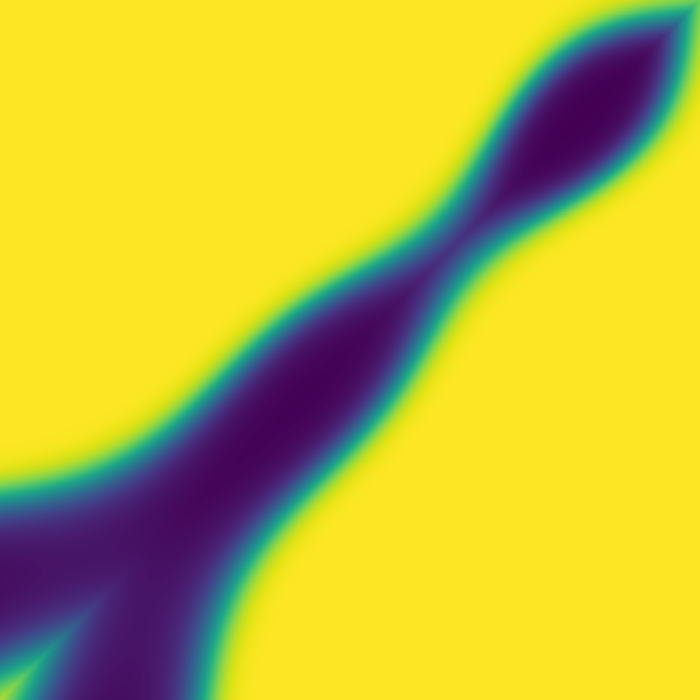}
  \end{subfigure} \hfill
  \begin{subfigure}{0.155\textwidth}
    \centering
    \includegraphics[width=\linewidth]{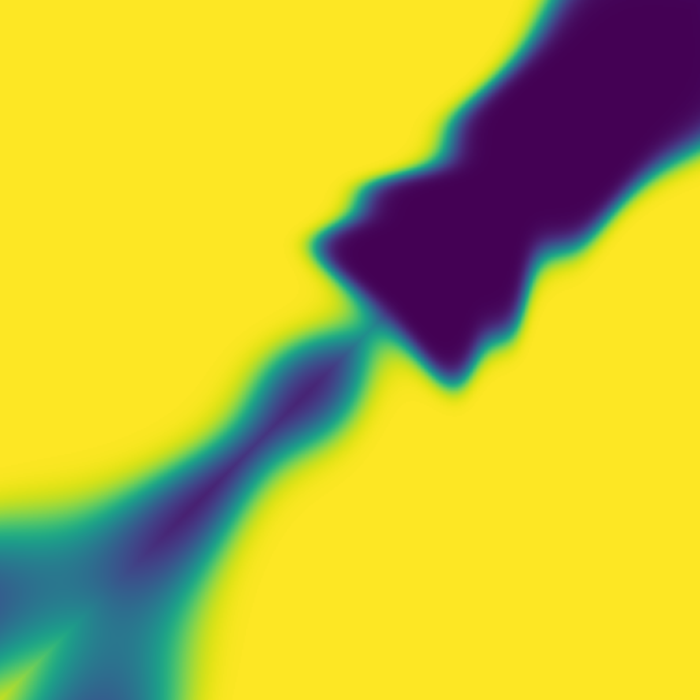}
  \end{subfigure} \hfill
  \begin{subfigure}{0.155\textwidth}
    \centering
    \includegraphics[width=\linewidth]{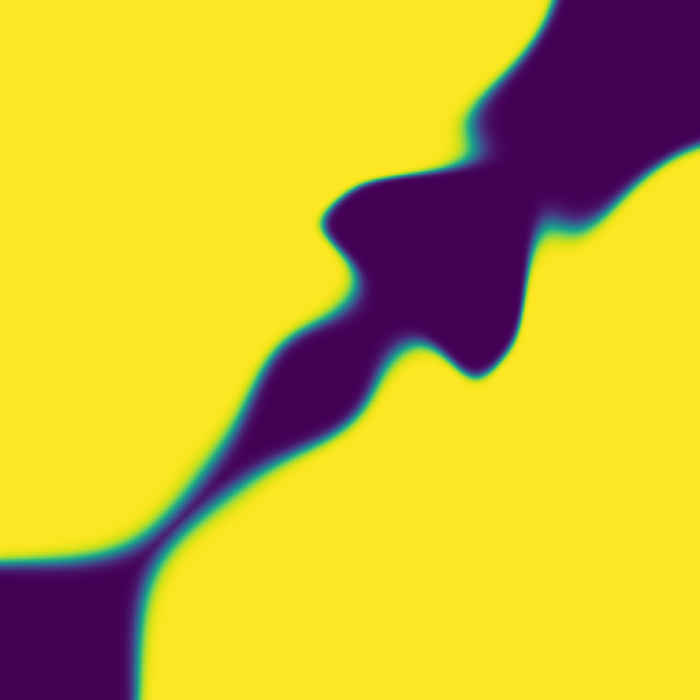}
  \end{subfigure} \hfill
  \begin{subfigure}{0.155\textwidth}
    \centering
    \includegraphics[width=\linewidth]{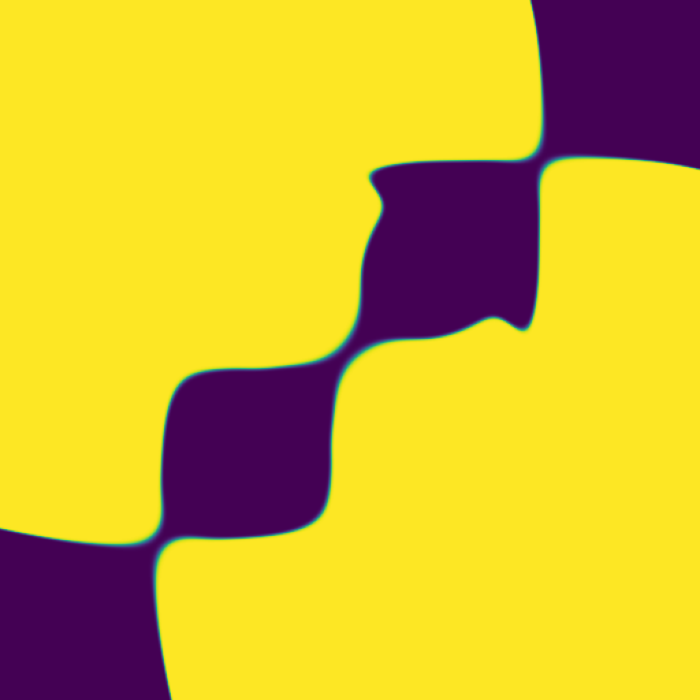}
  \end{subfigure} \hfill
  \begin{subfigure}{0.155\textwidth}
    \centering
    \includegraphics[width=\linewidth]{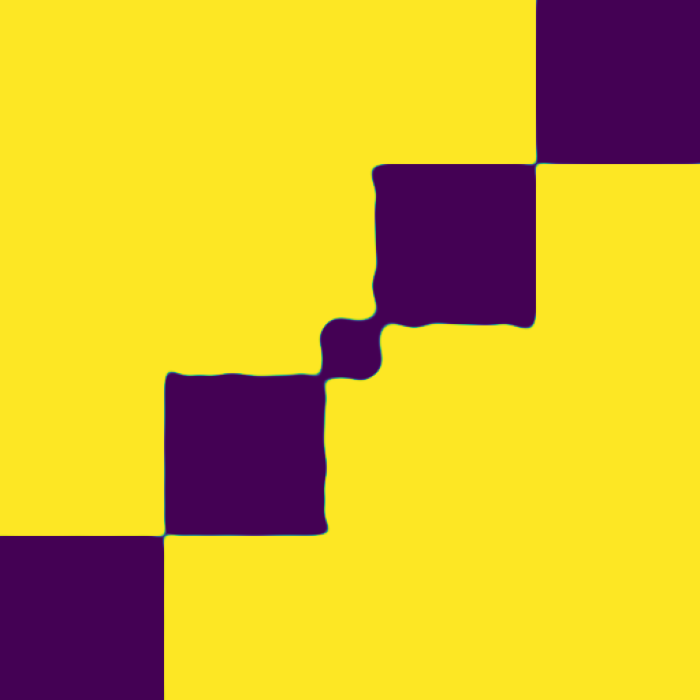}
  \end{subfigure}
  \caption{Graphons obtained by applying the output layer to the hidden state $g^{(\ell)}_\theta$ after layer $\ell$ of a trained network, for $K_3$ density minimisation under $t(K_2, W) = 7/9$, with the scalar bias $c$ solved separately for each $\ell$ so that $t(K_2, W) = 7/9$; $\ell = 1, \ldots, 6$ from left to right, and $\ell = 6$ is the learnt graphon.}
  \label{fig:multiscale}
\end{figure}

\paragraph{Robustness.}
Except for the exhaustive study, the results in the main text are the best of eight training runs over the activation and the learning rate (\Cref{sec:experiments}). To measure the variability across random initialisations separately, we repeated five instances ten times each under a single fixed configuration (GELU activation, learning rate $10^{-3}$), varying only the random initialisation, and report the mean with a $95\%$ confidence interval:
\begin{itemize}
  \item $K_3$ at $p = 7/9$: $0.435661$, $[0.435656, 0.435667]$, against the optimum $98/225 \approx 0.435556$.
  \item $C_5$ at $p = 3/5$: $0.066731$, $[0.066706, 0.066756]$, against the conjectured optimum $0.066616$.
  \item $C_7$ at $p = 3/5$: $0.026467$, $[0.026134, 0.026800]$, against our conjectured optimum $0.026261$. One run converged to the constant graphon, with density $p^7$; the other nine give $0.026297$, $[0.026279, 0.026315]$.
  \item $H_6$ at $p = 4/5$: $0.129877$, $[0.129851, 0.129902]$, against $0.13056$ for the $k$-partite graphon and $0.134218$ for the constant graphon.
  \item The upper-tail problem for $K_3$ at $(q, r) = (0.05, 0.5)$: $0.727215$, $[0.727210, 0.727220]$, against the lower bound $0.725284$ (\Cref{tab:largedeviation}).
\end{itemize}
The learnt structures are also consistent across seeds. For $K_3$, seven runs recover the clean five-block partition, and the other three produce five parts whose smallest part is not an exact block, with comparable objective values; since the optimum is not unique, this does not indicate suboptimality. For $C_5$, all ten runs recover the two-block structure of the bottom row of \Cref{fig:5cycleminimisation}, and for $C_7$, the nine nonconstant runs recover that of \Cref{fig:7cycleminimisation}. For $H_6$, eight runs recover the pattern of \Cref{fig:noveloptimum}, and the other two show the same band with an oscillating boundary, at objectives $0.04\%$ and $0.10\%$ above the average of the eight. For the upper-tail problem, all ten runs recover the two-block structure of \Cref{fig:k3upperdeviation}.
Across the 702 instances of the exhaustive study in \Cref{sec:experiments}, the three seeds have a median relative standard deviation of $1.4 \times 10^{-4}$ and differ by more than $1\%$ in $5.8\%$ of the instances.

We also checked the sensitivity to the other sources of randomness. For $C_5$, training with permutation sets $\mathcal{S}$ of size $1$, $2$, and $12$ in~\eqref{eqn:estimator} gives objectives that agree within $1.6 \times 10^{-4}$ at each of the six densities of \Cref{tab:c5result}. Finally, for the learnt graphon for $K_3$ at $p = 7/9$, solving for the scalar bias $c$ on $1000$ independent batches of $2^{16}$ samples, as in training, gives a constraint violation with standard deviation $1.1 \times 10^{-3}$ and maximum $3.4 \times 10^{-3}$; this error does not affect the reported values, which solve for $c$ on the discretised graphon, and refining the discretisation from $256^2$ to $512^2$ points changes $t(K_3, W)$ by $9.7 \times 10^{-5}$.

\paragraph{Non-neural Baselines.}
On the same triangle-density-minimisation task we evaluated four non-neural baselines tailored to step graphons: (a) a stochastic block model (SBM) with $256$ equal-sized blocks and trainable edge densities; (b) SBMs with $5$ blocks and $64$ blocks, with trainable edge densities and trainable block sizes; (c) a decision tree; and (d) a tree ensemble.
Each baseline was run four times from independent random initialisations, and we report the best objective value across runs in \Cref{tab:ablationstudy}.
The SBMs with trainable block sizes collapsed to nearly constant graphons, suggesting a difficult optimisation landscape; the fixed-block SBM recovered a noisy multi-block structure; the tree-based baselines produced structured but qualitatively different graphons. See \Cref{fig:baselines} for visualisations of the found graphons.

These baselines can represent the optimal graphon, at least approximately; the difference lies in optimisation. For the fixed-block SBM and the tree-based baselines, moving a block boundary is hard, since almost every configuration with values in $\{0, 1\}$ is a local optimum, so these methods rarely leave the geometry of their initialisation. Making the block sizes trainable does not help, as these SBMs collapse to nearly constant graphons. In contrast, the neural graphon adjusts the values and the boundaries of its regions jointly and continuously, and it does not require choosing in advance between multipartite, banded, circular-distance, or two-block structures.

\begin{figure}
  \centering
  \begin{subfigure}{0.19\textwidth}
    \centering
    \includegraphics[width=\textwidth]{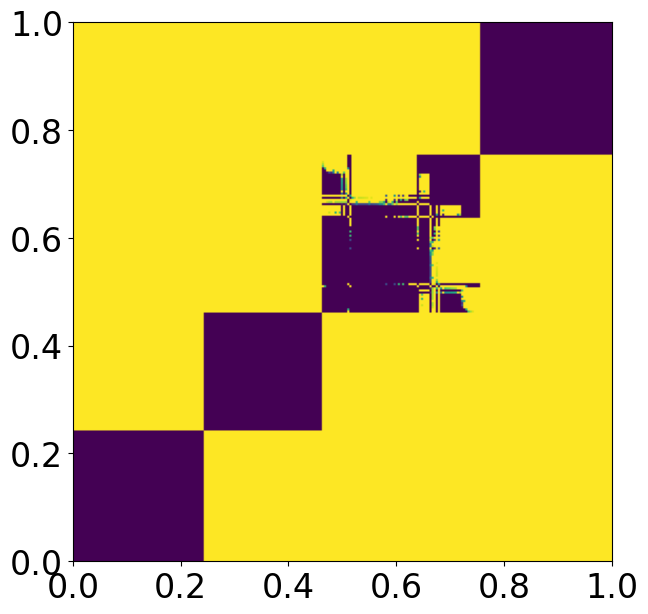}
  \end{subfigure} \hfill
  \begin{subfigure}{0.19\textwidth}
    \centering
    \includegraphics[width=\textwidth]{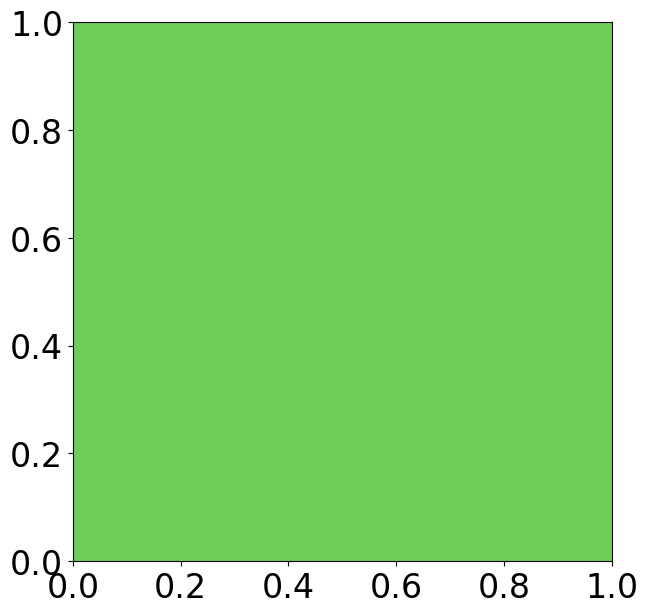}
  \end{subfigure} \hfill
  \begin{subfigure}{0.19\textwidth}
    \centering
    \includegraphics[width=\textwidth]{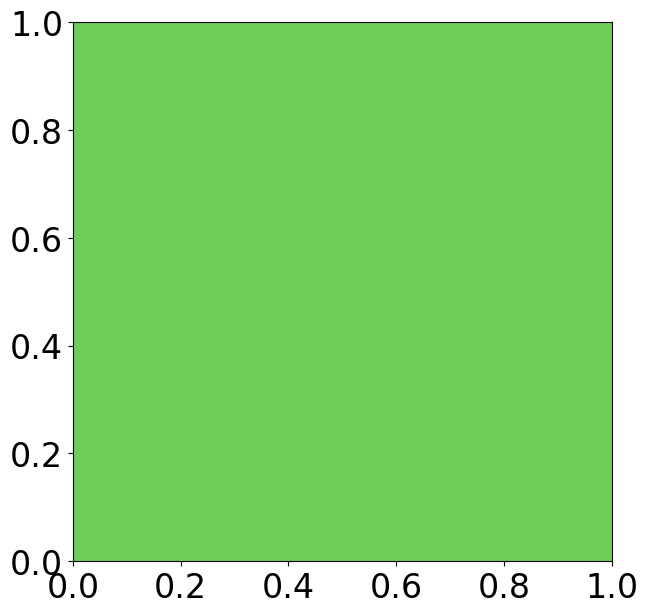}
  \end{subfigure} \hfill
  \begin{subfigure}{0.19\textwidth}
    \centering
    \includegraphics[width=\textwidth]{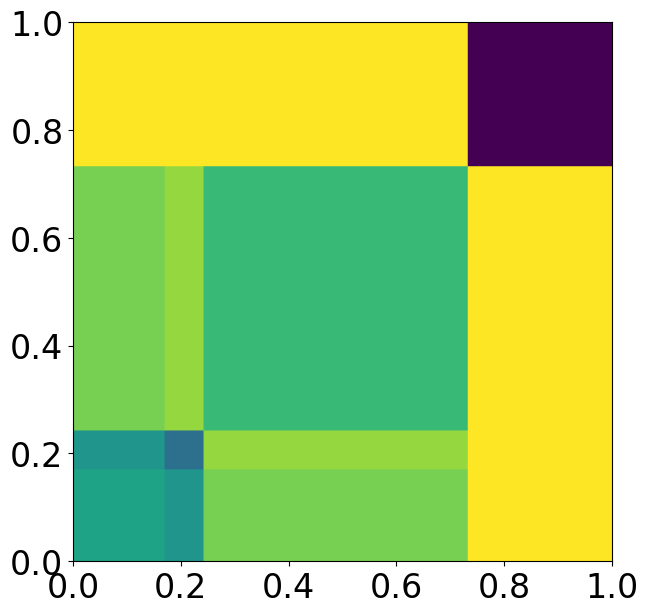}
  \end{subfigure} \hfill
  \begin{subfigure}{0.19\textwidth}
    \centering
    \includegraphics[width=\textwidth]{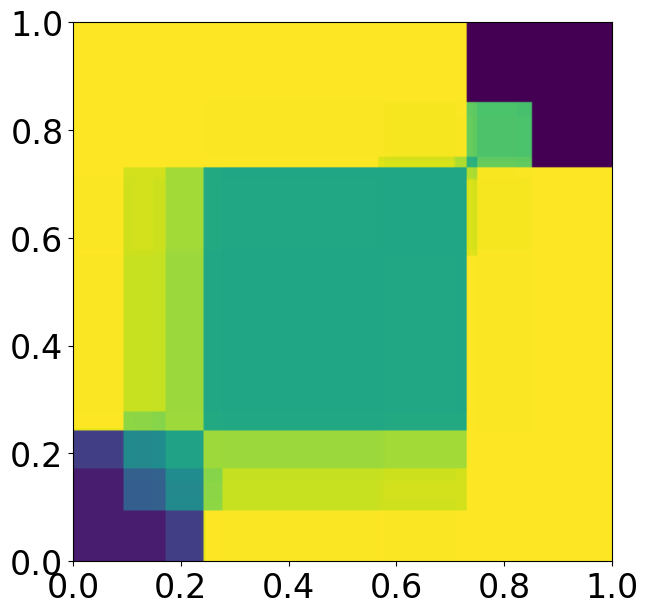}
  \end{subfigure}
  \caption{Graphons found by non-neural baselines, fixed-block SBM, trainable-block SBM with 5 / 64 blocks, decision tree, and tree ensemble (left to right), for $K_3$ density minimisation under the constraint $t(K_2, W) = 7/9$.}
  \label{fig:baselines}
\end{figure}

\paragraph{Multi-task Variant.}
The main paper trains one network per $(H, p)$. To probe whether shared training across related tasks is viable, we trained a single conditional network jointly over $p \in [1/2, 2/3]$ for $H = K_3$. Concretely, we condition the network on $p$ using FiLM-style feature-wise affine modulation of hidden activations~\citep{perez2018film}. The resulting conditional network achieves absolute error at most $2.09 \times 10^{-4}$ and average absolute error $3.12 \times 10^{-5}$ relative to the known optimum, for $p$ on a uniform grid of 100 points in $[1/2, 2/3]$. See \Cref{fig:parameterised_density} for the learnt density curve and its error, and \Cref{fig:parameterised_graphons} for the learnt graphons.
While limited to a single graph, this experiment indicates that shared training across related tasks can substantially reduce the per-instance burden of our framework, and we view a fuller multi-task / multi-graph treatment as a natural extension.

\begin{figure}
  \centering
  \begin{subfigure}{0.45\textwidth}
    \centering
    \includegraphics[width=\textwidth]{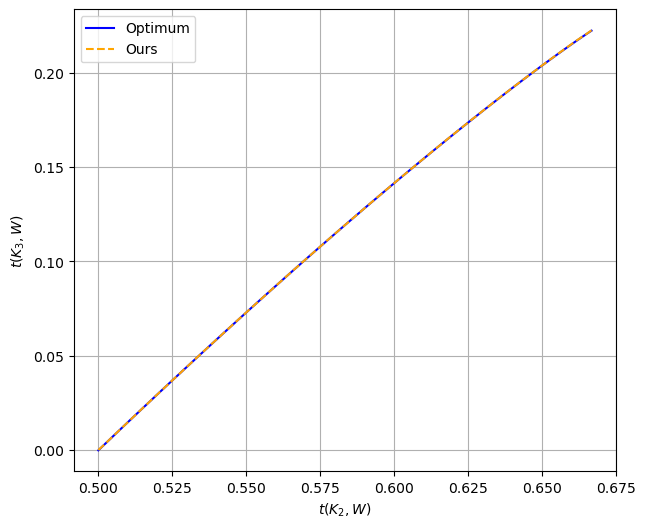}
  \end{subfigure} \hfill
  \begin{subfigure}{0.45\textwidth}
    \centering
    \includegraphics[width=\textwidth]{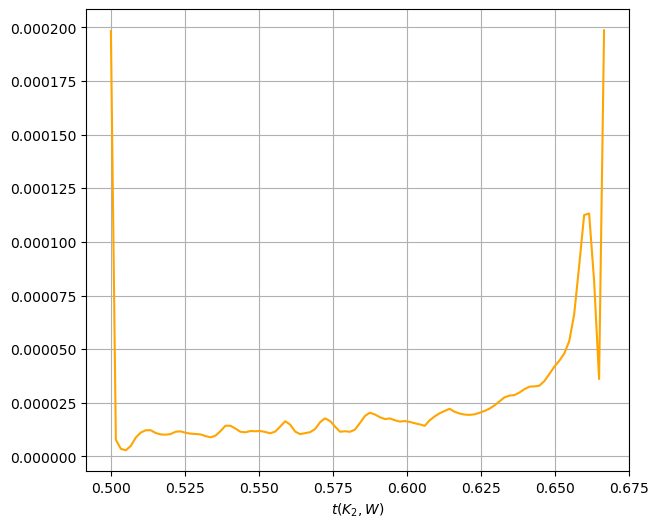}
  \end{subfigure}
  \caption{(Left) Triangle density as a function of $p$ for the conditional network, compared to the known optimum. (Right) Absolute error of the conditional network's triangle density relative to the known optimum.}
  \label{fig:parameterised_density}
\end{figure}

\begin{figure}
  \centering
  \begin{subfigure}{0.19\textwidth}
    \centering
    \includegraphics[width=\textwidth]{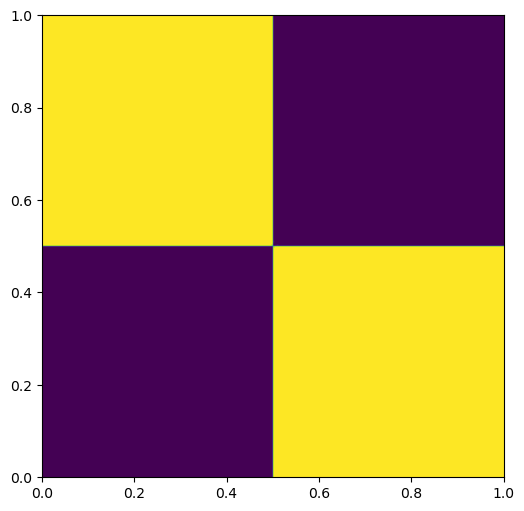}
  \end{subfigure} \hfill
  \begin{subfigure}{0.19\textwidth}
    \centering
    \includegraphics[width=\textwidth]{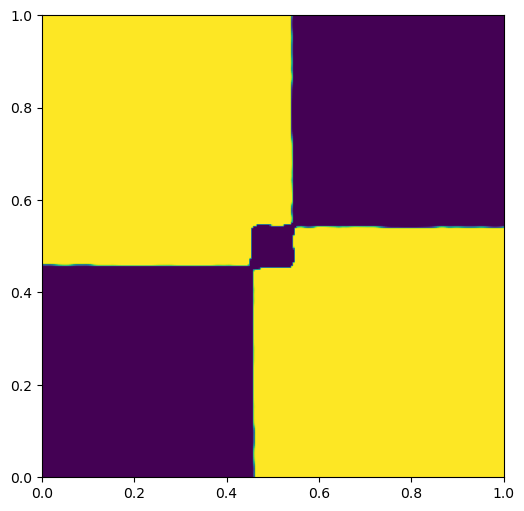}
  \end{subfigure} \hfill
  \begin{subfigure}{0.19\textwidth}
    \centering
    \includegraphics[width=\textwidth]{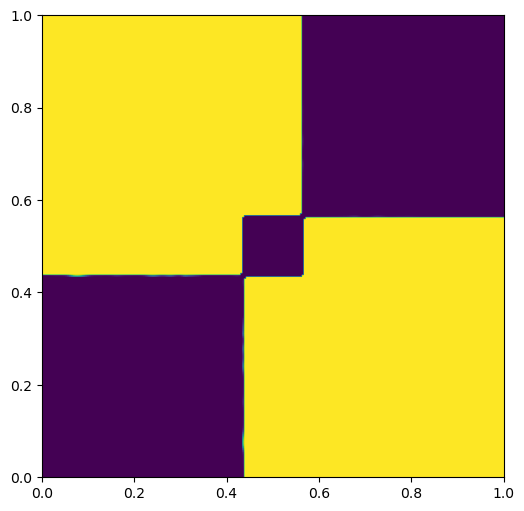}
  \end{subfigure} \hfill
  \begin{subfigure}{0.19\textwidth}
    \centering
    \includegraphics[width=\textwidth]{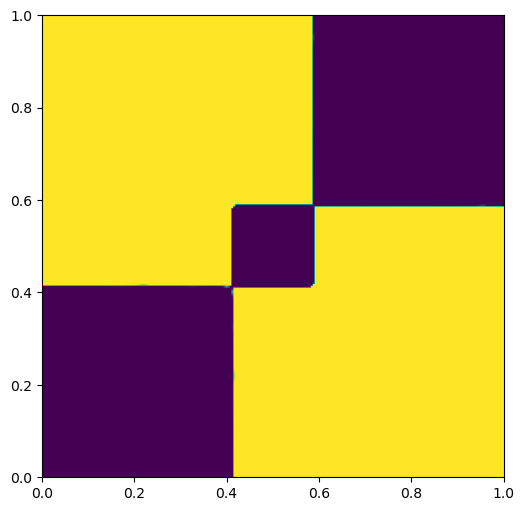}
  \end{subfigure} \hfill
  \begin{subfigure}{0.19\textwidth}
    \centering
    \includegraphics[width=\textwidth]{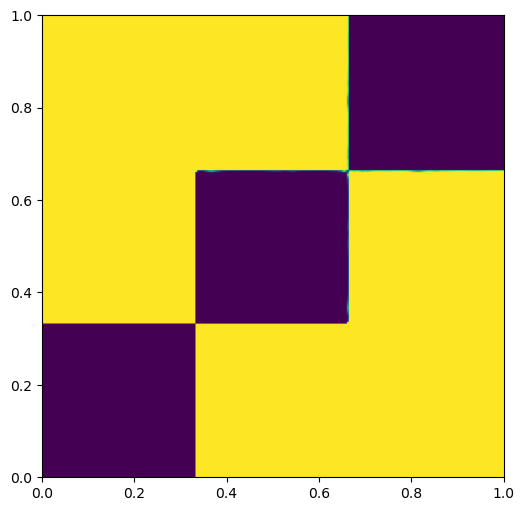}
  \end{subfigure}
  \caption{Learnt graphons for the conditional network at $p \in \{1/2, 4/7, 3/5, 5/8, 2/3\}$ (left to right).}
  \label{fig:parameterised_graphons}
\end{figure}

\end{document}